\newtheorem{proposition}{Proposition}[section]
\newtheorem{theorem}[proposition]{Theorem}
\newtheorem{assumption}[proposition]{Assumption}
\newtheorem{lemma}[proposition]{Lemma}
\newtheorem{definition}[proposition]{Definition}
\newtheorem{problem}[proposition]{Problem}
\newcommand{\mtop}{{\scriptscriptstyle\top}}
\newcommand{\ind}[1]{\mathds{1}_{#1}}
\newcommand{\dd}[1]{\frac{\partial}{\partial{#1}}}
\NewDocumentCommand\minfo{sm}{%
  \IfBooleanTF{#1}%
    {\textcolor{black}{Info: #2}}
    {\marginpar{\scriptsize\color{black}Info: #2}}
}
\NewDocumentCommand\mwarn{sm}{%
  \IfBooleanTF{#1}%
    {\textcolor{orange}{Warn: #2}}
    {\marginpar{\scriptsize\color{orange}Warn: #2}}
}
\newcommand{\legendDashes}{%
\tikz[baseline]{%
  \draw[dash pattern=on 6pt off 4pt on 6pt,line width=0.4ex] (0,.5ex)--++(0.65,0);%
}\!%
}
\newcommand{\legendDashesOrange}{%
\tikz[baseline]{%
  \draw[dash pattern=on 6pt off 4pt on 6pt, draw=orange, line width=0.4ex] (0,.5ex)--++(0.65,0);%
}\!%
}
\newcommand{\asym}[1]{%
  \ifthenelse{\equal{#1}{}}{%
    \sim%
  }{%
    \mathrel{\raisebox{-.075em}{%
      \smash{$\stackrel{%
        \raisebox{-.075em}{$\scriptscriptstyle #1$}%
      }{%
        \sim%
      }$}%
    }}%
  }%
}
\title{
Scaling Laws for Gradient Descent and Sign Descent\\
for Linear Bigram Models under Zipf's Law
}
\author{%
  Frederik Kunstner%
  \\
  \texttt{frederik.kunstner@inria.fr} \\
  \And
  Francis Bach%
  \\
  \texttt{francis.bach@inria.fr} \\
}
\newcommand{\fixme}[1]{\textcolor{red}{{#1}}}
\renewcommand{\mwarn}[1]{}
\begin{document}
\maketitle

\newcommand{\fillerP}[1]{\textcolor{red}{\lipsum[#1]}}
\newcommand{\fillerS}[2]{\textcolor{red}{\lipsum[#1][#2]}}

\begin{abstract}
Recent works have highlighted optimization difficulties 
faced by gradient descent in training the first and last layers of transformer-based language models, 
which are overcome by optimizers such as Adam.
These works suggest that the difficulty is linked to the heavy-tailed distribution of words in text data, 
where the frequency of the $k$th most frequent word $\pi_k$ is proportional to $1/k$, following Zipf's law.
To better understand the impact of the data distribution on training performance,
we study a linear bigram model for next-token prediction
when the tokens follow a power law~$\pi_k \propto 1/k^\alpha$
parameterized by the exponent~$\alpha > 0$.
We derive optimization scaling laws for deterministic gradient descent 
and sign descent as a proxy for Adam as a function of the exponent~$\alpha$.
Existing theoretical investigations in scaling laws
assume that the eigenvalues of the data decay as a power law with exponent~$\alpha > 1$.
This assumption effectively makes the problem ``finite dimensional''
as most of the loss comes from a few of the largest eigencomponents.
In comparison, we show that the problem is more difficult when the data have heavier tails.
The case~$\alpha = 1$ as found in text data is ``worst-case'' for gradient descent,
in that the number of iterations required to reach a small relative error scales almost linearly with dimension.
While the performance of sign descent also depends on the dimension,
for Zipf-distributed data the number of iterations scales only with the square-root of the dimension, 
leading to a large improvement for large vocabularies.
\end{abstract}

\begin{outline}[tightitemize]

\section{Introduction}

Recent works have shown that one of the primary benefits of Adam~\citep{kingma2015adam}
in training transformed-based language models~\citep{vaswani2017attention} 
lies in how it handles the first and last layers~\citep{zhang2025adamminiusefewerlearning,zhao2025deconstructing}. 
For language models, the input and output dimensions correspond to distinct words in the vocabulary, 
where the~$k$th most frequent word has frequency~$\pi_k \propto 1/k$ following Zipf's law \citep{piantadosi2014zipf}.
\citet{kunstner2024heavytailed} provide evidence that this heavy-tailed distribution
leads to optimization difficulties for gradient descent that Adam is able to overcome.
They argue that Zipf's law is ``worst-case'' in that it combines a large imbalance in frequencies, 
while decaying slowly enough that most samples come from the~tail.

Our objective is to formalize this empirical observation,
and to describe the impact 
of the heavy-tailedness of the data distribution
on the convergence of gradient descent (GD) and sign descent (SD) 
as a proxy for Adam~\citep{tieleman2012rmsprop,bernstein2018signsgd,balles2020geometrysigndescent,chen2023symbolic}.
We consider a linear bigram model for next-token prediction trained with the square loss,
where the token frequencies $\pi_k$ follow a power law $\pi_k \propto 1/k^\alpha$ with exponent $\alpha > 0$.
While this problem would be solved directly rather than with iterative methods, 
it is a good starting point for the theoretical investigation of optimization dynamics.
Despite its apparent simplicity, this model already reproduces the observation
that GD performs poorly on Zipf-distributed data (see \cref{fig:motivation}).
The behavior of gradient and sign descent are also not well described by current results, 
see~\cref{sec:related-work}.

Our approach is inspired by the line of work on theoretical scaling laws,
or asymptotic convergence as the dimensionality grows~\citep[e.g.,][]{caponnetto2007source,advani2020highdim,berthier2020tight,bahri2021explaining,cui2021generalization,maloney2022solvable,paquette2024phases}.
Instead of analyzing the generalization error of online gradient descent 
as the dimension of the model and sample size grow, 
we study the convergence rate of GD as the dimension and the number of iterations grow.
Spectral assumptions on the eigenvalues of the Hessian following a power-law are common in the literature,
as they correspond to assumptions on the covariance of the features.
But these works focus on power-laws that are not ``too'' heavy-tailed, $1/k^\alpha$ with $\alpha > 1$, 
which lead to sublinear convergence rates independent of dimension.
In contrast, we focus on the case $\alpha \leq 1$ where it becomes impossible to make progress with a fixed number of steps as the dimensionality grows.
We show that it is possible to make progress by finding the right scaling of the number of iterations with the dimension.

\newcommand{\shrinkspace}{\vspace{-0.5em}}
\begin{figure}[t]
\includegraphics[width=\textwidth]{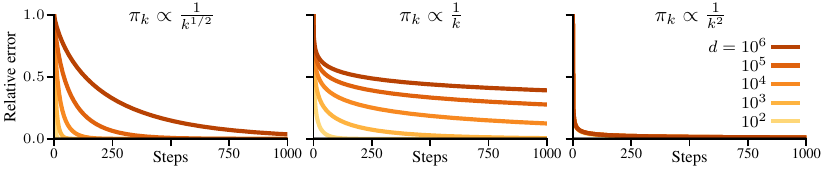}%
\shrinkspace
\caption{\textbf{Gradient descent (GD) scales badly with vocabulary size when the data is Zipfian.}
Relative error on a linear bigram problem with squared loss trained with GD
with vocabulary size~$d$ when the word class frequencies follow $\pi_k \propto 1/k^\alpha$.
For $\alpha \leq 1$ (left, middle) the performance degrades with vocabulary size, 
with worst scaling for Zipf-distributed data ($\alpha = 1$).
When the frequencies have lighter tails ($\alpha = 2$, right)
GD works well for all vocabulary sizes.
Our objective is to derive scaling laws explaining this behavior.
}
\label{fig:motivation}
\end{figure}

\subsection{Contributions}
\begin{enumerate}[left=0.5em]%
  \item We propose a simplified model of the data frequencies 
  that allows for tractable closed-form dynamics on the linear bigram problem
  for both GD and SD.
  We show experimentally that the scaling under 
  that model matches empirical performance~(\cref{fig:actual-vs-predicted-rates}).
  \item 
  We derive scaling laws for GD and SD
  under this simplified model as a function of~$\alpha > 0$.
  These results cover the challenging case of
  power-laws decreasing as slow or slower than Zipf's law~($\alpha \leq 1$)
  that is often ignored in asymptotic analyses.
  This setting leads to a qualitatively different result, 
  requiring the number of iterations to grow with $d$.
  \item 
  We show that sign descent with a well-selected step-size scales better with dimension 
  than GD for Zipf-distributed data.
  This result confirms the benefits of SD
  and preconditioning-like interventions 
  to mitigate the poor performance of GD.
  However, this result is specific to the regime $\alpha \leq 1$, 
  as sign descent exhibits worse scaling otherwise,
  showing that which algorithm performs better depends on properties of the data.
\end{enumerate}

\subsection{Overview of the results}
Given matrices
$\mX$ and $\mY \in \{0,1\}^{n\times d}$ 
containing the one-hot encodings of $n$ pairs of tokens from a vocabulary of $d$ possible words, 
we fit a linear bigram model
by minimizing the loss
\aligns{
  \Loss_d(\mW) = 
  \frac{1}{2n} \norm{\mX\mW - \mY}{}_F^2,
  \quad \text{ where }
  \quad \mW \in \R^{d\times d}, \quad
  \norm{\mX}_F^2 = \Tr(\mX^\mtop\mX).
} 
We assume that the distribution of the tokens
and the conditional distribution of the next tokens
follow a power law $1/k^\alpha$ with exponent $\alpha$,
formalized later in \cref{ass:conditional-distribution}.
Our main result is as follows. %

\begin{theorem}[Informal]
\label{thm:informal-gd}
Consider the linear bigram model when the dimensionality $d$ is large.
The number of iterations $t$ required to reach $\varepsilon$ relative accuracy 
with \emph{gradient descent} scales as follows.
\aligns{
  &\text{If } \alpha < 1,
  &
  t &\approx d^\alpha \log(1/\varepsilon),
  &
  \text{if } \alpha &= 1,
  &
  t &\approx d^{1-\varepsilon},
  &
  \text{ and if } \alpha &> 1,
  &
  t &\approx \paren{{1}/{\varepsilon}}^{\frac{\alpha}{\alpha-1}}\!.
\intertext{For \emph{sign descent}, 
there is a constant step-size $\eta$ depending on $d$ and $t$ such that, after $t$ steps,}
  &\text{if } \alpha < {1}/{2},
  &
  t &\approx \paren{{1}/{\varepsilon}}^{\frac{1}{2(1-2\alpha)}},
  \quad \, &\text{if } \alpha &= {1}/{2},
  &
  t &\approx {d}^{\frac{1-\varepsilon}{2}},
  &
  \quad  \, \text{ and if } \alpha &> {1}/{2},
  &
  t &\approx {{d}/{\varepsilon}}^{1/2}.
}
By relative accuracy, we mean that 
$\Loss_d(t) - \Loss_d^* = \varepsilon(\Loss_d(0) - \Loss_d^*)$, 
where $\Loss_d(t)$ is the loss after $t$ steps and $\Loss_d^*$ is the minimum loss.
By $t \approx f(d,\varepsilon)$, 
we mean that for some function $t(d, \varepsilon) = \tilde\Theta(f(d,\varepsilon))$
where $\tilde\Theta$ ignores terms in $\log\!\log(1/\varepsilon)$ 
and constants that depend on $\alpha$ but not on $\varepsilon$ or $d$,
we have
\aligns{
  \lim_{\varepsilon \to 0} \lim_{d\to\infty} \frac{\Loss_d(t(d,\alpha))-\Loss_d^*}{\Loss_d(0)-\Loss_d^*} \frac{1}{\varepsilon} = 1.
}
\end{theorem}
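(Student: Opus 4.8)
The plan is to use that the squared loss is a separable convex quadratic, reduce both algorithms to scalar per-coordinate dynamics, and convert the resulting sums over the power-law spectrum into the generalized harmonic numbers $\Z$ and $\Zs$, whose asymptotics produce the three regimes. Concretely, the loss splits across the output columns of $\mW$, and the Hessian $\mX^\mtop\mX/n$ is, under \cref{ass:conditional-distribution}, the diagonal matrix $\mathrm{diag}(\pi)$ with $\pi_k\propto 1/k^\alpha$; the objective therefore decouples into independent scalar quadratics, one per entry $(k,j)$, with curvature $\lambda_k=\pi_k$ and minimizer $w^*_{kj}=\pi(j\mid k)$. Starting from $\mW=0$ and using $\sum_k\lambda_k=1$, the initial suboptimality equals $\tfrac12\sum_{k,j}\lambda_k(w^*_{kj})^2=\tfrac12(\Zs/\Z^2)$ once the conditional power law is summed over $j$, so the normalizing factor $\Zs/\Z^2$ appears explicitly. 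This reduces the theorem to one scalar recursion per coordinate together with a weighted sum over $k$ and $j$.

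For gradient descent, coordinate $(k,j)$ contracts by $(1-\eta\lambda_k)$ per step, so with the largest admissible step $\eta=\Theta(\Z)$ (i.e.\ $\eta\lambda_1=\Theta(1)$) the relative error is exactly $\sum_k\lambda_k(1-\eta\lambda_k)^{2t}/\sum_k\lambda_k$; crucially the conditional factor $\Zs/\Z^2$ cancels, leaving the pure spectral sum $\tfrac{1}{\Z}\sum_{k=1}^d k^{-\alpha}(1-c\,k^{-\alpha})^{2t}$. I would approximate this by the integral $\int_1^d x^{-\alpha}e^{-2ctx^{-\alpha}}\,dx$ (controlling the error by Euler--Maclaurin and by $(1-u)^{2t}\le e^{-2tu}$ with a matching lower bound on the relevant range), and substitute $u=2ct\,x^{-\alpha}$ to obtain an incomplete Gamma integral. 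The three regimes then fall out of its asymptotics: for $\alpha<1$ the loss is dominated by the slowest mode $k\approx d$ decaying like $e^{-2ct/d^\alpha}$, giving $t\approx d^\alpha\log(1/\varepsilon)$; for $\alpha=1$ the integral is an exponential integral whose small-argument expansion $E_1(z)\sim-\log z-\gamma$ produces the anomalous exponent $t\approx d^{1-\varepsilon}$; and for $\alpha>1$ the spectrum is summable, the dimension drops out, and balancing the remaining tail $\sum_{k>(ct)^{1/\alpha}}k^{-\alpha}$ against $\varepsilon\,\zeta(\alpha)$ gives $t\approx(1/\varepsilon)^{\alpha/(\alpha-1)}$.

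For sign descent the scalar dynamics are different: each coordinate moves toward its optimum at the fixed speed $\eta$ independent of curvature, reaching it after $\lceil |w^*_{kj}|/\eta\rceil$ steps and thereafter oscillating within $\eta$ of the minimizer, leaving an irreducible floor $\tfrac12\lambda_k\eta^2$. Summing the floor over all $d$ output coordinates per input gives a total floor $\approx\tfrac12\eta^2 d$, and, unlike for gradient descent, this factor $d$ does not cancel against the $\Zs/\Z^2$ normalization. I would therefore write the relative error as the sum of an unconverged part (coordinates with $|w^*_{kj}|=r^{-\alpha}/\Z>t\eta$) and the floor, and choose $\eta$ as a function of $(d,t)$ to balance the two, which is the crux of the sign-descent result. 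The unconverged part is governed by $\sum_{r<R}r^{-2\alpha}$ with cutoff $R=(t\eta\Z)^{-1/\alpha}$, so the transition now occurs where $\Zs=H_{d,2\alpha}$ changes behavior, namely at $\alpha=1/2$; carrying out the same incomplete-Gamma / exponential-integral analysis on this sum and optimizing over $\eta$ yields the dimension-free rate $(1/\varepsilon)^{1/(2(1-2\alpha))}$ for $\alpha<1/2$, the boundary rate $d^{(1-\varepsilon)/2}$ at $\alpha=1/2$, and $(d/\varepsilon)^{1/2}$ for $\alpha>1/2$, the $\sqrt d$ coming directly from the floor.

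Finally, to obtain the exact limit statement rather than only the order, I would prove matching upper and lower bounds on each sum with the correct leading constant, absorbing into $\tilde\Theta$ the $\log\log(1/\varepsilon)$ and $\alpha$-dependent constants. The main obstacle is the sign-descent analysis: the $\mathrm{sign}$ nonlinearity forbids the clean geometric contraction used for gradient descent, so I must control the discrete linear-then-oscillate dynamics and, more delicately, optimize the step size jointly with $t$ and $d$ while keeping enough precision to pin down the constant in the $\lim_{\varepsilon\to0}\lim_{d\to\infty}$ statement. This is hardest at the boundary exponents $\alpha=1$ and $\alpha=1/2$, where the leading term is only logarithmically larger than the correction and the integral approximation of the harmonic-type sums must be justified to higher order.
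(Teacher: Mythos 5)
Your proposal is correct and follows essentially the same route as the paper: diagonalizing into per-coordinate dynamics where the conditional-frequency normalization cancels for GD, replacing the spectral sum $\frac{1}{H_{d,\alpha}}\sum_k k^{-\alpha}(1-k^{-\alpha})^{2t}$ by an integral with controlled error and reading off the three regimes from exponential-integral/Gamma-type asymptotics, and for SD splitting the loss into an unconverged part plus an oscillation floor of order $d\eta^2$, then balancing via the step-size choice to get the $\alpha=1/2$ threshold and the $\sqrt{d}$ scaling. The only differences are cosmetic (incomplete-Gamma via $(1-u)^{2t}\approx e^{-2tu}$ versus the paper's exact changes of variables yielding Beta and generalized exponential integrals), not a different method.
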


\begin{figure}[t]
\includegraphics{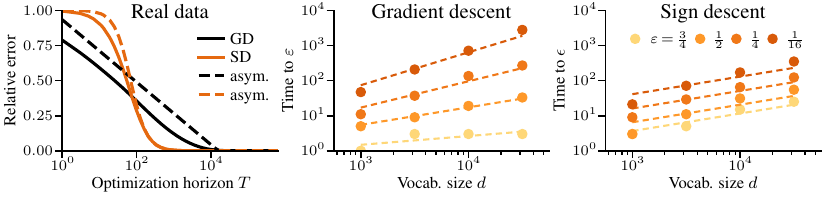}%
\shrinkspace
\caption[]{\textbf{Our scaling predicts the behavior of gradient descent and sign descent on real data.}
Left: the convergence of gradient descent (GD) and sign descent (SD)
is close to our asymptotic prediction (\legendDashes, \legendDashesOrange)
on a bigram model with~$32$k tokens on OpenWebText,
although not exactly due to the finite dimension 
and our simplified model of the frequencies in \cref{ass:conditional-distribution}.
Middle/Right:
as $d$ grows, the number of iterations required to reach $\varepsilon$ relative error matches our predictions,
showing that SD scales better with dimension for small $\varepsilon$.
We show results on real data (dots) against the scaling of~$cd^{1-\varepsilon}$ for GD and~\smash{$c d{}^{1/2}$} for SD (dashes)
where $c$ is fit to the data.
}
\label{fig:actual-vs-predicted-rates}
\end{figure}
While we do recover the traditional scaling of $(1/\varepsilon)^{-p}$ for some~$p$
for gradient descent in the case~$\alpha > 1$, our scaling laws do not all follow this functional form.
For $\alpha \leq 1$, our model recover a linear rate scaling with $d^\alpha$.
But for Zipf-distributed data ($\alpha = 1$),
the number of iterations required to reach $\varepsilon$ relative error scales with $d^{1-\varepsilon}$. 
If only a rough solution is required ($\varepsilon$ close to 1) GD scales mildly with dimension, 
but obtaining a good model ($\varepsilon$ close to 0) scales almost linearly with dimension.
SD instead scales as~\smash{$d^{1/2}$}, which is a significant improvement in large dimension,
giving a~$100$-times speedup for vocabulary sizes commonly used in practice.
We confirm these predictions experimentally 
for finite dimensional models using real data on OpenWebText, shown in \cref{fig:actual-vs-predicted-rates}.

\subsection{Related work}
\label{sec:related-work}

\textbf{Convergence of Adam and sign descent.}
Sign descent is a common proxy for Adam
as it captures the main property, that the updates are uniform across coordinates
~\citep{bernstein2018signsgd,balles2020geometrysigndescent,chen2023symbolic}.
This ``scale-freeness'' has been shown to reduce the dependence on the condition number of the problem~\citep{zhuang2022scalefree}.
However, this result does not imply SD outperforms GD, 
as known results for sign-like methods depend on the dimension $d$ instead of the condition number~\citep[e.g.,][]{safaryan2021ssd,das2024towards,liu2025adagradanisotropic}.
In the bigram problem with Zipf-distributed data, the dimension grows faster than the condition number, leading to worse guarantees for sign descent.
Instead of worst-case bounds, we rely on more fine-grained assumptions on the data and asymptotic equivalence to obtain the results of \cref{thm:informal-gd}.
We compare our results to existing rates in \cref{apx:related-work}.

\textbf{SDE approximations of sign methods.}
Scaling laws have been derived for online sign-like algorithms through 
stochastic differential equations~\citep{ma2021qualitative,malladi2022sde,xiao2024risk,compagnoni2025adaptive}.
The focus of these works is on the scaling of the step-size with batch size 
and the asymptotic stationary distribution of the algorithm which controls the generalization error.
As noise is not necessary to reproduce the performance gap between GD and Adam~\citep{kunstner2023noise},
we instead focus on the impact of heavy-tailed data on the deterministic dynamics.

\textbf{Scaling laws and asymptotic results.}
Empirical scaling laws have been developed to extrapolate the performance of deep networks at scale 
and how to balance compute across model and data sizes~\citep{rosenfeld2020constructive,kaplan2020scalinglaws,hoffman2022computeoptimal}.
Many works have contributed to the theoretical understanding of this scaling behavior 
through high dimensional analyses and random matrix theory~\citep{advani2020highdim,bahri2021explaining,maloney2022solvable,bordelon2024dynamical,lin2024scaling,paquette2024phases},
or using classical source/capacity conditions from learning theory~\citep{caponnetto2007source,berthier2020tight,cui2021generalization}, see \citet{velikanov2024tight} for the use of source/capacity conditions 
in the context of optimization.
These works assume the eigenvalues decay as a power law with exponent $\alpha > 1$.
The resulting scalings are consistent 
with the observation that training dynamics converge to a well-defined limit 
as width or depth increases~\citep{yang2021mup,bordelon2024depth,noci2024superconsistency},
but does not describe the regime $\alpha \leq 1$, which includes Zipf's law.
This regime might be more relevant when considering scaling the vocabulary size,
as in the work of \citet{gowda2020optimalvocab,tao2024scalingvocab}.
While they hypothesize that larger vocabularies might lead to worse performance due to overfitting, 
as larger vocabularies implies fewer examples per word,
we show that larger vocabulary size introduces difficulties 
in getting the \emph{training error} down.
Closest to our work is perhaps the blog post of \citet{bulatov2023harmonic}, 
who show that the loss under GD 
should approximately behave as $-\log(t/d)$ 
on a problem matching our setting with $\alpha = 1$.
Our work provides a formal justification for this scaling.

\section{Problem setup}
\label{sec:simplified_model}

In this section, we present the problem setting, 
the modeling assumptions we introduce to make the problem tractable, 
and the approach we use to derive our results.
We start from a convex quadratic in reduced form, 
$f(\vx) = \frac{1}{2}(\vx-\vx^*)^\mtop \mA(\vx-\vx^*)$,
where the eigenvalues/vectors pairs of~$\mA$~are~\smash{$(\lambda_i, \vv_i) \in \R \times \R{}^d$ for~$i =1,\ldots,d$}. 
The loss can be expressed in terms 
of the distance to the solution along each eigenvector,~$\delta_i(\vx) = \lin{\vv_i, \vx-\vx^*}$,
as~\smash{$f(\vx) = \frac{1}{2}\sum\!{}_{i=1}^d \lambda_i \delta_i(\vx){}^2$}.
The dynamics of GD with step-size~$\eta$,~$\vx_{t+1} 
= \vx_t-\eta\mA(\vx_t-\vx^*)$,
are also available in closed-form,
\alignn{
  \nonumber~\\[-1.75em]
  \!\!
  f(\vx_t) 
  =  
  \frac{1}{2}\paren{(1-\eta\mA)^{t}\paren{\vx_0-\vx^*}}^\mtop\!
  \mA 
  {(1-\eta\mA)^{t}\paren{\vx_0-\vx^*}}
  = \frac{1}{2}\sum_{i=1}^{{d}}
  \lambda_i (1-\eta\lambda_i)^{2t} \delta_i(\vx_0)^2\!.
  \label{eq:dynamics}
}
The specific quadratic problem we focus on is the 
linear bigram model with the square loss.

\newcommand{\bigramproblem}{Bigram Problem \ref{prob:bigram}}
\begin{problem}[Linear bigram model with square loss]
\label{prob:bigram}
Let~$\vx_i, \vy_i \in \{0,1\}^d$ be~$n$ samples 
representing one-hot encodings from~$d$ classes (or tokens),
with their concatenation $\mX, \mY \in \{0,1\}^{n\times d}$.
We fit a linear model with weights $\mW \in \R^{d \times d}$ using the square loss,
\aligns{
  \Loss_d(\mW) = \frac{1}{2n}\norm{\mX\mW - \mY}_F^2.
}
We define $\pi_k$ and $\pi_{k\cond j}$
as the frequencies and conditional frequency statistics of the data,
\aligns{
  \pi_k
  \coloneqq 
  \frac{1}{n} \sum_{\smash{i=1}}^n \ind{x_i = k},
  &&
  \pi_{k \cond j}
  \coloneqq 
  \frac{\sum\!{}_{i=1}^n \ind{y_i=k}\ind{x_i=j}}{\sum\!{}_{i=1}^n \ind{x_i = j}},
  \quad 
  (\text{with the convention } \nicefrac{0}{0}=0)
  \quad 
  \forall j, k \in [d].\\[-2em]
}
\end{problem}
The eigenvalues and distances to the solution 
are directly related to the frequency statistics.
\begin{proposition}
\label{prop:eigenvalues}
The eigenvalues and distances to the solution
of \cref{prob:bigram} initialized at~$\mW\,{=}\,0$~are 
\aligns{
  \smash{
  \lambda_{ij} = \pi_i
  \quad 
  \text{ and }
  \quad 
  \delta_{ij}(0) = \pi_{j\cond i}
  \quad \text{ for } i, j \in [d].
  }\\[-2em]
}
\end{proposition}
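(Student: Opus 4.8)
The plan is to cast $\Loss_d$ into the reduced quadratic form $f(\vx) = \frac12(\vx - \vx^*)^\mtop \mA(\vx-\vx^*)$ with $\vx = \tvec(\mW)$, read off $\mA$ and the minimizer $\mW^*$ explicitly, and then use the one-hot structure of $\mX$ and $\mY$ to identify their entries. The key simplification is that the loss separates across the columns of $\mW$: writing $\mW_{:k}$ and $\vy_k$ for the $k$th columns of $\mW$ and $\mY$, we have $\Loss_d(\mW) = \frac{1}{2n}\sum_{k=1}^d \norm{\mX \mW_{:k} - \vy_k}_2^2$, so each column is an independent least-squares problem sharing the same design matrix $\mX$. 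Consequently the Hessian with respect to $\tvec(\mW)$ is block-diagonal with $d$ identical blocks equal to $\mA_0 \coloneqq \frac1n \mX^\mtop \mX$, and the eigenvectors are the standard basis vectors, i.e. the individual entries of $\mW$.

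Second, I would compute $\mA_0$ from the one-hot assumption. Since every row of $\mX$ is a one-hot encoding, the entry $(\mX^\mtop\mX)_{jk} = \sum_{l=1}^n \ind{x_l = j}\ind{x_l = k}$ vanishes unless $j = k$, in which case it equals $\sum_l \ind{x_l = j} = n\pi_j$. Hence $\mA_0 = \mathrm{diag}(\pi_1,\dots,\pi_d)$, and the full Hessian has eigenvalues $\pi_i$, with the eigenvector indexed by $(i,j)$ being the coordinate associated with entry $(i,j)$ of $\mW$. This gives $\lambda_{ij} = \pi_i$.

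Third, I would solve for $\mW^*$ through the normal equations $\mX^\mtop \mX\, \mW^* = \mX^\mtop \mY$. The same one-hot calculation gives $(\mX^\mtop\mY)_{ij} = \sum_l \ind{x_l = i}\ind{y_l = j}$, and dividing the $i$th row of the normal equations by $n\pi_i$ yields $(\mW^*)_{ij} = \pi_{j\cond i}$, exactly the conditional frequency (the convention $0/0 = 0$ covering tokens that never occur). Finally, at the initialization $\mW = 0$ we have $\vx_0 = 0$, so $\delta_{ij}(0) = \lin{\vv_{ij}, \vx_0 - \vx^*} = -(\mW^*)_{ij} = -\pi_{j\cond i}$; since each $\delta_{ij}$ enters both the loss and the GD dynamics only through its square, the sign is immaterial and we may take $\delta_{ij}(0) = \pi_{j\cond i}$.

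The one point I expect to need care is the degenerate case where a token $i$ is absent from the data ($\pi_i = 0$): then $\mA_0$ is rank-deficient and the least-squares minimizer is not unique. I would handle this by observing that a zero eigenvalue contributes neither to the loss $\frac12\sum_i \lambda_i \delta_i^2$ nor to the GD dynamics of \cref{eq:dynamics}, so the stated eigenvalue--distance pairing is the one that matters, and the $0/0 = 0$ convention pins down $\mW^*$ as the minimum-norm minimizer.
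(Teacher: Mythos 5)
Your proof is correct and follows essentially the same route as the paper's: separate the loss across columns of $\mW$, identify the shared Hessian $\frac{1}{n}\mX^\mtop\mX = \Diag([\pi_1,\ldots,\pi_d])$ from the one-hot structure, and solve the normal equations to obtain $(\mW^*)_{ij} = \pi_{j\cond i}$. Your two additional remarks—that the sign of $\delta_{ij}(0)$ is immaterial since it only enters squared, and that zero-frequency tokens yield rank-deficiency handled by the $0/0=0$ convention—are careful touches the paper glosses over (it simply speaks of the ``magnitude'' of the solution), but they do not change the argument.
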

\begin{proof}
The optimization problem separates into $d$ independent $d$-dimensional subproblems,
\aligns{
  \textstyle
  \Loss_d(\mW) = \frac{1}{2n}\norm{\mX\mW-\mY}_F^2 
  = 
  \sum_{j=1}^d
  \frac{1}{2n}\norm{\mX\vw^j-\vy^j}_2^2,
}
where $\vw^j, \vy^j \in\R^d$ are the $j$th columns of $\mW, \mY$.
Each subproblem has the same Hessian given by \smash{$\mX^{\mtop} \mX/n = \Diag([\pi_1, \ldots, \pi_d])$},
so the eigenvalues are the frequencies, each with multiplicity $d$.
From $\vw^j = 0$, the distance to the solution is the magnitude of the solution $\vs^j$
of the normal equations,
\aligns{
  \textstyle
  \vs^j = \paren{\mX^\mtop \mX}^{-1} \mX^\mtop \vy^j
  = \bmat{
    \pi_{1\cond j},
    \ldots,
    \pi_{d\cond j}
  }^\mtop,
}
as $\mX$ and $\vy^j$ are one-hot, 
$(\mX^\mtop \vy^j)_k = \sum_{i=1}^n \ind{x_i = k}\ind{y_i = j}$
and $(\mX^\mtop\mX)_{kk} = \sum\!{}_{i=1}^n \ind{x_i = k}$.
\end{proof}

\subsection{Modeling assumptions}

Getting an interpretable form of the convergence of \cref{eq:dynamics}
requires assumptions on the values of~$\lambda_i$ and~$\delta_i$. 
Assuming $\mu \leq \lambda_i \leq L$
leads to the typical rates in smooth (strongly-)convex optimization, 
\aligns{
  \Loss(\vw_t) - \Loss^* \leq \frac{L \sum_{i=1}^d \delta_i(\vw_0)^2}{t},
  &&
  \Loss(\vw_t) - \Loss^* \leq \paren{1-\frac{\mu}{L}}^t 
  (\Loss(\vw_0) - \Loss^*),
}
where $\Loss^*$ is the minimum loss~\citep[see, e.g.,][]{nesterov2018intro}.
While valid, these worst-case bounds are too coarse to capture the richness of the behavior of GD
and become vacuous in high-dimensions.
We compare our results to classical rates in \cref{apx:related-work}.
To obtain more fine-grained results, we assume that the distributions
of the frequencies $\pi_k$ and conditional frequencies $\pi_{k\cond j}$ follow power laws. 
\begin{assumption}[Heavy-tailed data]
\label{ass:conditional-distribution}
We assume that the frequencies and conditional frequencies 
follow a frequency-rank power law with exponent $\alpha > 0$.
That is, assuming the frequencies are sorted~($\pi_k \geq \pi_{k+1}$) 
and defining the sorting permutations $\rho_j$ 
such that~$\pi_{\rho_j(k+1) \cond j} \geq \pi_{\rho_j(k)\cond j}$,
\aligns{
  \pi_k \propto \frac{1}{k^\alpha} 
  \quad \text{ and } \quad 
  \pi_{\rho_j(k) \cond j} \propto \frac{1}{k^\alpha},
  \quad \text{ for all } j, k, 
}
where by $\pi_k \propto \nicefrac{1}{k^\alpha}$ we mean the the distribution is normalized, 
$\pi_k = \nicefrac{1}{zk^\alpha}$ where~\smash{$z = \sum\!{}_{k=1}^d \nicefrac{1}{k^\alpha}$}.
\end{assumption}
This assumption may appear strong, 
as it would be satisfied for example if the words were sampled i.i.d. 
with frequencies $\pi_1, \ldots, \pi_d$
as \smash{$\pi_{k\cond j}=\pi_k$}.
But it does not require that all conditional distributions be the same, 
and the most likely next-token after word $j$ can depend on~$j$.
This assumption merely asks that, once sorted, 
the next-word frequencies also follow a power law with the same exponent.
Some distributions might deviate from this trend 
if a token can only logically be followed by specific tokens, 
or if the word being conditioned on is rare 
and our dataset is relatively small.\footnote{%
Even with i.i.d. data from a power law $\pi_k \propto 1/k$, 
accurately estimating the frequency of rare next-tokens 
takes a large number of samples.
With a vocabulary size of~$d=3 \cdot 10^{4}\!$,
common for large language models,
we expect to see only one
example of the pair $(x=d, y=d)$ every~$10^9$~tokens.}
While we do not expect the assumption to be exactly satisfied in practice, 
it appears to be a reasonable high-level approximation of real-world data,
as shown in \cref{fig:assumption-is-reasonable}
in comparison to the empirical distributions on OpenWebText, 
and leads to accurate predictions 
as shown in \cref{fig:actual-vs-predicted-rates}.
\begin{figure}[t]
\includegraphics[width=\textwidth]{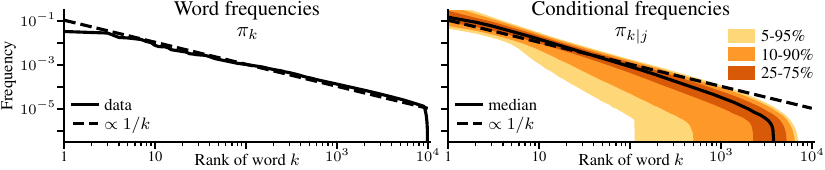}%
\shrinkspace
\caption[]{\textbf{Token frequencies and conditional frequencies approximately follow Zipf's law.}
The approximation of \cref{ass:conditional-distribution} (\legendDashes)
provides a reasonable approximation of 
the frequencies~(left) and conditional frequencies~(right) on text data,
computed on OpenWebText for a vocabulary of~$10^4$ words.
For a word~$k$, the right plot shows the median and quantiles of the distribution~$\pi_{k\cond j}$ for~$j \in [d]$.
}
\label{fig:assumption-is-reasonable}
\end{figure}
This form of spectral assumption is similar to the source/capacity conditions~\citep{caponnetto2007source}, 
see \citet{velikanov2024tight} for an account in optimization.
We compare \cref{ass:conditional-distribution} to related settings in \cref{apx:related-work}.

\subsection{Strategy for the analysis}
Our goal is to derive scaling laws for the loss of \cref{prob:bigram} in $d$ dimensions
after $t$ steps,~$\Loss_d(t)$, as~$d \to \infty$.
Such scaling laws can be interpreted as approximating the convergence rate for large $d$, 
or serve as a guide on how to scale the hyperparameters of the optimizer as we increase the vocabulary size.
Formally, we compute the asymptotic limit of the rate $r(t)$ 
at which the relative loss decreases, 
\aligns{
  \Loss_d(t)-\Loss_d^* 
  \asym{d} r(t) \paren{\Loss_d(0)-\Loss_d^*},
  \quad \text{ where $\asym{d}$ is notation for } \quad 
  \lim_{d\to\infty} \frac{\Loss_d(t)-\Loss_d^*}{\Loss_d(0)-\Loss_d^*}
  = r(t),
}
Works on scaling laws typically model the absolute value of the loss.
This approach degenerates when the loss at initialization vanishes or diverges as $d \to \infty$
which happens when $\alpha \leq 1$.
Considering the relative decrease circumvents the issue,
as also noted by \citet{bulatov2023harmonic,tao2024scalingvocab}.

Another potential degeneracy is the scaling of time. 
If the problem becomes more difficult as $d$ grows, it might be impossible to make progress in finite time.
To take a concrete example, suppose that~$\Loss_d^* = 0$
and~$\Loss_d(t) = r_d(t) \Loss_d(0)$ with~$r_d(t) = (1-1/d)^t$.
If we take the limit as~$d \to \infty$ for a fixed~$t$, 
we obtain~$\lim_{d\to\infty} (1-\nicefrac{1}{d})^t = 1$.
The rate no longer depends on $t$, and we cannot make progress unless~$t$ grows with~$d$. 
If we instead introduce a rescaled time variable~$\tau$ and scale~$t_d(\tau) = \tau d$,
we recover a linear rate in the rescaled time $\tau$
as~\smash{$(1-\nicefrac{1}{d})^{\tau d} \asym{d} e^{-\tau}$}.
This is the same issue encountered in random matrix theory, 
where the dimensions of the matrix are taken to grow jointly with a fixed ratio
to avoid degenerate solutions~\citep{potters2020firstcourse}.
It can be verified that $t_d(\tau) = \tau d$ is the ``right'' scaling, 
as the limit $r_d(t_d(\tau))$ degenerates otherwise.
Using $f(x) \ll g(x)$ for~$\lim_{x\to\infty} \nicefrac{f(x)}{g(x)} = 0$,
we have $r_d(t_d(\tau)) \asym{d} 1$ if $t_d(\tau) \ll d$
and $r_d(t_d(\tau)) \asym{d} 0$ if $t_d(\tau) \gg d$;
we either make no progress or solve the problem instantly.
Our results are derived by taking 
the finite dimensional rate $r_d(t)$
with a scaling $t_d$ such that the asymptotic rate $r(\tau)$ 
is well defined in terms of the rescaled time $\tau$, 
\alignn{
  \label{eq:asymptotic-rate}
  r(\tau) 
  \coloneqq
  \lim_{d\to\infty} r_d(t_d(\tau))
  = \lim_{d\to\infty} \frac{\Loss_d(t_d(\tau)) - \Loss_d^*}{\Loss_d(0) - \Loss_d^*}.
  \\[-2.5em]
  \nonumber
}
\section{Scaling laws for gradient descent}
We are now ready to move on to the main results for the scaling laws of gradient descent.

\begin{figure}[t]
\centering
\includegraphics[width=1.0\textwidth]{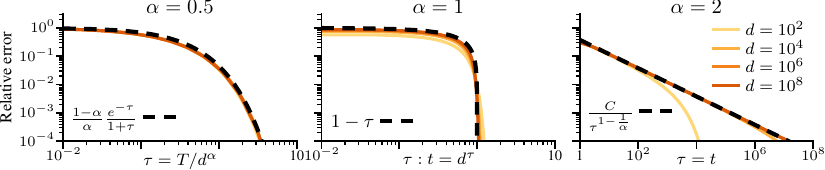}%
\shrinkspace
\caption[]{\textbf{Scaling of gradient descent on power-law data with exponent~$\alpha$ (\cref{thm:gradient-descent}).}
The dynamics of gradient descent on the linear bigram model with data satisfying \cref{ass:conditional-distribution}
converge to our scaling law (\legendDashes, \cref{thm:gradient-descent}) as $d$ grows.
Achieving a relative error $\varepsilon$ requires scaling 
the iteration budget $T$ with $d^\alpha$ for $\alpha < 1$, 
$T$ with $d^{1-\varepsilon}$ for~$\alpha = 1$,
and no scaling for~$\alpha > 1$.
}
\label{fig:gd-rates}
\end{figure}
\begin{restatable}[Scaling for gradient descent]{theorem}{thmgd}
\label{thm:gradient-descent}
On the bigram problem (Prob. \ref{prob:bigram}) with distributions 
following a power law with exponent $\alpha > 0$
(\cref{ass:conditional-distribution}),
gradient descent with a step-size $1/\pi_1$,
with time scaling $t_d(\tau)$
has the following asymptotic convergence rate (\cref{eq:asymptotic-rate}).
\aligns{
  \text{If } \alpha < 1,
  &&
  t_d(\tau) &= \tfrac{1}{2}\tau d^\alpha,
  &
  r(\tau)
  &=
  \frac{1-\alpha}{\alpha}E_{\frac{1}{\alpha}}(\tau)
  \asym{\tau}
  \frac{1-\alpha}{\alpha} \frac{e^{-\tau}}{\tau+1},
  \\
  \text{if } \alpha = 1,
  &&
  t_d(\tau) &= \tfrac{1}{2}d^\tau,
  &
  r(\tau)
  &=
  1 - \tau
  \quad \quad 
  \text{ where } \tau \in [0,1],
  \\
  \text{if } \alpha > 1,
  &&
  t_d(\tau) &= \tau,
  &
  r(\tau)
  &\asym{\tau}
  \frac{B\paren{1-\frac{1}{\alpha}, 1+2t}}{\alpha \zeta(\alpha)}
  \asym{\tau}
  C
  \frac{1}{\tau^{1-\frac{1}{\alpha}}},
}
where 
$\Gamma$ is the Gamma function, 
$E_p$ is the generalized exponential integral, 
$B$ is the Beta function, 
and $\zeta$ is the zeta function~\citep[][
\href{https://dlmf.nist.gov/5.2}{\S5.2},
\href{https://dlmf.nist.gov/5.12}{\S5.12}
\href{https://dlmf.nist.gov/8.19}{\S8.19}
\href{https://dlmf.nist.gov/25.2}{\S25.2}%
]{NIST:DLMF},
and $C = \smash{\nicefrac{\Gamma\paren{1-\frac{1}{\alpha}}}{\alpha\zeta(\alpha)}}$.
\end{restatable}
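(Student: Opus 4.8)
The plan is to collapse the two-index dynamics of \cref{eq:dynamics} into a single scalar sum and then evaluate its $d\to\infty$ limit separately in each regime of $\alpha$. By \cref{prop:eigenvalues} the reduced quadratic has eigencomponents indexed by pairs $(i,j)$ with eigenvalue $\lambda_{ij}=\pi_i$ and initial distance $\delta_{ij}(0)=\pi_{j\cond i}$. Substituting into \cref{eq:dynamics} and forming the relative loss gives
\aligns{
  r_d(t) = \frac{\sum_{i,j}\pi_i\paren{1-\eta\pi_i}^{2t}\pi_{j\cond i}^2}{\sum_{i,j}\pi_i\,\pi_{j\cond i}^2}.
}
The key simplification is that under \cref{ass:conditional-distribution} each sorted conditional law is normalized by the same constant, so $\sum_j \pi_{j\cond i}^2 = \Zs/\Z^2$ is independent of $i$ and cancels between numerator and denominator. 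Using $\sum_i \pi_i = 1$ and the step-size $\eta = 1/\pi_1 = \Z$, which makes $\eta\pi_i = 1/i^\alpha$, the rate reduces to
\aligns{
  r_d(t) = \frac{1}{\Z}\sum_{i=1}^d \frac{1}{i^\alpha}\paren{1-\frac{1}{i^\alpha}}^{2t}.
}
(One checks $r_d(0)=\Z/\Z=1$.) The entire content of the theorem is now concentrated in the asymptotics of this one object.

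For $\alpha<1$ and $\alpha=1$ the common mechanism is to replace $\paren{1-i^{-\alpha}}^{2t}$ by $\exp(-2t\,i^{-\alpha})$ — the finitely many small-$i$ terms, where $i^{-\alpha}$ is order one, are exponentially small in $t$ and cannot affect the leading order — and then compare the resulting Riemann sum to an integral. When $\alpha<1$, $\Z\asym{d} d^{1-\alpha}/(1-\alpha)$ diverges and the sum is dominated by $i\sim d$; rescaling $i=xd$ with $2t=\tau d^\alpha$ turns the exponent into $e^{-\tau/x^\alpha}$ and yields $r(\tau)=(1-\alpha)\int_0^1 x^{-\alpha}e^{-\tau/x^\alpha}\,dx$, and the substitution $s=1/x^\alpha$ identifies this with $\tfrac{1-\alpha}{\alpha}E_{1/\alpha}(\tau)$. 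When $\alpha>1$, $\Z\to\zeta(\alpha)$ converges, so by dominated convergence the fixed-$t$ limit is simply the infinite series $\tfrac{1}{\zeta(\alpha)}\sum_{i\ge1} i^{-\alpha}\paren{1-i^{-\alpha}}^{2t}$; the $t\to\infty$ asymptotics then follow by comparing this series to $\tfrac{1}{\zeta(\alpha)}\int_1^\infty x^{-\alpha}\paren{1-x^{-\alpha}}^{2t}dx$, and the same substitution $s=1/x^\alpha$ turns it into the Beta integral $\tfrac{1}{\alpha\zeta(\alpha)}\int_0^1 s^{-1/\alpha}(1-s)^{2t}\,ds = B(1-\tfrac1\alpha,\,1+2t)/(\alpha\zeta(\alpha))$.

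The critical case $\alpha=1$ requires a \emph{logarithmic} rather than linear rescaling of the index. Here $\Z=H_{d,1}\asym{d}\ln d$, and the natural variable is $u=\ln i/\ln d\in[0,1]$, i.e.\ $i=d^u$. With $2t=d^\tau$ the exponent becomes $e^{-d^{\tau-u}}$, which tends to the indicator $\ind{u>\tau}$ as $d\to\infty$; hence $\sum_i i^{-1}e^{-d^\tau/i}\asym{d}\sum_{i>d^\tau} i^{-1}\asym{d}(1-\tau)\ln d$, and dividing by $\Z\asym{d}\ln d$ gives exactly $r(\tau)=1-\tau$ on $\tau\in[0,1]$. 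Finally, the stated $\tau\to\infty$ tails are read off from standard special-function asymptotics: the large-argument expansion of the generalized exponential integral (DLMF §8.19) gives $E_{1/\alpha}(\tau)\asym{\tau} e^{-\tau}/\tau$, equivalent to the claimed $\tfrac{1-\alpha}{\alpha}e^{-\tau}/(\tau+1)$ form; and the Gamma-ratio asymptotic $\Gamma(b)/\Gamma(a+b)\asym{b} b^{-a}$ applied to $B(1-\tfrac1\alpha,\,1+2t)$ with $a=1-\tfrac1\alpha$ and $b=1+2t$ produces the polynomial tail $C\,\tau^{-(1-1/\alpha)}$ with $C=\Gamma(1-\tfrac1\alpha)/(\alpha\zeta(\alpha))$.

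The main obstacle is making the sum-to-integral passage rigorous with errors that vanish after normalization by $\Z$. For $\alpha<1$ this is a dominated-convergence argument once integrability of $x^{-\alpha}e^{-\tau/x^\alpha}$ at $x=0$ (equivalently $s^{-1/\alpha}e^{-\tau s}$ at $s\to\infty$) is verified and the exponential replacement error is bounded uniformly over $i$. The genuinely delicate case is $\alpha=1$: since the limiting summand is a discontinuous indicator one cannot simply invoke an integral comparison, and the argument instead needs the two-sided estimate that the truncated sum over $i\le d^\tau$ contributes $o(\ln d)$ while the tail over $i>d^\tau$ contributes $(1-\tau)\ln d+O(1)$, so that the $O(1)$ remainders are absorbed by the $\ln d$ normalization. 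Controlling these remainders uniformly in $\tau$, so that the limit $r(\tau)=1-\tau$ holds for every fixed $\tau$, is the crux of the proof.
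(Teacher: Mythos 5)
Your proposal is correct and takes essentially the same route as the paper: the same reduction to $r_d(t)=\frac{1}{H_{d,\alpha}}\sum_{k=1}^d k^{-\alpha}\paren{1-k^{-\alpha}}^{2t}$, the same regime-by-regime sum-to-integral (Laplace/dominated-convergence) passage with equivalent changes of variables, and the same identifications with $E_{1/\alpha}$, the Beta function, and their large-$\tau$ asymptotics. The only technical difference is that you linearize $\paren{1-k^{-\alpha}}^{2t}\approx e^{-2tk^{-\alpha}}$ before comparing to an integral, whereas the paper compares the exact sum to the exact integral, controlling the discrepancy via unimodality of the summand (\cref{lem:approx-error}), and only then passes to the limit inside the integral by dominated convergence.
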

\vspace{-1em}
\begin{proof}
We sketch the proof for $\alpha = 1$ and leave the remaining cases to \cref{apx:gd}.
Under \cref{eq:dynamics} and \cref{ass:conditional-distribution}
the dynamics of the normalized loss $r_d(t) = \nicefrac{\Loss_d(t) -\Loss_d^*}{\Loss_d(0)-\Loss_d^*}$
reduce to
\aligns{
  \Loss_d(t) = 
  \bigg( \frac{1}{\sum_{k=1}^d k^{-\alpha}}
  \sum_{k=1}^d k^{-\alpha} \paren{1-k^{-\alpha}}^t \bigg)\Loss_d(0),
  \quad \text{ so } \quad 
  r_d(t) 
  = \frac{1}{H_{d,\alpha}}\sum_{k=1}^d k^{-\alpha} \paren{1-k^{-\alpha}}^t,
}
where $H_{d,\alpha} = \sum_{k=1}^d k^{-\alpha}$.
To simplify the analysis, we use the integral form of the sum
as we can use Laplace's method to estimate its behavior for large $d$,
see \cref{apx:gd} for a formal justification;
\aligns{
  \text{ For $\alpha = 1$}, \quad 
  r_d(t)
  \approx
  I_d(t) 
  &= \frac{1}{H_{d,1}} \int_1^d
  k^{-1} \paren{1-k^{-1}}^t \dif{k}
  =
  \frac{1}{H_{d,1}} \log(d) \int_0^1 \paren{1-d^{-z}}^{t} \dif{z},
}
after the change of variable $k = d^z$ or $z = \log(k)/\log(d)$.
As the normalizer $H_{d,1} \asym{d} \log(d)$,
we only need to consider the limit of the integral.
Taking $d \to \infty$ with $t$ fixed, 
the integral converges to $1$ and we make no progress, regardless of $t$. To make progress, $t$ needs to scale as $t = d^\tau$ for~$\tau \in [0,1]$,
\aligns{
  I_d(d^\tau) =
  \frac{\log(d)}{H_{d,\alpha}}
  \int_0^1 \bigg({1-\frac{d^{\tau-z}}{d^\tau}\bigg)}^{\smash{d^\tau}} \dif{z}.
}
For a fixed $\tau$ and as $d \to \infty$, the integrand converges to $0$ if $z < \tau$ and $1$ if $z > \tau$.
As it is bounded by a constant,
we can exchange limits and integrals by
the dominated convergence theorem,
obtain
\aligns{
  \lim_{d\to\infty}
  \int_0^1 
  \paren{1-\frac{d^{\tau-z}}{d^\tau}}^{\smash{d^\tau}}
  \dif{z}
  =
  \int_0^\tau 0 \dif{z}
  +
  \int_\tau^1 1 \dif{z}
  = 1-\tau.
  \tag*{\qedhere}
}
\end{proof}
The results highlight the need for a different scaling as a function of $\alpha$.
The number of iterations needs to scale with dimension 
if the data decays as slow as Zipf's law or slower ($\alpha \leq 1$)
whereas it is not necessary for lighter-tailed data ($\alpha > 1$). 
We show in \cref{fig:gd-rates} 
that the optimization dynamics in finite dimension 
on data satisfying \cref{ass:conditional-distribution}
converge to the asymptotic rates of \cref{thm:gradient-descent}
and are accurate even for common vocabulary sizes.

\section{Scaling laws for sign descent}

We now move to the case of SD.
Before going into the results, we need to address two issues.
First, the sign descent update is not linear.
We thus need to establish an alternative to the closed form solution of GD in \cref{eq:dynamics}, but for SD.
Second, SD does not converge with a fixed step-size. 
We thus need to scale step-size as a function of the iteration budget and dimension.

\subsection{Sign descent dynamics}
If run with a constant step-size, 
the update of sign descent
with a step-size of $\eta$ is 
\aligns{
  \mW_{t+1} = \mW_t - \eta \sign(\nabla \Loss(\mW_t)).
}
As the Hessian of \cref{prob:bigram} is diagonal, 
the update applies independently to each eigencomponent.
Letting $\delta_{ij}(t)$ be the distance along the $(i,j)$th eigenvector at step $t$, 
\aligns{
  \delta_{ij}(t+1)
  = \delta_{ij}(t) - \eta \sign(\delta_{ij}(t)).
}
The difficulty in the analysis 
comes from the fact that $\abs{\delta_{ij}(t)}$ does not converge to $0$.
Instead, $\abs{\delta_{ij}(t)}$ will oscillate 
between some $c \in (0, \eta)$ and $c-\eta$,
unless $t = \abs{\delta_{ij}(t)}/\eta$ is an integer.
To simplify the analysis, 
we make the following assumption, 
essentially ``averaging'' the oscillations to $\eta/2$.
\begin{figure}[t]
\includegraphics[width=\textwidth]{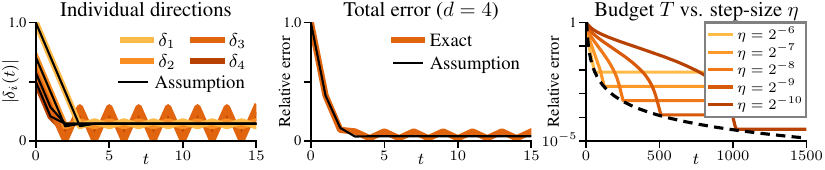}%
\shrinkspace
\caption[]{\textbf{Illustration of our modeling assumption for sign descent (\cref{ass:sign-dynamics}).}
Left: instead of modeling the oscillations of sign descent, we treat the oscillatory phase as constant.
Middle: The effect on the total error.
Right: Because SD eventually oscillates, the step-size needs to depend on the iteration budget $T$ 
to achieve best performance after $T$ steps (the envelope \legendDashes).
}
\label{fig:sign-assumption}
\end{figure}
\begin{assumption}
\label{ass:sign-dynamics}
We assume that sign descent with step-size $\eta$ follows the dynamics
\aligns{
  \abs{\delta_{ij}(t)}
  \coloneq   
  \left\{\begin{array}{ll}
    \abs{\delta_{ij}(0)} - t\eta & \text{ if } \abs{\delta_{ij}(t-1)} - \eta \geq 0,
    \\
    \eta/2  & \text{ otherwise}.
  \end{array}\right.
}
\end{assumption}
Under this assumption, 
the distances decrease while $t < \abs{\delta_{ij}(0)}/\eta$
then go to $\eta/2$ to model the oscillatory regime,\smash{\footnotemark}
as illustrated in \cref{fig:sign-assumption}.
\footnotetext{%
We could instead model the switch between the decreasing and oscillatory phase as
$\max\paren{\abs{\delta_{ij}(0)} - t\eta, \eta/2}$.
But under this model the transition occurs at $\abs{\delta_{ij}(0)} = \eta(t+1/2)$
instead of $\abs{\delta_{ij}(0)} = \eta t$.
We chose the formulation in \cref{ass:sign-dynamics} to not carry this $1/2$ term
as the difference is small for large~$t$.
Neither form captures the fact that a direction might reach exactly~0, 
after which the oscillatory phase would then not happen,
but only  a small number of directions can reach~0 if $\abs{\delta_{ij}(0)} \propto 1/j^\alpha$,
and their impact vanishes as~$d$ grows.
}
Using this assumption, we have the following dynamics.

\begin{proposition}
\label{prop:loss-of-sd}
If the conditional distribution %
follows a power law with exponent $\alpha$ as in \cref{ass:conditional-distribution},
the dynamics of sign descent with step-size $\eta$ in \cref{ass:sign-dynamics}
lead to the loss 
\aligns{
  \Loss_d(t,\eta) 
  \coloneqq \sum_{i=1}^d \sum_{j=1}^d \lambda_{ij} \delta_{ij}(t)^2
  = \sum_{k=1}^{k_*} \paren{\delta_k(0) - t\eta}^2
  + \sum_{k=k_*+1}^{d} \paren{\frac{\eta}{2}}^2
  \quad \text{ where } \quad
  \delta_k(0) = \pi_k,
}
and $k_*$ is the number of directions in the decreasing regime.
\end{proposition}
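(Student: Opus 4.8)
The plan is to treat this as a bookkeeping identity: express each squared distance $\delta_{ij}(t)^2$ purely as a function of its magnitude at initialization, and then exploit the two assumptions to collapse the double sum over $(i,j)$ into the single sum over $k$ claimed in the statement. Throughout I will use \cref{prop:eigenvalues}, which gives $\lambda_{ij} = \pi_i$ and $\delta_{ij}(0) = \pi_{j\cond i}$, so that $\Loss_d(t,\eta) = \sum_{i=1}^d \pi_i \sum_{j=1}^d \delta_{ij}(t)^2$.

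First I would unwind the recursion in \cref{ass:sign-dynamics} to identify the decreasing regime. If a direction has been decreasing through step $t-1$, then $\abs{\delta_{ij}(t-1)} = \abs{\delta_{ij}(0)} - (t-1)\eta$, and the guard $\abs{\delta_{ij}(t-1)} - \eta \geq 0$ is equivalent to $\abs{\delta_{ij}(0)} \geq t\eta$; once this fails the magnitude is pinned to $\eta/2$ and stays there, since $\eta/2 - \eta < 0$. Hence $\delta_{ij}(t)^2 = g_t(\abs{\delta_{ij}(0)})$, where $g_t(a) = (a - t\eta)^2$ for $a \geq t\eta$ and $g_t(a) = (\eta/2)^2$ otherwise, a function of the initial magnitude alone.

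The decoupling is the heart of the argument. For each fixed $i$ the inner sum $\sum_j g_t(\abs{\delta_{ij}(0)}) = \sum_j g_t(\pi_{j\cond i})$ is invariant under permuting $j$, so I may reorder the terms by the sorting permutation $\rho_i$. \cref{ass:conditional-distribution} states that the sorted conditionals obey $\pi_{\rho_i(k)\cond i} = 1/(z k^\alpha)$ with the \emph{same} normalizer $z = H_{d,\alpha}$ as the marginals, so $\pi_{\rho_i(k)\cond i} = \pi_k$ exactly, which makes the inner sum $\sum_{k=1}^d g_t(\pi_k)$ independent of $i$. Factoring it out of the normalized marginal $\sum_i \pi_i = 1$ yields $\Loss_d(t,\eta) = \sum_{k=1}^d g_t(\pi_k)$. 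Finally, since the sorted marginals $\pi_k$ are nonincreasing in $k$, the set $\{k : \pi_k \geq t\eta\}$ is a prefix $\{1,\ldots,k_*\}$, which splits the single sum into $\sum_{k=1}^{k_*}(\pi_k - t\eta)^2 + \sum_{k=k_*+1}^d (\eta/2)^2$ with $\delta_k(0) = \pi_k$, as claimed.

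I expect the only genuinely delicate point to be the exact coincidence $\pi_{\rho_i(k)\cond i} = \pi_k$: it relies on both the marginal and every conditional being normalized to sum to one, so that the shared power-law shape $1/k^\alpha$ forces the shared normalizer $z = H_{d,\alpha}$ and hence equality of the sorted sequences rather than mere proportionality. The remaining steps — permutation invariance of the inner sum, factoring $\sum_i \pi_i = 1$, and the contiguous-prefix structure of the decreasing regime coming from monotonicity of the sorted $\pi_k$ — are routine once this identity is in hand.
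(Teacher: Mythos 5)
Your proof is correct and takes essentially the same route as the paper's: use \cref{prop:eigenvalues} for $\lambda_{ij}=\pi_i$, use \cref{ass:conditional-distribution} to identify the sorted conditionals exactly with the marginals $\pi_k$ (shared normalizer), collapse the double sum via permutation invariance of the inner sum and $\sum_{i=1}^d \pi_i = 1$, then split the remaining sum by decreasing versus oscillatory regime. Your explicit unwinding of the recursion in \cref{ass:sign-dynamics} (showing the guard is equivalent to $\abs{\delta_{ij}(0)} \geq t\eta$ and that the oscillatory state is absorbing) and the prefix structure of $\{k : \pi_k \geq t\eta\}$ are details the paper leaves implicit, but the argument is identical in substance.
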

\begin{proof}
By \cref{prop:eigenvalues}, $\lambda_{ij} = \pi_i$ does not depend on $j$.
By \cref{ass:conditional-distribution}, there is a permutation $\rho_i$ such that $\delta_{i\rho_i(j)}(0) = \pi_j$.
As a result, the dynamics of \smash{$\delta_{i\rho_i(j)}(t)$} do not depend on $i$.
Writing~\smash{$\delta_{j}(t) = \delta_{i,\rho_i(j)}(t)$} for any $i$ and using that \smash{$\sum\!{}_{i=1}^d \pi_i=1$},
\aligns{
  \sum_{i=1}^d \sum_{j=1}^d \lambda_{ij} \delta_{ij}(t)^2
  =
  \sum_{i=1}^d \pi_i \sum_{j=1}^d \delta_{ij}(t)^2
  = \sum_{i=1}^d \pi_i \sum_{j=1}^d \delta_{j}(t)^2
  = \sum_{j=1}^d \delta_{j}(t)^2.
}
We then split the sum depending on whether 
$\abs{\delta_{k}(t)}$ is decreasing or oscillating.
\end{proof}

\subsection{Scaling of the step-size}
As SD with a fixed step-size eventually enters an oscillatory regime,
the loss we converge to as $t$ grows depends on $\eta$.
To describe the performance achievable after tuning $\eta$ for a given budget~$T$, 
we need to estimate how $\eta$ scales with $T$ and $d$. 
This effect is illustrated in \cref{fig:sign-assumption} (right). %
We use capital~$T$ to emphasize that we are modeling the loss at the end of a training run of $T$ steps
with a fixed step-size which depends on $T$.
Getting the exact form of $\eta_* = \min_{\eta} \Loss_d(T,\eta)$ is out of reach, 
but we establish bounds on the optimal step-size.

\begin{proposition}%
\label{prop:step-size-range-for-sign}
The step-size $\eta_*$ that $\Loss_d(T,\eta)$ in \cref{prop:loss-of-sd}
given $T$ and $d$,  %
satisfies
\aligns{
  \frac{\delta_d(0)}{T} \leq \eta_* \leq \frac{\delta_1(0)}{T}.
}
\end{proposition}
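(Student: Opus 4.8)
The plan is to exploit the piecewise-quadratic structure of $\Loss_d(T,\eta)$ from \cref{prop:loss-of-sd} and to rule out minimizers outside the stated interval by a monotonicity argument carried out separately at each extreme. First I would make explicit that the cutoff $k_*$ is itself a function of $\eta$: since the distances are sorted, $\delta_1(0) \ge \cdots \ge \delta_d(0) \ge 0$, a direction $k$ is still in the decreasing regime at time $T$ exactly when $\delta_k(0) \ge T\eta$, so $k_*(\eta)$ equals the number of indices with $\delta_k(0) \ge T\eta$. Two boundary step-sizes then delimit the qualitatively distinct regimes: for $\eta \ge \delta_1(0)/T$ every direction has entered the oscillatory phase ($k_* = 0$), while for $\eta \le \delta_d(0)/T$ every direction is still decreasing ($k_* = d$).

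For the upper bound $\eta_* \le \delta_1(0)/T$, I would observe that whenever $\eta > \delta_1(0)/T$ the loss collapses to $\Loss_d(T,\eta) = \sum_{k=1}^d (\eta/2)^2 = d\eta^2/4$, which is strictly increasing in $\eta$. Hence any such $\eta$ is dominated by the choice $\eta = \delta_1(0)/T$, whose loss is at most $d(\delta_1(0)/T)^2/4$ (in fact $(d-1)(\delta_1(0)/T)^2/4$, since the top direction contributes zero there), so no minimizer can lie strictly above $\delta_1(0)/T$.

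For the lower bound $\eta_* \ge \delta_d(0)/T$, I would note that for $\eta < \delta_d(0)/T$ all directions remain in the decreasing regime, so $\Loss_d(T,\eta) = \sum_{k=1}^d (\delta_k(0) - T\eta)^2$ is a smooth quadratic in $\eta$ with derivative $-2T\sum_{k=1}^d (\delta_k(0) - T\eta)$. Because $\eta < \delta_d(0)/T \le \delta_k(0)/T$ forces every term $\delta_k(0) - T\eta$ to be strictly positive, this derivative is strictly negative, so the loss is strictly decreasing on $[0,\delta_d(0)/T)$ and any $\eta$ below $\delta_d(0)/T$ is dominated by values nearer the endpoint. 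Combining the two exclusions confines the minimizer to $[\delta_d(0)/T,\, \delta_1(0)/T]$, which is the claim; existence of $\eta_*$ follows because the search reduces to this compact interval, on which $\Loss_d(T,\cdot)$ is piecewise quadratic with finitely many pieces.

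The main obstacle is bookkeeping at the regime boundaries rather than the monotonicity itself: as $\eta$ crosses a threshold $\delta_k(0)/T$, direction $k$ switches from contributing $(\delta_k(0)-T\eta)^2 \to 0$ to contributing $(\eta/2)^2 > 0$, so $\Loss_d(T,\cdot)$ has upward jump discontinuities and is not globally smooth. I would handle this by never differentiating across a threshold: each of the two monotonicity computations above lives strictly inside a single regime (all-oscillating, respectively all-decreasing), and the endpoint comparisons use only one-sided limits. Since the jumps are always upward, they can only reinforce the conclusion that leaving the interval $[\delta_d(0)/T,\, \delta_1(0)/T]$ cannot decrease the loss.
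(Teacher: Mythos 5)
Your proof is correct and takes essentially the same approach as the paper's (much terser) argument: below $\delta_d(0)/T$ all directions are in the decreasing regime, so increasing $\eta$ strictly reduces the loss, and at or above $\delta_1(0)/T$ all directions oscillate, so decreasing $\eta$ strictly reduces the loss. Your extra bookkeeping on the upward jumps at regime thresholds and on existence of the minimizer goes beyond what the paper records but does not change the underlying argument.
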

\begin{proof}
If $\eta \leq \delta_d(0)/T$, 
all directions are still in the decreasing regime of \cref{ass:sign-dynamics} at time~$T$.
As long as~$T\eta < \delta_d(0)$, increasing the step-size leads to more progress.
Similarly, if~$T\eta \geq \delta_1(0)$, all directions are in the oscillatory regime, 
and reducing the step-size reduces the oscillations.
\end{proof}

As our initial distances follow a power law, 
$\delta_k(0) = \pi_k = \frac{1}{z k^\alpha}$ where $z = \sum\!{}_{k=1}^d k^{-\alpha}$, 
\cref{prop:step-size-range-for-sign} suggests an alternative parameterization of the step-size as 
\aligns{
  \eta(\phi) = \frac{1}{zT\phi^\alpha}
  \quad  \text{ with }  \quad 
  1 \leq \phi \leq d,
}
where $\phi$ controls how many directions are still decreasing.
We now define the following scaling of $\phi$.
\begin{figure}[t]
\includegraphics[width=\textwidth]{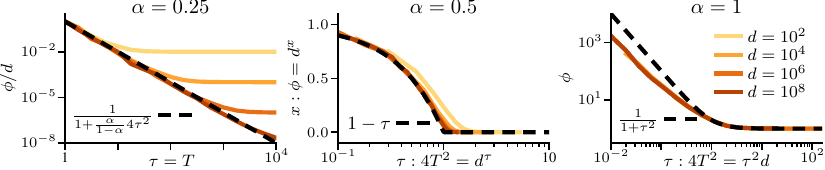}%
\shrinkspace
\caption[]{%
\textbf{Convergence of the best step-size for sign descent to the scaling in \cref{def:sign-scaling}.}
The optimal step-size for $T$ steps of sign descent converge to our scaling (\legendDashes) given in \cref{def:sign-scaling}~(for~$\tau > 1$ in the case of $\alpha = 1$).
Computed by grid search on the linear bigram model with data satisfy \cref{ass:conditional-distribution}.
}
\label{fig:sign-stepsize-convergence}
\end{figure}

\begin{definition}
\label{def:sign-scaling}
We define the following scalings 
as a function of the dimension $d$ and rescaled time $\tau$
\aligns{
  \text{if } \alpha &< 1/2, &
  T_d(\tau) &= \tau,
  & \phi_d(\tau) &= 
  \left\{\begin{array}{ll}
    d & \text{ if } \tau^2 \leq {\nicefrac{1-c_1}{4c_2}},
    \\
    d\paren{c_1 + 4c_2 \tau^2}{}^{-1} & \text{ otherwise},
  \end{array}
  \right.
  \\
  \text{if } \alpha &= 1/2, &
  T_d(\tau) &= \tfrac{1}{2} d^{\frac{1}{2}\tau},
  & \phi_d(\tau) &= d^{1-\tau}, 
  \quad \text{ where } \tau \in [0,1],
  \\
  \text{if } \alpha &> 1/2, &
  T_d(\tau) &= \tfrac{1}{2}\tau \sqrt{d},
  & 
  \phi_d(\tau) 
  &= \left\{\begin{array}{ll}
    \textstyle
    \hphantom{(} {1+\nicefrac{1}{\tau^2}} & \text{ if } \tau^2 < (2^\alpha-1)^{-1}
    \text{ and } \alpha < 1, \\
    \textstyle
    \paren{1+\nicefrac{1}{\tau^2}}{}^{1/\alpha} & \text{ otherwise},
  \end{array}\right.
}
where $c_1 = 1-\frac{1}{2\alpha}$, $c_2 = \frac{\alpha}{\alpha-1}$.
\end{definition}
While those scalings need not be optimal, %
they match the empirical behavior of the best step-size computed by grid-search, 
as shown in \cref{fig:sign-stepsize-convergence}.
For $\alpha > 1/2$, the step-size is only accurate for~\smash{$\tau^2 \geq 1/(2^\alpha-1)$} or~$\tau \geq 1$ for~$\alpha = 1$.
We justify those estimates in \cref{apx:sign}.

\subsection{Asymptotic behavior}
Using the scalings for $T$ and $\phi$ in \cref{def:sign-scaling}, 
we define the asymptotic rate of sign descent as
\alignn{
  \label{eq:asymptotic-rate-sign}
  r(\tau) = \lim_{d\to\infty} \frac{\Loss_d(T_d(\tau), \phi_d(\tau)) - \Loss_d^*}{\Loss_d(0)-\Loss_d^*}.
  \\[-1.75em]
  \nonumber
}

\begin{theorem}[Scaling for sign descent]
\label{thm:sign-rate}
Given scalings for $T$ and $\phi$ in \cref{def:sign-scaling},
the asymptotic convergence rate of sign descent (\cref{eq:asymptotic-rate-sign}) is, 
with $c_1 = 1-\frac{1}{2\alpha}$, $c_2 = \frac{\alpha}{\alpha-1}$,
\aligns{
  \text{if } \alpha &< 1/2, 
  &
  T_d(\tau) &= \tau,
  &r(\tau)
  &=
  \left\{
    \begin{array}{ll}
      \! 
      2\alpha c_2 
      & \!\! \text{if }\tau^2 \leq {\frac{1-c_1}{4c_2}}
      \\
      \! 
      \frac{
        (c_1 + c_2 4\tau^2)^{2\alpha}
      }{
        4\tau^2 
      }
      & \!\! \text{otherwise}
    \end{array}
  \right.\!\!
  \asym{\tau} 
  \frac{c_2^{2\alpha}}{(2\tau)^{2-4\alpha}},
  \\
  \text{if } \alpha &= 1/2, 
  &
  T_d(\tau) &= \tfrac{1}{2} d^{\frac{1}{2}\tau},
  & r(\tau) &=  1-\tau ,
  \quad \text{ where } \tau \in [0,1],
  \\
  \text{if } \alpha &> 1/2, 
  &
  T_d(\tau) &= \tfrac{1}{2}\tau \sqrt{d},
  &  r(\tau) &\asym{\tau} \frac{1}{1+\zeta(2\alpha) \tau^2}.
  \\[-1.75em]
  \nonumber
}
\end{theorem}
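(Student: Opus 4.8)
The plan is to specialize \cref{prop:loss-of-sd} to the end of a run ($t = T$) under the step-size parameterization $\eta(\phi) = 1/(zT\phi^\alpha)$ introduced after \cref{prop:step-size-range-for-sign}, and then push $d \to \infty$ along the scalings of \cref{def:sign-scaling}. First I would note that at $t = T$ the total displacement equals $T\eta = 1/(z\phi^\alpha)$, so direction $k$ is still in the decreasing regime iff $\delta_k(0) = 1/(zk^\alpha) \geq 1/(z\phi^\alpha)$, i.e.\ iff $k \leq \phi$; hence the cutoff in \cref{prop:loss-of-sd} is exactly $k_* = \lfloor\phi\rfloor$, which is the sense in which $\phi$ ``controls how many directions are decreasing.'' Dividing the resulting loss by the initial loss $\Loss_d(0) = \sum_k \pi_k^2 = H_{d,2\alpha}/H_{d,\alpha}^2$ cancels the normalizer $z^2 = H_{d,\alpha}^2$ and puts the finite-dimensional rate $r_d = (\Loss_d(T,\eta(\phi)))/\Loss_d(0)$ into the clean form
\aligns{
  r_d = \frac{A_d + B_d}{H_{d,2\alpha}},
  \quad
  A_d = \sum_{k=1}^{\lfloor\phi\rfloor}\paren{\frac{1}{k^\alpha} - \frac{1}{\phi^\alpha}}^2,
  \quad
  B_d = \paren{d-\lfloor\phi\rfloor}\frac{1}{4T^2\phi^{2\alpha}},
}
separating the contribution of the decreasing directions ($A_d$) from that of the oscillating ones ($B_d$).

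Next I would reduce everything to the asymptotics of generalized harmonic partial sums. Expanding the square gives $A_d = H_{\lfloor\phi\rfloor,2\alpha} - 2\phi^{-\alpha}H_{\lfloor\phi\rfloor,\alpha} + \lfloor\phi\rfloor\,\phi^{-2\alpha}$, so the entire analysis rests on the three standard regimes, obtained by integral (Euler--Maclaurin) comparison: $H_{n,\beta}\sim n^{1-\beta}/(1-\beta)$ for $\beta<1$, $H_{n,1}\sim\log n$, and $H_{n,\beta}\to\zeta(\beta)$ for $\beta>1$. The three cases of the theorem correspond precisely to whether the denominator $H_{d,2\alpha}$ grows polynomially ($\alpha<1/2$), logarithmically ($\alpha=1/2$), or converges to $\zeta(2\alpha)$ ($\alpha>1/2$), combined with whether $\phi$ grows with $d$ or stays bounded.

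I would treat $\alpha = 1/2$ first, as it is the most transparent: with $\phi = d^{1-\tau}$ and $T = \tfrac12 d^{\tau/2}$ one finds $4T^2\phi^{2\alpha} = d$, so the leading term $H_{\lfloor\phi\rfloor,1}\sim(1-\tau)\log d$ dominates $A_d$ while the cross term and $B_d$ contribute only $O(1)$; since $H_{d,1}\sim\log d$ the ratio tends to $1-\tau$ exactly. For $\alpha<1/2$ I would write $\phi = \psi d$; then $A_d$, $B_d$ and $H_{d,2\alpha}$ are all $\Theta(d^{1-2\alpha})$, the power of $d$ cancels, and $r(\tau)$ becomes an explicit function of $(\psi,\tau)$ whose $\psi$-minimizer is $\psi=(c_1+4c_2\tau^2)^{-1}$ (valid when $\psi\le1$, i.e.\ $\tau^2\ge(1-c_1)/(4c_2)$; otherwise the boundary $\psi=1$, i.e.\ $\phi=d$, applies). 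The boundary case collapses to $(1-2\alpha)\big(\tfrac{1}{1-2\alpha}-\tfrac{2}{1-\alpha}+1\big) = 2\alpha^2/(1-\alpha) = 2\alpha c_2$ (reading $c_2=\alpha/(1-\alpha)>0$, which the constant-rate and the $\asym{\tau}$ branches both require), and substituting the interior minimizer yields the ``otherwise'' formula and its large-$\tau$ equivalent $c_2^{2\alpha}/(2\tau)^{2-4\alpha}$. Finally, for $\alpha>1/2$ the quantity $\phi$ is bounded in $d$, so $A_d\to A_\infty(\phi)$ is a finite sum and $B_d\to 1/(\tau^2\phi^{2\alpha})$ while $H_{d,2\alpha}\to\zeta(2\alpha)$; letting $\tau\to\infty$ forces $\phi=(1+\tau^{-2})^{1/\alpha}\to1$ so that $\lfloor\phi\rfloor=1$, and a one-line computation collapses $A_\infty+1/(\tau^2\phi^{2\alpha})$ to $1/(1+\tau^2)$, matching $1/(1+\zeta(2\alpha)\tau^2)$ to leading order.

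The main obstacle I anticipate is the interaction between the integer cutoff $\lfloor\phi\rfloor$ and the partial-sum asymptotics. In the cases $\alpha\le1/2$ the quantity $\phi$ grows with $d$, so I must control $H_{\lfloor\phi\rfloor,\beta}$ uniformly and carefully track the lower-order terms that survive after the leading powers of $d$ cancel, since these are exactly what produce the nontrivial dependence on $\psi$ and $\tau$ (replacing a floor by its argument incurs errors that must be shown negligible relative to the surviving terms). In the case $\alpha>1/2$ the subtlety is instead that $\asym{\tau}$ refers to $\tau\to\infty$ taken \emph{after} $d\to\infty$, so I must justify iterating the two limits and verify that the integer value $\lfloor\phi\rfloor$ stabilizes at $1$. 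Once these partial-sum estimates are in place, the remaining algebra — the cancellation giving $2\alpha^2/(1-\alpha)$ and the simplification of the interior branch — is routine.
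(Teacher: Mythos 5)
Your proposal is correct and follows essentially the same route as the paper's own proof: the identical decomposition of the normalized loss into decreasing and oscillating contributions with cutoff $\lfloor\phi\rfloor$ (the paper's \cref{prop:sign-starting-point}), the same harmonic-sum asymptotics (\cref{lem:normalizer}), and the same case-by-case treatment --- minimization over the step-size parameter for $\alpha \leq 1/2$, and for $\alpha > 1/2$ the reduction to a single decreasing component with $\lfloor\phi\rfloor = 1$ followed by the $\tau\to\infty$ equivalence between $\frac{1}{\zeta(2\alpha)(1+\tau^2)}$ and $\frac{1}{1+\zeta(2\alpha)\tau^2}$ (\cref{prop:apx-sign-medium,prop:apx-sign-small,prop:apx-sign-large}). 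Your reading of $c_2$ as $\alpha/(1-\alpha)$ matches what the paper's proof actually uses, so no substantive deviation exists.
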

We leave the proofs in \cref{apx:sign}.
The results also show different forms of scaling depending on $\alpha$, 
with a threshold at $\alpha = 1/2$ instead of $1$.
However, the scaling in dimension is flipped compared to GD.
SD needs $t$ to scale with $d$ when $\alpha$ is large, 
which is the regime where GD can make progress with finite $t$.
However, for the case of Zipf-distributed data~($\alpha=1$), 
SD only needs a scaling in $d^{1/2}$ compared to the $d^{1-\varepsilon}$ scaling of GD, 
showing that it achieves better performance for $\varepsilon<1/2$.
We show in \cref{fig:sign-loss-convergence} 
that the asymptotic rates of \cref{thm:gradient-descent} are accurate even for finite $d$.

\begin{figure}[t]
\includegraphics[width=\textwidth]{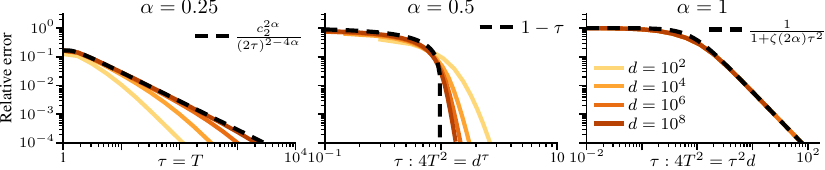}%
\shrinkspace
\caption[]{%
\textbf{Scaling of sign descent on power-law data with exponent $\alpha$ (\cref{thm:sign-rate}).}
The dynamics of sign descent on the linear bigram model 
with data satisfying \cref{ass:conditional-distribution}
converge to our scaling law (\legendDashes) as $d$ grows, 
as described in \cref{thm:sign-rate}.
Achieving a relative error $\varepsilon$
requires no scaling for~$\alpha < 1/2$,
scaling $t$ with \smash{$d^{(1-\varepsilon)/2}$} for $\alpha = 1/2$,
and $t$ with \smash{$d^{1/2}$} for $\alpha > 1/2$.
}
\label{fig:sign-loss-convergence}
\end{figure}

\section{Conclusion}

We have presented scaling laws for gradient descent (GD) and sign descent (SD)
on the linear bigram model as a function of the power law exponent $\alpha$ of the word frequencies.
Rather than hide the dimension dependence in problem specific constants, 
we consider the scaling of running time and dimension as the problem grows in size
to get precise estimates of the scaling.
Our results highlight the benefit of SD
and the need to address ill-conditioning to improve the performance of~GD.

Our results show that the typical neural scaling law $(1/\epsilon)^p$ for some $p$ is specific to the regime $\alpha > 1$. 
This regime may accurately describe the dynamics as we increase width or depth and the training dynamics converge,
but it might miss a large dimension dependence as we scale the vocabulary size.
The scaling we obtain for $\alpha <1$ and $\alpha = 1$ have a different functional form and highlight the dependency on dimension.
For GD on Zipf-distributed data, the scaling of $d^{1-\varepsilon}$ 
shows a non-trivial interplay between the desired error $\varepsilon$ and the dimension.
Our results suggests that longer training might be required when scaling the vocabulary size.
Algorithms that target this dimension dependence, for example by estimating word frequencies~\citep{li2022frequencyaware}, would be an interesting next step.

Our approach however has limitations.
We do not cover the online case, for which the analysis should be extendable using existings tools.
The addition of momentum for sign descent would be more complex but particulary interesting to dampen oscillations.
Handling more complex models such as bilinear models~\citep{mikolov2013word2vec} or the cross-entropy loss would be interesting, 
but it is not obvious how to extend the analysis without a closed-form solution for the training dynamics.
Finally, getting finite-dimensional results by tracking a correction term for finite $d$ would be enlightening, 
as the convergence to the asymptotic regime can sometimes be slow, especially in the case $\alpha = 1$.
\end{outline}

\clearpage

\begin{ack}
This work has received support from the French government, managed by the National Research Agency, 
under the France 2030 program with the reference ``PR[AI]RIE-PSAI'' (ANR-23-IACL-0008).
Frederik Kunstner is supported by a Marie Sklodowska-Curie Fellowship 
from the European Union's Horizon Europe Research and Innovation program under Grant Agreement No. 101210427.
\end{ack}

\section*{References}
\AtNextBibliography{\normalsize}%
\printbibliography[heading=none]%

\clearpage
\appendix
\part*{Supplementary Material}

\newcommand{\optgap}{\Loss_d(t) -\Loss_d^*}
\newcommand{\optgapStart}{\Loss_d(0) -\Loss_d^*}
\newcommand{\switch}[1]{\left\{
  \begin{array}{ll}
    #1
  \end{array}
  \right.
}
The supplementary material is organized as follows.
\begin{itemize}[left=1em]
  \item \cref{apx:experimental_details} gives experimental details 
    and information on how to reproduce the figures.
  \item \cref{apx:related-work} compares our results to standard convergence rates in the literature.
  \item \cref{apx:gd} gives the main results for gradient descent \cref{thm:gradient-descent}.
  \item \cref{apx:sign} gives the main results for sign descent \cref{thm:sign-rate}.
  \item \cref{apx:additional_details} gives the derivation for the time-to-$\varepsilon$ results of \cref{thm:informal-gd}.
\end{itemize}

\section{Experimental details}
\label{apx:experimental_details}

This section goes over the technical details of the experiments 
needed to reproduce the figures. 

\subsection{Computational complexity}
\label{apx:computational_complexity}
We use $d$ to denote the size of the vocabulary, but the number of parameters $\mW$ is $d^2$
as we have to learn the conditional probability table \smash{$\pi_{k\cond j}$}.
As the number of iterations $t$ has to scale with dimension, 
the problem scales in $d^3$, which becomes prohibitive fast.
To circumvent this issue, we use the fact that 
the training dynamics of gradient descent 
and sign descent on data following \cref{ass:sign-dynamics}
can be simulated in $O(d)$.
The error after $t$ iterations can then be computed in closed-form if initialized at~$0$,
making it possible to compute the loss after $t$ steps without computing the intermediate steps.

\begin{proposition}[Reduction of the dynamics for gradient descent]
\label{prop:simplified-dynamics-gd}
Under \cref{ass:conditional-distribution}, 
the dynamics of gradient descent with step-size~$1/\pi_1$ can be computed in $O(d)$ as 
\aligns{
  r_d(t) 
  \coloneqq
  \frac{\Loss_d(t) - \Loss_d^*}{\Loss_d(0)-\Loss_d^*}
  = 
  \frac{1}{\sum_{k=1}^d k^{-\alpha}} 
  \sum_{k=1}^d \frac{1}{k^\alpha} \paren{1-\frac{1}{k^\alpha}}^{2t}.
}
\end{proposition}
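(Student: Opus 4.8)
The plan is to obtain the claimed closed form directly from the closed-form gradient descent dynamics in \cref{eq:dynamics}, specialized to the bigram problem via \cref{prop:eigenvalues}, and then to observe that the resulting expression is a sum of $d$ terms each computable in constant time. This is essentially a bookkeeping computation whose only nontrivial ingredient is \cref{ass:conditional-distribution}.

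First I would write the optimality gap in the reduced coordinates. Since the reduced quadratic has minimum value $0$, \cref{eq:dynamics} gives $\Loss_d(t) - \Loss_d^* = \tfrac{1}{2}\sum_{i,j}\lambda_{ij}(1-\eta\lambda_{ij})^{2t}\delta_{ij}(0)^2$, where the index runs over the $d^2$ eigencomponents. Substituting the eigenvalues $\lambda_{ij} = \pi_i$ and distances $\delta_{ij}(0) = \pi_{j\cond i}$ from \cref{prop:eigenvalues}, together with the step-size $\eta = 1/\pi_1$, the contraction factor $(1-\pi_i/\pi_1)^{2t}$ depends only on $i$, so the double sum factorizes as $\tfrac{1}{2}\sum_{i}\pi_i(1-\pi_i/\pi_1)^{2t}\big(\sum_j \pi_{j\cond i}^2\big)$.

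The key step is to invoke \cref{ass:conditional-distribution}: since the sorted conditional frequencies follow the same power law for every $i$, the inner sum $S \coloneqq \sum_j \pi_{j\cond i}^2 = H_{d,2\alpha}/H_{d,\alpha}^2$ is independent of $i$. Forming the ratio $r_d(t)$ then cancels both the factor $\tfrac{1}{2}$ and the constant $S$; in the denominator I use $\sum_i \pi_i = 1$, and in the numerator the normalization $\pi_i = 1/(H_{d,\alpha}\, i^\alpha)$ gives $\pi_i/\pi_1 = 1/i^\alpha$. This leaves exactly $r_d(t) = \tfrac{1}{H_{d,\alpha}}\sum_{k=1}^d k^{-\alpha}(1-k^{-\alpha})^{2t}$, as claimed.

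There is no real obstacle here beyond the algebra; the content lies entirely in recognizing that \cref{ass:conditional-distribution} makes $\sum_j \pi_{j\cond i}^2$ constant in $i$, which is what allows the double sum to collapse and the conditional statistics to drop out of the relative loss. The $O(d)$ claim is then immediate: the closed form is a sum of $d$ summands, each evaluable in $O(1)$ (the power $(1-k^{-\alpha})^{2t}$ costs $O(\log t)$, or $O(1)$ under the standard arithmetic model), so the loss at step $t$ is obtained without ever forming the $d\times d$ iterate or simulating the $t = \Theta(d^\alpha)$-scale iterations that would cost $O(d^3)$.
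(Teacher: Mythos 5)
Your proof is correct and follows essentially the same route as the paper's: substitute the eigenvalues $\lambda_{ij}=\pi_i$ and distances $\delta_{ij}(0)=\pi_{j\cond i}$ from \cref{prop:eigenvalues} into the closed-form dynamics of \cref{eq:dynamics}, use \cref{ass:conditional-distribution} to note that $\sum_j \pi_{j\cond i}^2$ is the same for every $i$ (the paper cancels this common factor between numerator and denominator exactly as you do), and simplify using $\pi_i/\pi_1 = 1/i^\alpha$. Your explicit remark that each summand is evaluable in $O(1)$ is a small addition the paper leaves implicit, but the mathematical content is identical.
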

\begin{proof}
We use the the dynamics 
using the eigendecomposition notation
presented in \cref{sec:simplified_model},
\aligns{
  r_d(t) = \Loss_d(t) - \Loss_d^*
  = \sum_{i=1}^d \sum_{j=1}^d \lambda_{ij}\delta_{ij}(t)^2,
  &&\text{ and }&&
  \delta_{ij}(t) =
  = (1-\lambda_{ij})^t \delta_{ij}(0).
}
Using \cref{ass:conditional-distribution}
gives that $\lambda_{ij}$ is independent of $j$
and $\delta_{ij}$ is independent of $i$ as 
\aligns{
  \lambda_{ij} = \pi_i = \frac{1}{zi^{\alpha}}
  &&
  \delta_{ij}(0) = \pi_{\rho_i(j), i} = \frac{1}{zj^\alpha}
  &&
  \text{ where} &&
  z = \sum_{k=1}^d \frac{1}{k^\alpha}.
} 
Plugging in those together 
and using that the step-size is $\eta = \pi_1 = 1/z$
gives
\aligns{
  \frac{
    \Loss_d(t)-\Loss_d^* 
  }{
    \Loss_d(0)-\Loss_d^*
  }
  &= \frac{
    \sum_{i=1}^d \sum_{j=1}^d \frac{1}{z i^\alpha} \paren{1-\frac{1}{i^\alpha}}^{2t} \delta_{ij}(0)^2
  }{
    \sum_{i=1}^d \sum_{j=1}^d \frac{1}{z i^\alpha} \delta_{ij}(0)^2
  },
  \\
  &= \frac{
    \sum_{i=1}^d \frac{1}{i^\alpha} \paren{1-\frac{1}{i^\alpha}}^{2t} \sum_{j=1}^d  \delta_{ij}(0)^2
  }{
    \sum_{i=1}^d \frac{1}{i^\alpha} \sum_{j=1}^d \lambda_{ij}\delta_{ij}(0)^2
  }
  = \frac{
    \sum_{i=1}^d \frac{1}{i^\alpha} \paren{1-\frac{1}{i^\alpha}}^{2t} 
  }{
    \sum_{i=1}^d \frac{1}{i^\alpha} 
  }.
  \tag*{\qedhere}
}
\end{proof}

\begin{proposition}[Reduction of the dynamics for sign descent]
\label{prop:simplified-dynamics-sd}
Under \cref{ass:conditional-distribution}, 
the simplified dynamics of sign descent (\cref{ass:sign-dynamics})
with step-size $\eta(T, \phi) = 1/zT\phi^\alpha$ following the reparameterization of \cref{prop:step-size-range-for-sign}
where $z = \sum\!{}_{k=1}^d k^{-\alpha}$
can be computed in $O(d)$ as 
\aligns{
  r_d(T, \phi) \coloneqq
  \frac{\Loss_d(T, \eta(T, \phi)) - \Loss_d^*}{\Loss_d(0)-\Loss_d^*}
  = 
  \frac{1}{\sum_{k=1}^d k^{2\alpha}} 
  \sum_{k=1}^d 
  \paren{\switch{
    \frac{1}{k^\alpha} - \frac{1}{\phi^\alpha} & \text{ if } \abs{\delta_{ij}(T-1)} - \eta \geq 0,
    \\
    \frac{1}{2\phi^\alpha}  & \text{ otherwise},
  }
  }^2,
}
\end{proposition}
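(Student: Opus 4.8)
The plan is to mirror the proof of \cref{prop:simplified-dynamics-gd}, replacing the linear gradient descent trajectory by the sign descent trajectory, and to build directly on \cref{prop:loss-of-sd} so that the only real work is substitution. \cref{prop:loss-of-sd} already supplies the numerator: under \cref{ass:conditional-distribution} the double sum $\sum_{i,j}\lambda_{ij}\delta_{ij}(T)^2$ collapses to the single sum $\Loss_d(T)-\Loss_d^* = \sum_{k=1}^{k_*}(\delta_k(0)-T\eta)^2 + \sum_{k=k_*+1}^d (\eta/2)^2$, because $\lambda_{ij}=\pi_i$ is independent of $j$, the sorted trajectory $\delta_{i\rho_i(j)}(T)$ is independent of $i$, and $\sum_i \pi_i = 1$. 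Evaluating the same expression at $T=0$ gives the denominator $\Loss_d(0)-\Loss_d^* = \sum_{k=1}^d \delta_k(0)^2$.

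Next I would substitute the power-law values $\delta_k(0)=1/(zk^\alpha)$ with $z=\sum_{k=1}^d k^{-\alpha}$, together with the reparameterized step-size $\eta=1/(zT\phi^\alpha)$, so that $T\eta = 1/(z\phi^\alpha)$. A decreasing coordinate then contributes $\delta_k(0)-T\eta = z^{-1}(k^{-\alpha}-\phi^{-\alpha})$ and an oscillating coordinate contributes $\eta/2 = z^{-1}(2T\phi^\alpha)^{-1}$, while the denominator is $z^{-2}\sum_{k=1}^d k^{-2\alpha}$. The common factor $z^{-2}$ cancels between numerator and denominator, leaving the ratio of the sums of $k^{-\alpha}-\phi^{-\alpha}$ and the oscillatory constant over $\sum_k k^{-2\alpha}$, which is the claimed closed form.

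It then remains to pin down $k_*$ and argue the $O(d)$ complexity. The regime boundary from \cref{ass:sign-dynamics} is $\delta_k(0)-T\eta \ge 0$, which reads $k^{-\alpha}\ge \phi^{-\alpha}$, i.e.\ $k\le\phi$; hence exactly the first $\lfloor\phi\rfloor$ sorted coordinates are still decreasing, explaining why the reparameterization $\phi$ of \cref{prop:step-size-range-for-sign} counts the decreasing directions. The closed form is then a single sum of $d$ terms, each evaluable in constant time once $\lfloor\phi\rfloor$ is located, so the loss after $T$ sign descent steps is obtained in $O(d)$ without ever forming the $d\times d$ iterate or simulating the $T$ intermediate updates. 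Overall the proof is almost entirely substitution: the genuinely non-obvious ingredient, that the nonlinearity of sign descent still permits collapsing the double sum to a single one, is already delivered by \cref{prop:loss-of-sd}, so the only places to be careful are the regime threshold (an off-by-one in $k_*$ or the boundary case $k=\phi$ would misplace the split between decreasing and oscillating contributions) and the cancellation of the normalizer $z$, which is the one other point where a constant could go astray.
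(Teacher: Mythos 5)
Your proof is correct and follows essentially the same route as the paper's: collapse the double sum over $(i,j)$ to a single sum using $\lambda_{ij}=\pi_i$, $\sum_{i}\pi_i=1$, and the $i$-independence of the sorted trajectories (which you import from \cref{prop:loss-of-sd}, while the paper re-invokes the derivation of \cref{prop:simplified-dynamics-gd}), then substitute the power-law values and the reparameterized step-size, with the cancellation of $z$ and the identification $k_*=\floor{\phi}$ handled exactly as intended. Note that your substitution yields the oscillating term $1/(2T\phi^\alpha)$ and normalizer $\sum_{k} k^{-2\alpha}$, which is the correct form (consistent with \cref{prop:sign-starting-point} in the appendix); the proposition as printed contains typos ($k^{2\alpha}$ in the normalizer and $1/(2\phi^\alpha)$ in the oscillatory branch), so your derivation should be read as confirming the intended statement rather than the literal one.
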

\begin{proof}
Using the same derivation as above for \cref{prop:simplified-dynamics-gd}
but using the update dynamics assumed in \cref{ass:sign-dynamics}.
Note that those dynamics imply that $\delta_{ij}(T)$ is independent of $i$.
Writing~$\Delta_j = \delta_{ij}(T)$ for any $i$ and using that $\sum_{i=1}^d \pi_i = 1$, we have
\aligns{
  \frac{
    \Loss_d(T)-\Loss_d^* 
  }{
    \Loss_d(0)-\Loss_d^*
  }
  &= \frac{
    \sum_{i=1}^d \sum_{j=1}^d \lambda_{ij}\delta_{ij}(T)^2
  }{
    \sum_{i=1}^d \sum_{j=1}^d \lambda_{ij}\delta_{ij}(0)^2
  }
  = \frac{
    \sum_{i=1}^d \sum_{j=1}^d \pi_i \Delta_{j}(T)^2
  }{
    \sum_{i=1}^d \sum_{j=1}^d \pi_{i} \Delta_{j}(0)^2
  }
  = \frac{
    \sum_{j=1}^d \Delta_{j}(T)^2
  }{
    \sum_{j=1}^d \Delta_{j}(0)^2
  }.
}
Expanding $\Delta_j(T)$ using \cref{ass:sign-dynamics} gives the result.
\end{proof}

For the real data experiments in \cref{fig:actual-vs-predicted-rates},
the dynamics cannot be reduced to a $O(d)$. 
We still use the fact that the dynamics can be computed in closed-form 
to avoid running $t$ steps of gradient/sign descent.
For sign descent, we use the following equation
for the loss after $t$ steps of sign descent 
(not the simpler model of \cref{ass:conditional-distribution})
by computing the point at which the loss oscillates.
\begin{proposition}
Under the dynamics of sign descent with step-size $\eta$, 
\aligns{
  \delta_{ij}(t+1) = \delta_{ij}(t) - \eta \sign(\delta_{ij}(t)),
}
the distance after $t$ steps is given by 
\aligns{
  \delta_{ij}(t) = 
  \switch{
    \delta_{ij}(0) - \eta t & \text{ if } t \leq T_{\mathrm{switch}},\\
    c_{ij} & \text{ if } t - T_{\mathrm{switch}} \text{ is odd},\\
    c_{ij}-\eta & \text{ if } t - T_{\mathrm{switch}} \text{ is even},
  }
  &&\text{ where }&&
  \begin{aligned}
  T_{\mathrm{switch}} &= \floor{\delta_{ij}(0) / \eta},\\
  c_{ij} &= \delta_{ij}(0) - T_{\mathrm{switch}}\eta.
  \end{aligned}
}
\end{proposition}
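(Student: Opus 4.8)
The plan is to prove the claim by a direct two-phase induction on $t$, tracking the sign of $\delta_{ij}(t)$. In the context of the bigram problem the initial distances are probabilities, so $\delta_{ij}(0) = \pi_{j\cond i} \geq 0$; more generally, the map $\delta \mapsto \delta - \eta\sign(\delta)$ is odd, so it suffices to analyze $\delta_{ij}(0) \geq 0$ and recover the negative case by negating every iterate. Throughout I write $\delta(t) = \delta_{ij}(t)$, $T = T_{\mathrm{switch}} = \floor{\delta(0)/\eta}$, and $c = c_{ij} = \delta(0) - T\eta$, and I record the defining fact that $c \in [0,\eta)$.

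For the first phase ($0 \leq t \leq T$) I would show by induction that $\delta(t) = \delta(0) - \eta t$, which stays positive for $t < T$. The inductive step only needs that $\sign(\delta(t)) = 1$ whenever $\delta(t) > 0$: for $t \leq T-1$ we have $\delta(0) - \eta t \geq \delta(0) - \eta(T-1) = c + \eta > 0$, so the update subtracts exactly $\eta$ and the linear formula propagates. Evaluating at $t = T$ gives $\delta(T) = \delta(0) - \eta T = c \in [0,\eta)$, matching the first branch.

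For the second phase ($t > T$), starting from $\delta(T) = c \in (0,\eta)$, I would argue the iterate oscillates with period two. One step sends $c \mapsto c - \eta \in (-\eta,0)$ (sign $+1$), and the next sends $c - \eta \mapsto (c-\eta)+\eta = c$ (sign $-1$); an induction on $t - T$ then pins $\delta(t)$ to one of $c$ or $c-\eta$ according to the parity of $t-T$, which gives the two oscillatory branches.

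The only delicate point — and hence the main obstacle — is the boundary bookkeeping at the switch, together with the degenerate case $c = 0$, i.e. when $\delta(0)/\eta$ is an integer. There $\delta(T) = 0$, $\sign(0) = 0$, and the iterate is absorbed at $0$ rather than oscillating, so the clean period-two statement holds only for $c > 0$; I would either assume $c > 0$ (generic, and harmless since only finitely many directions can land exactly on zero, as noted in the footnote to \cref{ass:sign-dynamics}, so their contribution vanishes as $d \to \infty$) or record $0$ as a separate subcase. I would also verify the even/odd labelling of the two oscillation values directly against the recursion at $t = T$, since unrolling a few steps shows $\delta(t) = c$ when $t-T$ is even and $\delta(t) = c - \eta$ when $t-T$ is odd — this is precisely where an off-by-one in the parity is easiest to introduce and should be checked carefully.
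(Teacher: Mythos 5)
The paper states this proposition \emph{without proof} (it appears bare in the appendix on computational complexity), so there is no argument on the paper's side to compare against; your two-phase induction is the natural proof and it is sound. The reduction to $\delta_{ij}(0) \geq 0$ by oddness of the update map, the positivity argument in the decreasing phase (for $t \leq T_{\mathrm{switch}}-1$ one has $\delta_{ij}(t) \geq c_{ij} + \eta > 0$, so each step subtracts exactly $\eta$), the period-two oscillation afterwards, and the explicit handling of the absorbing case $c_{ij}=0$ (where $\sign(0)=0$ freezes the iterate at zero, a case the paper's statement silently ignores) are all correct and complete.

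The one substantive finding of your check is that the proposition as printed has its parity labels swapped, and your labelling is the right one. Starting from $\delta_{ij}(T_{\mathrm{switch}}) = c_{ij} \in (0,\eta)$, the very next step, $t - T_{\mathrm{switch}} = 1$ (odd), gives $\delta_{ij}(t) = c_{ij} - \eta < 0$, and the step after, $t - T_{\mathrm{switch}} = 2$ (even), returns to $c_{ij}$. Concretely, with $\delta_{ij}(0) = 2.5\,\eta$ one gets $T_{\mathrm{switch}} = 2$, $c_{ij} = 0.5\,\eta$, and the iterates are $2.5\,\eta,\ 1.5\,\eta,\ 0.5\,\eta,\ -0.5\,\eta,\ 0.5\,\eta,\ \ldots$, so odd $t - T_{\mathrm{switch}}$ yields $c_{ij}-\eta$ and even yields $c_{ij}$ --- the opposite of the displayed statement, but consistent with the first branch, which already assigns $c_{ij}$ at $t - T_{\mathrm{switch}} = 0$ (even). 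The swap does not change the set $\{c_{ij},\, c_{ij}-\eta\}$ of oscillation values, but it does matter for which value is attained at a given parity; since the experiments in the appendix evaluate the loss only at even iterations precisely because of this oscillation, the corrected labelling is the one that should be used there.
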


\subsection{Additional details about the figures}

\textbf{\cref{fig:motivation}}
shows the dynamics of gradient descent on \cref{prob:bigram} 
on data satisfying \cref{ass:conditional-distribution}.

\textbf{\cref{fig:actual-vs-predicted-rates}}
shows the dynamics on real data on the OpenWebText dataset~\citep{Gokaslan2019OpenWeb}.
Using the SentencePiece~\citep{kudo2018sentencepiece} implementation of BPE~\cite{senrich2016bpe}, 
we train tokenizers with vocabulary sizes of 
$1\,000$, $3\,612$, $10\,000$ and $31\,622$ tokens
on a the first $2\,000\,000$ entries of the dataset with a maximum sentence length of $16\,768$.
We compute the frequencies and conditional frequency tables 
for each vocabulary size using the entire dataset.
We use the closed form formulas for the loss after $t$ steps 
using $O(d^2)$ computation detailed in the previous section 
to avoid having to run gradient and sign descent on those large models.

Gradient descent uses the empirically-derived step-size of $1/\pi_1$. 
For sign descent, for a given time horizon $T$,
we optimize over the step-size numerically. 
Because the loss after $T$ steps as a function of the step-size is unimodal,
we use the default bounded bracketing method in scipy~\citep[][\href{https://docs.scipy.org/doc/scipy/reference/generated/scipy.optimize.minimize_scalar.html}{\texttt{minimize\_scalar}}]{2020SciPy-NMeth}
starting with the interval $[\eta_{\min}/d, d\eta_{\max}]$
where $\eta_{\min}, \eta_{\max}$ are the bounds derived in \cref{prop:step-size-range-for-sign}.
The optimal step-size can vary drastically if it is computed on even or odd iterations
as the loss oscillates.
To avoid this issue, we only show even iterations.

\textbf{\cref{fig:assumption-is-reasonable}}
shows the frequencies computed as for \cref{fig:actual-vs-predicted-rates}
for the largest vocabulary size, $d = 31\,622$.

\textbf{The rightmost plot of \cref{fig:sign-assumption}} shows the simplified dynamics of sign descent.

\textbf{\cref{fig:gd-rates}, \cref{fig:sign-stepsize-convergence} and \cref{fig:sign-loss-convergence}}
show the convergence of the loss in $d$ dimension computed using 
the equations in \cref{apx:computational_complexity}.
For sign descent, the best step-size 
is obtained by grid search.
We know the optimal step-size satisfies~$\phi \in [1, d]$ (\cref{prop:step-size-range-for-sign}),
so let~$\phi = d^x$ where $x$ 
comes from a logarithmically spaced grid-search on $x$ from $-10$ to $0$, 
taking every $1/32$th powers;
\aligns{
  \phi \in \{d^x : x \in \{10^{-10}, 10^{-10+\frac{1}{32}}, 10^{-10+\frac{2}{32}}, \ldots, 10^0\}\}.
}

\clearpage
\section{Comparison with worst-case rates}
\label{apx:related-work}

In this section, we compare our rates against results obtained using classical analyses 
to highlight the benefit of the asymptotic analysis in capturing the dependence on dimension.
Our goal is not to imply those bounds are poor;
each of the work cited below studied a specific problem
and the assumptions were selected to highlight the impact of the condition number, non-convexity, variance, or other issue.
However, due to their worst-case generality, 
existing results do not capture the dimension dependence on the problem of the linear bigram problem (\cref{prob:bigram})
with Zipf-distributed frequencies (\cref{ass:conditional-distribution})
and predict worse behavior than actually observed.

In this section, we focus on Zipf-distributed data ($\alpha = 1$)
as it is the most relevant to text data. %
To simplify notation, 
we assume that the conditional frequencies directly 
follow a power-law $\pi_{k\cond i} \propto 1/k$, 
instead of assuming that there exists a reordering $\rho_i$ such that \smash{$\pi_{\rho_i(k)\cond i} \propto 1/k$}
as in \cref{ass:conditional-distribution}.
This reordering does not affect the dynamics of the loss and can be ignored without loss of generality.

\subsection{Standard smooth, (strongly-)convex rates.}
Classical results in smooth, convex optimization are derived
under the assumption that the objective function $\Loss_d$ is $L$-smooth and $\mu$-strongly convex with $\mu \geq 0$.
We write the function rates in matrix form for the loss $\Loss_d$ defined in \cref{prob:bigram}, 
but this could equivalently be transformed to a vector form using and $\norm*{\vx - \vx_*}^2_2 = \norm*{\mW-\mW_*}^2$ if
$\vx = \mathrm{vec}(\mW)$ and $\vx_* = \mathrm{vec}(\mW_*)$ where $\mathrm{vec}$ stacks the columns of $\mW$ as a single vector.
For a twice-differentiable function, 
this is equivalent to assuming that the eigenvalues of the Hessian 
are bounded by $\mu \leq \lambda_{ij} \leq L$ for all $i,j\in [d]$ at every possible input.
We compare against simple forms available in this setting
(\citet[][Cor.~2.1.2]{nesterov2018intro}, \citet[][Eq.~9.18]{Boyd_Vandenberghe_2004}).
While is possible to slightly improve the constants in these bounds, 
these constants do not meaningfully affect the asymptotic behavior as $d$ grows.
\aligns{
  \optgap \leq \frac{2L\norm{\mW_0 - \mW_*}^2_F}{t},
  &&
  \optgap \leq \paren{1-\frac{\mu}{L}}^{t} \paren{\optgapStart}.
}
To better compare these rates with our results, we normalize them by $\optgapStart$,
\aligns{
  \frac{\optgap}{\optgapStart}
  \leq 
  \frac{L\norm{\mW_0 - \mW_*}^2_F}{t\paren{\optgapStart}}
  \eqcolon
  r^{\mathrm{sub}}_d(t),
  &&
  \frac{\optgap}{\optgapStart} \leq \paren{1-\frac{\mu}{L}}^{t}
  \eqcolon
  r^{\mathrm{lin}}_d(t).
}
\begin{proposition}[Values of the constants]
On \cref{prob:bigram} with frequencies following a power-law with $\alpha = 1$ (\cref{ass:conditional-distribution})
initialized at $\mW_0 = 0$, 
the smooth convex sublinear rate $r^{\mathrm{sub}}_d(t)$
and the smooth strongly-convex linear rate $r^{\mathrm{lin}}_d(t)$
are asymptotically equivalent to 
\aligns{
  r^{\mathrm{sub}}_d(t)
  \asym{d}
  2
  \frac{d}{\log(d)}
  \frac{1}{t},
  &&
  r^{\mathrm{lin}}_d(t)
  \asym{d}
  \paren{1-\frac{1}{d}}^{t}.
}
\label{prop:constants-for-gd}
\end{proposition}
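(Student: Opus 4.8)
The plan is to substitute the explicit spectral quantities furnished by \cref{prop:eigenvalues} into the two rate expressions and simplify, using only the harmonic-number asymptotic $H_{d,1}=\sum_{k=1}^d 1/k \asym{d}\log d$ as external input. The structural fact that makes this clean is that, under the simplified form of \cref{ass:conditional-distribution} adopted in this section ($\pi_{k\cond i}=\pi_k = 1/(H_{d,1}\,k)$ for $\alpha=1$), the eigenvalue $\lambda_{ij}=\pi_i$ depends only on the first index while the initial distance $\delta_{ij}(0)=\pi_{j\cond i}=1/(H_{d,1}\,j)$ depends only on the second. Consequently every double sum factors into a product of one-dimensional sums.

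For the linear rate, nothing beyond the spectrum is needed. Since the Hessian eigenvalues are exactly the sorted frequencies $\{\pi_i\}$ (each with multiplicity $d$) by \cref{prop:eigenvalues}, the smoothness and strong-convexity constants are the extreme frequencies $L=\pi_1=1/H_{d,1}$ and $\mu=\pi_d=1/(H_{d,1}\,d)$. Hence $\mu/L=\pi_d/\pi_1=1/d^\alpha=1/d$ at $\alpha=1$, so $r^{\mathrm{lin}}_d(t)=(1-\mu/L)^t=(1-1/d)^t$ holds \emph{exactly} and the claimed asymptotic equivalence is immediate.

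For the sublinear rate I would evaluate the three ingredients through the eigendecomposition. Writing the initial gap as $\optgapStart = \tfrac12\sum_{i,j}\lambda_{ij}\delta_{ij}(0)^2$ and the initial distance (from $\mW_0=0$) as $\norm{\mW_0-\mW_*}^2_F = \sum_{i,j}\delta_{ij}(0)^2$, the factorization and $\sum_i\pi_i=1$ give $\norm{\mW_0-\mW_*}^2_F = d\,H_{d,2}/H_{d,1}^2$ and $\optgapStart = H_{d,2}/(2H_{d,1}^2)$. Substituting these together with $L=1/H_{d,1}$ gives
\aligns{
  r^{\mathrm{sub}}_d(t)
  = \frac{L\,\norm{\mW_0-\mW_*}^2_F}{t\,(\optgapStart)}
  = \frac{(1/H_{d,1})\,(d\,H_{d,2}/H_{d,1}^2)}{t\,H_{d,2}/(2H_{d,1}^2)}
  = \frac{2d}{H_{d,1}\,t}
  \asym{d}
  \frac{2d}{\log(d)\,t},
}
where the factor $H_{d,2}$ and the normalizer $H_{d,1}^2$ cancel exactly and the final step uses $H_{d,1}\asym{d}\log d$. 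The factor $2$ is accounted for by the $\tfrac12$ carried by the loss gap in the denominator but not by the squared distance in the numerator.

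The computation is essentially exact arithmetic, so the only genuine asymptotic input is the harmonic estimate $H_{d,1}=\log d+\gamma+o(1)$; notably the convergent tail $H_{d,2}\to\zeta(2)$ cancels before any limit is taken, so no asymptotics are needed for it. The one point requiring care is the bookkeeping that keeps the two power laws on separate indices so the double sums split; once that is set up, the result follows by plugging in and simplifying.
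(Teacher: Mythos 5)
Your proof is correct and takes essentially the same route as the paper's: identify $L=\pi_1=1/H_{d,1}$ and $\mu=\pi_d=1/(dH_{d,1})$ from \cref{prop:eigenvalues} (making the linear rate exact), factorize the double sums so that $\norm{\mW_0-\mW_*}_F^2$ is a dimension factor times the initial loss gap with the $H_{d,2}$ terms cancelling, and finish with $H_{d,1}\asym{d}\log d$. The only difference is bookkeeping of the factor $2$: you retain the $\tfrac{1}{2}$ in the loss gap and use the paper's literal ($2$-free) definition of $r^{\mathrm{sub}}_d$, whereas the paper drops the $\tfrac{1}{2}$ and implicitly carries the $2$ from the unnormalized bound — both conventions land on the stated asymptotic.
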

\begin{proof}
The proof follow from substituting the constants with the values
\aligns{
  \mu = \frac{1}{dz},
  &&
  L = \frac{1}{z},
  &&
  \norm{\mW_0 - \mW^*}^2_F
  =
  d \paren{\Loss_d(\mW_0) - \Loss_d^*}.
}
where $z = \sum_{k=1}^d 1/k \asym{d} \log(d)$.
The eigenvalues are $\lambda_{ij} = \pi_i = \nicefrac{1}{zi}$ after normalization, 
giving $L=\nicefrac{1}{z}$ and $\mu=\nicefrac{1}{zd}$.
Using that $\delta_{ij}(0) = 1/zj$ gives
the loss and distance at initialization,
\aligns{
  {\Loss_d(\mW_0) - \Loss_d^*} 
  &= \sum_{i=1}^d\sum_{j=1}^d \lambda_{ij} \delta_{ij}(0)^2 
  = \sum_{i=1}^d \pi_i \sum_{j=1}^d \pi_{j\cond i}^2 
  = \sum_{j=1}^d \paren{\frac{1}{zj}}^2,
  \\
  \norm{\mW_0-\mW_*}^2 
  &= \sum_{i=1}^d \sum_{j=1}^d \delta_{ij}(0)^2
  = d \sum_{j=1}^d \paren{\frac{1}{zj}}^2
  = d \paren{\Loss_d(\mW_0) - \Loss_d^*}.
  \tag*{\qedhere}
}
\end{proof}

Both rates struggle to predict the progress in ``early'' iterations, when $t$ is much smaller than $d$.
The sublinear rate requires a scaling $t \propto \nicefrac{d}{\log(d)}$ while the linear rate predicts $t \propto d$.
Neither captures the progress that can be made by running $t = \smash{d^{1/2}}$ iterations, 
which reaches an error of $\varepsilon=1/2$.
Instead, both rates predict no progress.
We visualize the given rates in \cref{fig:compare-rate}
after rescaling the number of steps 
to our normalized time $\tau = \log(t)/\log(d)$.
The linear and sublinear rates are not converging to $r(\tau) = 1-\tau$.
Instead, they exhibit a sharper and sharper transition 
between not predicting any progress for $\tau < 1$
($r(\tau) \approx 1$ or $r(\tau) > 1$)
and that the problem is solved if $\tau > 1$.

\begin{figure}
\includegraphics{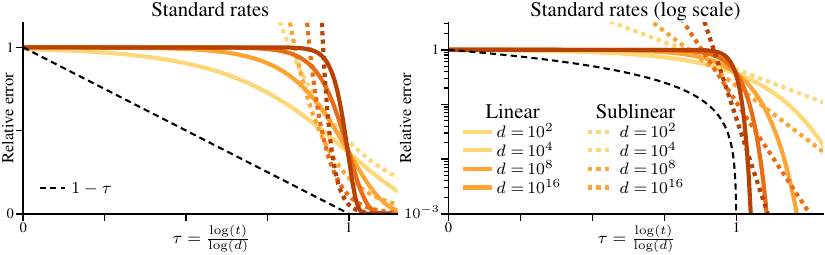}
\caption{\textbf{Standard convergence rates don't capture the scaling in dimension.}}
\label{fig:compare-rate}
\end{figure}

\subsection{Rates for sign descent}

Analyses on sign-like methods in the literature 
typically target more complex algorithms such as RMSProp~\citep{tieleman2012rmsprop} or AdaGrad~\citep{duchi11adaptive} for \citet{das2024towards,liu2025adagradanisotropic},
or consider more general problems including non-convex functions for \citet{bernstein2018signsgd,safaryan2021ssd}.
We are not aware of existing analyses that specifically target sign descent on diagonal quadratic problems such as \cref{prob:bigram}.
This makes a direct comparison difficult.
It might be that the rates described in those papers for the chosen problem setting or algorithm are tight.
However, our message is that the resulting rates are too pessimistic 
even for a problem as simple as \cref{prob:bigram} and suggest runtimes for sign descent 
that are off by a factor depending on the dimension.

The main difficulty in studying sign descent and sign-like methods more generally 
is the strong dependence on the coordinate system used.
For \cref{prob:bigram} the dynamics perfectly separate along coordinates 
which makes it possible to derive a closed form for the dynamics.
Other works typically rely on assumptions on the Hessian 
that quantify how close to diagonal it is. 
For example, bound the Hessian with a diagonal matrix $\mL$, 
$\mH \preceq \mL$ in Loewner ordering, 
and obtain rates that depend on the trace of $\mL$
\citep[e.g.,][]{bernstein2018signsgd,liu2025adagradanisotropic}. 
For \cref{prob:bigram}, the Hessian is diagonal and made of $d$ diagonal copies
of $\mX^\mtop \mX/n = \Diag([\pi_1, ..., \pi_d])$, thus $\Tr(\mL) = \Tr(\nabla^2 \Loss_d(\mW)) = d$.

\textbf{Anisotropic smoothness and AdaGrad.}
Using this assumption, \citet[][Theorem 4.1]{liu2020transformers} show the following convergence rate for AdaGrad.
To simplify their results and show the rate in its best light,
we assume there is no noise in the gradient~($\norm{\vsigma}_1 = 0$ in their notation),
that AdaGrad is run with the parameter $\epsilon = 0$, 
that the algorithm is run with projections onto the constrained set~$\setW = \{\mW : \norm{\mW}_\infty \leq \pi_1 \}$
and that we initialize at $\mW = 0$.
\aligns{
  \Loss_d(t) - \Loss_d^* \leq \frac{\Tr(\mL) \pi_1}{T}.
}
Normalizing the loss 
and simplifying the constants 
using the same approach as in \cref{prop:constants-for-gd}
gives the following asymptotic upper bound
\aligns{
  \frac{\Loss_d(t) - \Loss_d^*}{\Loss_d(0)-\Loss_d^*} \leq 
  r^{\mathrm{Adagrad}}_d(t) \coloneq
  \frac{\Tr(\mL)\pi_1}{T \paren{\Loss_d(0)-\Loss_d^*}}
  \asym{d}
  \frac{d \log(d)}{T}\frac{6}{\pi^2}.
}
Although we might expect Adagrad to outperform sign descent as it uses decreasing step-sizes to avoid the oscillations,
this rate estimate that the number of iterations should scale with $d\log(d)$
instead of the scaling of $\sqrt{d}$ we find for sign descent. 

\textbf{Preconditioning effect of Adam.}
\citet{das2024towards}
study RMSProp, or Adam without momentum~($\beta_1=0$) but with momentum on the moving average of the squared gradient.
They use high-probability arguments to handle the dynamics of the preconditioner and random initialization.
Their rate shows that Adam can perform better on diagonal quadratics 
if the condition number scales worse than linearly with the dimensionality, 
by replacing the condition number $\kappa$ 
with  $\kappa_{\mathrm{Adam}} = \min\{d_{\mW}+1,\kappa\}$
where $d_{\mW}$ is the dimensionality of $\mW$.
Assuming that their bound holds with probability $1$ with $\mW_0=0$ and ignoring logarithmic factors in $d$ and $\epsilon$, 
their rate for diagonal quadratics is~\citep[][Thm. 2]{das2024towards} 
\aligns{
  \Loss_d(t) - \Loss_d^* \leq \frac{\epsilon^2}{2},
  &&\text{ after }&&
  t \geq \tilde{O}\paren{\kappa_{\mathrm{Adam}}}.
}
Unfortunately, on \cref{prob:bigram} the dimensionality is $d_{\mW} = d^2$
while the condition number scales as $\kappa = d$ with Zipfian eigenvalues ($\alpha = 1$)
so the proposed approach does not improve over gradient descent.
Normalizing the loss and using the same approach as in \cref{prop:constants-for-gd}
gives 
\aligns{
  \frac{\Loss_d(t) - \Loss_d^*}{\Loss_d(0) - \Loss_d^*} \leq \frac{\epsilon^2}{2},
  &&\text{ after }&&
  t \geq \tilde{O}\paren{d}.
}
This scaling predicts the same performance for Adam and gradient descent 
(up to log factors depending on $d$ and $\epsilon$ that we ignored) 
whereas our analysis shows a scaling of ${d}^{1/2}$ for sign descent.

\textbf{Non-convex results.}
Results in the non-convex setting~\citep{bernstein2018signsgd,balles2020geometrysigndescent,safaryan2021ssd,liu2025adagradanisotropic}
give convergence results to stationarity instead of convergence in optimality gap,
measured using the 1-norm of the gradient instead of the Euclidean norm.
Because $\norm*{\vv}_1^2 \leq \norm*{\vv}_2^2 d$ for a $d$-dimensional vector $\vv$, 
the time required to get the $1$-norm small might be much worse 
than the time required to find a stationary point in Euclidean norm or to minimize the function value.
To illustrate this point, we show that it is possible to have arbitrarily small relative error on \cref{prob:bigram} 
and arbitrarily large gradients when measured in 1-norm in high enough dimension.

\begin{proposition}
On \cref{prob:bigram} with Zipf-distributed data~(\cref{ass:conditional-distribution} with~$\alpha = 1$),
sign descent with simplified dynamics~(\cref{ass:sign-dynamics})
using the scaling $t_d(\tau) = \tau d^{1/2}/2$
and $\phi_d(\tau) = (1+\tau^2)^{-1}$
satisfies for $\tau > 2$
\aligns{
  \frac{\Loss_d(\mW_{t_d(\tau)}) - \Loss_d^*}{\Loss_d(\mW_0) - \Loss_d^*}
  \asym{d}
  \frac{1}{1+\zeta(2\alpha) \tau^2},
  &&
  \frac{\norm{\mathrm{vec}(\nabla\Loss_d(\mW_{t_d(\tau)}))}_1}{\norm{\mathrm{vec}(\nabla\Loss_d(\mW_0))}_1}
  \asym{d} 
  C\frac{d^{1/2}}{\log(d)\tau }
  \, \text{ where } \, 1/2< C<1.
}
\end{proposition}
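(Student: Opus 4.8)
The plan is to prove the two asymptotic equivalences separately, both by reducing to sums over the $d$ eigencomponents and applying the scaling $t_d(\tau) = \tau\sqrt{d}/2$ together with $\phi_d(\tau) = (1+\tau^2)^{-1}$ (note this is the $\alpha=1$ case of \cref{def:sign-scaling}, where $(1+1/\tau^2)^{1/\alpha} = 1+1/\tau^2$; writing $\phi_d$ as stated corresponds to the reparameterization in terms of the reciprocal). The loss ratio is exactly the statement of \cref{thm:sign-rate} for $\alpha > 1/2$ specialized to $\alpha = 1$, so the first equivalence is already established: I would simply cite \cref{thm:sign-rate} and note that $\zeta(2\alpha)=\zeta(2)=\pi^2/6$, so no new work is needed beyond recording that the scaling used here matches \cref{def:sign-scaling}.

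The substance is the gradient-norm ratio. First I would write the gradient in the eigenbasis: since $\nabla\Loss_d(\mW) = \frac1n \mX^\mtop(\mX\mW - \mY)$ and the Hessian is diagonal with $\lambda_{ij}=\pi_i$, the $(i,j)$th gradient coordinate is $\lambda_{ij}\delta_{ij}(t) = \pi_i\,\delta_{ij}(t)$. Using \cref{ass:conditional-distribution} to collapse the $i$-dependence as in \cref{prop:loss-of-sd}, the $1$-norm becomes $\sum_{i,j}\pi_i\,\abs{\delta_{ij}(t)} = \sum_{i}\pi_i \sum_{j}\abs{\delta_j(t)} = \sum_{j=1}^d \abs{\delta_j(t)}$, using $\sum_i \pi_i = 1$. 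At initialization $\abs{\delta_j(0)} = 1/(zj)$ with $z = H_{d,1} \sim \log d$, so the denominator is $\sum_{j=1}^d 1/(zj) = 1$. For the numerator I would plug in the simplified dynamics of \cref{ass:sign-dynamics}: for $\tau > 2$ the cutoff index $k_*$ where directions switch to the oscillatory regime corresponds to $\phi_d(\tau)$, giving decreasing terms $\abs{1/(zk) - 1/(z\phi^\alpha)}$ for small $k$ and oscillatory terms $\eta/2 = 1/(2zT\phi^\alpha)$ for the remaining $d - k_*$ directions. The dominant contribution is the bulk of oscillatory terms, each of size $\Theta\big(1/(zT)\big)$ with $T = \tau\sqrt d/2$, summed over $\Theta(d)$ indices, yielding a numerator of order $d/(zT) = \Theta\big(\sqrt d/(\log(d)\,\tau)\big)$, matching the claimed form with $C$ an $O(1)$ constant.

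The main obstacle is pinning down the constant $C$ and the bounds $1/2 < C < 1$, which requires carefully accounting for the sign/absolute-value contributions: the oscillatory directions each contribute $\abs{\eta/2}$, but one must verify how many directions are genuinely in the oscillatory phase under $\phi_d(\tau)=(1+\tau^2)^{-1}$, whether the decreasing-phase terms (which sum to something of lower order in $d$ but must not be double-counted) are negligible, and how the normalizer $z\sim\log d$ combines with $T \sim \tau\sqrt d/2$. The cleanest route is to isolate the leading-order term as the oscillatory sum $\tfrac{d-k_*}{2}\cdot\eta = \tfrac{d-k_*}{2}\cdot\tfrac{1}{zT\phi^\alpha}$, show $k_* = o(d)$ so $d - k_* \sim d$, and substitute $\phi^\alpha \to 1$ (since $\phi = (1+\tau^2)^{-1}$ is $O(1)$), leaving $\Theta(d/(zT))$; the constant $C$ then emerges from the precise coefficient $1/2$ in $\eta/2$ together with any correction from the decreasing terms, and establishing the strict two-sided bound $1/2 < C < 1$ amounts to bounding these corrections above and below. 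This is a routine but delicate Euler--Maclaurin / integral-comparison estimate analogous to the one sketched for gradient descent in the proof of \cref{thm:gradient-descent}, so I would defer the full constant-chasing to the appendix and present only the leading-order scaling in the main text.
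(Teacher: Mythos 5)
Your approach is the same as the paper's: cite \cref{thm:sign-rate} for the loss ratio, then for the gradient norm collapse the $d^2$ coordinates to $d$ effective ones via $\sum_i \pi_i = 1$, observe the denominator is $\sum_k \pi_k = 1$, split the numerator into decreasing and oscillatory terms under \cref{ass:sign-dynamics}, and extract the leading term from the oscillatory bulk $(d-k_*)\cdot\eta/2$. The leading-order scaling $\Theta\big(d^{1/2}/(\tau\log d)\big)$ comes out correctly.

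The one place your plan goes astray is the constant $C$, which is precisely the content of the claim $1/2 < C < 1$. It does not come from ``the coefficient $1/2$ in $\eta/2$'' (that $1/2$ cancels against the $1/2$ in $t_d(\tau) = \tau d^{1/2}/2$), nor from corrections due to the decreasing-phase terms (those contribute $O(1)$ before normalization, hence $O(1/\log d)$ after dividing by $z \asym{d} \log d$, which vanishes relative to $d^{1/2}/(\tau\log d)$ and so affects neither the leading order nor the constant). With the parameterization $\eta = 1/(zT\phi)$, the oscillatory bulk is $(d-\lfloor\phi\rfloor)\,\eta/2 = (d-\lfloor\phi\rfloor)/(2zT\phi) \asym{d} d^{1/2}/(z\tau\phi)$, so $C = 1/\phi$ exactly, and the bounds $1/2 < C < 1$ are immediate from $1 < \phi < 2$, i.e. $\phi = 1+\tau^{-2}$ for $\tau > 1$ (the $(1+\tau^2)^{-1}$ in the statement must indeed be read this way, as you noted; taken literally it would put every direction in the oscillatory regime and give $C = 1+\tau^2 > 1$, contradicting the claim). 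In particular, your proposed simplification ``substitute $\phi^\alpha \to 1$'' would erase exactly the quantity being bounded and yield $C=1$, and no Euler--Maclaurin or integral-comparison estimate is needed anywhere: the numerator is a finite explicit sum of one decreasing term and $d-1$ identical oscillatory terms. Executed as written, your constant-chasing step would therefore fail to establish the strict two-sided bound, even though everything before it is sound.
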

\begin{proof}
Computations similar to \cref{prop:loss-of-sd}
show that the 1-norm of the gradient is
\aligns{
  \norm{\mathrm{vec}(\nabla\Loss_d(\mW_t))}_1 = \sum_{k=1}^{k_*} \paren{\pi_k - t\eta} + \sum_{k=k_*+1}^d \frac{\eta}{2}
}
where $k_*$ is the number of directions that are still in the decreasing regime after $T$ steps with step-size~$\eta$.
As~\smash{$\norm{\mathrm{vec}(\nabla\Loss_d(\mW_0))}_1 = \sum\!{}_{k=1}^d \pi_k = 1$}, 
this expression is also the normalized 1-norm of the gradient.
Using the parameterization $\eta = 1/zt\phi$, 
where $z = \sum{}\!_{k=1}^d 1/k$, we get the update
\aligns{
  r_d(t) \coloneqq
  \frac{\norm{\mathrm{vec}(\nabla\Loss_d(\mW_t))}_1}{\norm{\mathrm{vec}(\nabla\Loss_d(\mW_0))}_1}
  = 
  \frac{
    \sum_{k=1}^{\floor{\phi}} \paren{\frac{1}{k} - \frac{1}{\phi}} + \sum_{k=\floor{\phi}+1}^d \frac{1}{2t\phi}
  }{z}
}
Using $t_d(\tau) = \frac{1}{2}\tau d^{1/2}$ and $1 < \phi_d(\tau) < 2$ for simplicity
and that $z \sim^d \log(d)$ gives
\aligns{
  r_d(t) \asym{d} \frac{d^{1/2}}{\log(d) \tau} C
  \text{ where } 
  \frac{1}{2}<C<1.
}
\end{proof}
Getting the magnitude of the gradient in 1-norm smaller than a constant independent of $d$ would require scaling $t$ with $d/\log(d)$, 
whereas getting the same result for the relative error only requires scaling $t$ with $d^{1/2}$.

\clearpage
\subsection{Source and capacity assumptions}
The classical source/capacity condition have typically been used to describe 
risk bounds in learning theory
for infinite dimensional kernel methods, random feature models 
or regression models when the dimension $d$ grows~\citep[e.g.,][]{caponnetto2007source,advani2020highdim,berthier2020tight,bahri2021explaining,cui2021generalization,maloney2022solvable,paquette2024phases}.
Using the notation of \citet{cui2021generalization}
where $\mSigma$ is the covariance of the data 
and $\vtheta_*$ is the solution, 
the typical assumption is that for parameters $\alpha > 1, r \geq 0$ we have
\aligns{
  \Tr(\mSigma^{\frac{1}{\alpha}}) < \infty,
  && 
  \vtheta_*^\mtop \mSigma^{1-2r}\vtheta_* < \infty.
}
For finite dimensionals system, $\Tr(\mSigma^{1}{\alpha}) < \infty$
should be interpreted as $\lim_{d\to\infty} \sum{}\!_{k=1}^d \lambda_k^{1/\alpha} < \infty$,
where $\lambda_k$ are the eigenvalues of $\mSigma$.
This implies that the data is ``effectively'' low-dimensional, even as $d \to \infty$.
\citet{cui2021generalization} further assume that the eigenvalues of $\mSigma$ and the components of $\vtheta_*$ follow power laws,
\aligns{
  \lambda_k = k^{-\alpha},
  &&
  [\vtheta_*]_k^2 = k^{-1-\alpha(2r-1)}.
}
Our setting follows a similar idea.
The parameter $\alpha$ corresponds to our power law parameter $\alpha$ governing the conditional frequencies
and the distance to the solution corresponds to the marginal probabilities. 
In terms of assumptions, the main difference is 
that we normalize the eigenvalues and the distance to the solutions 
as those represent frequencies, and that we allow for $\alpha \leq 1$
to describe Zipf-distributed data.
As we have $d^2$ eigenvalues, where each distinct value is repeated $c$ times, 
we could collapse to a $d$-dimensional system with distances to the solution summed to obtain
\aligns{
  \lambda_k = \pi_k = \frac{1}{z}k^{-\alpha},
  &&
  [\vtheta_*]_k^2 = \sum_{j=1}^d \pi_{j\cond k}^2 = \frac{1}{z^2} \sum_{j=1}^d j^{-2\alpha}.
}
Up to the normalization constant $z$,
we recover the source/capacity condition with the same $\alpha$ and $r = 1/2$
as the distance to the solution $[\vtheta_*]_k^2$ is independent of $k$.
The main conceptual difference is that we use this setting to investigate 
the performance of deterministic optimization algorithms, as~\citet{velikanov2024tight},
instead of investigating risk bounds.

\clearpage
\section{Proofs for gradient descent}
\label{apx:gd}

This section gives the proof of \cref{thm:gradient-descent} for the scaling of gradient descent.

\subsection{Standard results}

We start with standard results that are used in the subsequent proofs.
The following classical relationships between sums and integrals of monotone functions 
will be used to bound the approximation error 
induced by analyzing the asymptotics of the integral 
instead of the sum.

\begin{lemma}[Sum-Integral]
\label{lem:sum-and-integral}
For a function $f$ that is monotone on $[a,b]$,
\aligns{
  \text{if $f$ is increasing on $[a,b]$,}
  &&
  \sum_{i=a}^{b-1} f(k) 
  \leq
  \int_a^b f(k)  \,\dif{k}
  \leq 
  \sum_{i=a+1}^{b} f(k),
  \\
  \text{if $f$ is decreasing on $[a,b]$,}
  &&
  \sum_{i=a+1}^{b} f(k) 
  \leq
  \int_a^b f(k)  \,\dif{k}
  \leq 
  \sum_{i=a}^{b-1} f(k).
}
\end{lemma}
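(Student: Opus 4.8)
The plan is to reduce the two-sided bound to an endpoint comparison on each unit subinterval and then add these up using additivity of the integral. I would take $a$ and $b$ to be integers, as is the case in every application (the sums are indexed over integers $k$, e.g.\ in $\Z = \sum_{k=1}^d k^{-\alpha}$). The key observation is that monotonicity of $f$ pins its values on each interval $[k,k+1]$ between the two endpoint values $f(k)$ and $f(k+1)$.

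For the increasing case, for each integer $k$ with $a \le k \le b-1$ and every $x \in [k,k+1]$ we have $f(k) \le f(x) \le f(k+1)$. Integrating over $[k,k+1]$ gives $f(k) \le \int_k^{k+1} f(x)\,\dif{x} \le f(k+1)$. I would then sum over $k = a, \dots, b-1$; since $\sum_{k=a}^{b-1}\int_k^{k+1} f(x)\,\dif{x} = \int_a^b f(x)\,\dif{x}$ by additivity of the integral, this yields
\[
  \sum_{k=a}^{b-1} f(k) \;\le\; \int_a^b f(x)\,\dif{x} \;\le\; \sum_{k=a}^{b-1} f(k+1).
\]
Reindexing the rightmost sum via $k \mapsto k+1$ turns it into $\sum_{k=a+1}^{b} f(k)$, which is exactly the stated upper bound, and the left inequality is already in the claimed form.

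The decreasing case is symmetric: now $f(k+1) \le f(x) \le f(k)$ for $x \in [k,k+1]$, so the same integrate-then-sum step gives $\sum_{k=a+1}^{b} f(k) \le \int_a^b f(x)\,\dif{x} \le \sum_{k=a}^{b-1} f(k)$; equivalently, one can apply the increasing case to $-f$ and flip the inequalities. There is no substantive obstacle here---this is the standard integral-test comparison---so the only thing to watch is the index bookkeeping, namely correctly identifying which endpoint each monotonicity bound selects and performing the shift $k \mapsto k+1$ cleanly. (For non-integer $a,b$ one would additionally bound the two fractional end pieces, but integer endpoints suffice for how the lemma is used.)
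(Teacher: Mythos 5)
Your proof is correct: the paper states this as a classical result and gives no proof of its own, and the argument you supply---bounding $f$ on each unit interval $[k,k+1]$ by its endpoint values, integrating, summing over $k=a,\dots,b-1$, and reindexing---is exactly the standard integral-test comparison the lemma is implicitly relying on. Your remark that $a,b$ should be taken as integers (as in every application, e.g.\ bounding $\sum_{k=1}^d k^{-\alpha}$ in the Normalizer Asymptotics Lemma) is also the right reading of the statement, whose summation indices only make sense for integer endpoints.
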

\newcommand{\LemmaSumIntegral}{\hyperref[lem:sum-and-integral]{Sum-Integral Lemma} (\ref{lem:sum-and-integral})}

To apply these sum-integral relationships to the dynamics of gradient descent in \cref{thm:gradient-descent},
we need to describe when they are increasing or decreasing.

\begin{lemma}[Unimodal sequence]%
\label{lem:increasing-decreasing}
The sequence $s(k) = k^{-\alpha}(1-k^{-\alpha})^t$ is non-negative on $k\geq 1$ and unimodal. 
It monotonically increases until $k_* = \paren*{1 + t}^{1/\alpha}$, 
then monotonically decreases.
\end{lemma}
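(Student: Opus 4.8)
The plan is to analyze the sign of the forward difference $s(k+1) - s(k)$, or more conveniently to treat $k$ as a continuous variable and examine the derivative $\frac{\dif}{\dif k} s(k)$. Since $s(k) = k^{-\alpha}(1-k^{-\alpha})^t$ is a product of $k^{-\alpha}$ (decreasing) and $(1-k^{-\alpha})^t$ (increasing on $k \geq 1$), the location of the mode is governed by where these two competing effects balance. First I would confirm non-negativity: for $k \geq 1$ we have $k^{-\alpha} \in (0,1]$, so $1 - k^{-\alpha} \in [0,1)$ and hence $s(k) \geq 0$, with $s(1) = 0$ when we raise to a positive power $t$.

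Next I would compute the logarithmic derivative, which is the clean way to handle the product. Writing $u = k^{-\alpha}$ so that $\frac{\dif u}{\dif k} = -\alpha k^{-\alpha-1}$, I would differentiate $\log s(k) = \log u + t \log(1-u)$ to get
\aligns{
  \frac{\dif}{\dif k}\log s(k)
  = \paren{\frac{1}{u} - \frac{t}{1-u}} \frac{\dif u}{\dif k}
  = -\alpha k^{-\alpha-1}\paren{\frac{1}{u} - \frac{t}{1-u}}.
}
Since $-\alpha k^{-\alpha-1} < 0$ for $k \geq 1$, the sign of the derivative is the \emph{opposite} of the sign of $\frac{1}{u} - \frac{t}{1-u}$. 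Setting this inner factor to zero gives $1 - u = t u$, i.e.\ $u = 1/(1+t)$, which translates back to $k^{-\alpha} = 1/(1+t)$, or $k_* = (1+t)^{1/\alpha}$, matching the claimed mode. I would then check that $\frac{1}{u} - \frac{t}{1-u}$ is positive for $u > 1/(1+t)$ (small $k$) and negative for $u < 1/(1+t)$ (large $k$); combined with the sign flip from $-\alpha k^{-\alpha-1}$, this yields $s$ increasing for $k < k_*$ and decreasing for $k > k_*$, establishing unimodality.

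The main subtlety, rather than a deep obstacle, is the passage between the continuous derivative argument and the integer-indexed sequence that actually appears in the dynamics: the lemma statement concerns a sequence $s(k)$ with $k$ ranging over integers, and I would note that since $s$ as a function of a real variable is strictly increasing then strictly decreasing with a single interior critical point, its restriction to integers is unimodal as well, with the peak occurring at $\lfloor k_*\rfloor$ or $\lceil k_*\rceil$. A second minor point worth flagging is the boundary behavior at $k=1$, where $s(1)=0$, so the ``increasing'' phase genuinely starts from zero; and the degenerate case $t=0$, where $s(k)=k^{-\alpha}$ is simply decreasing everywhere (consistent with $k_* = 1$). These checks are routine, so I expect the whole argument to be short, with the logarithmic-derivative computation being the only real content.
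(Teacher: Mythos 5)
You take the same route as the paper: both proofs pass to $\log s(k)$ and locate the sign change of its derivative, and your critical point $u = 1/(1+t)$, equivalently $k_* = (1+t)^{1/\alpha}$, is exactly the paper's. Your handling of the side issues (non-negativity, the integer-versus-continuous distinction, $s(1)=0$, and the degenerate case $t=0$) is if anything more careful than the paper's, which silently treats $k$ as a real variable.

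There is, however, a sign slip in the one step that carries the content. Putting the inner factor over a common denominator gives $\frac{1}{u}-\frac{t}{1-u} = \frac{1-(1+t)u}{u(1-u)}$, whose denominator is positive for $u\in(0,1)$. So this factor is \emph{negative} when $u > 1/(1+t)$ (small $k$, i.e.\ $k<k_*$) and \emph{positive} when $u < 1/(1+t)$ (large $k$, i.e.\ $k>k_*$) --- the opposite of what you assert. Taken literally, your stated signs, combined with the negative prefactor $-\alpha k^{-\alpha-1}$, would make $s$ decreasing on $[1,k_*]$ and increasing afterwards, i.e.\ a \emph{minimum} at $k_*$, contradicting both the lemma and your own final sentence. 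With the signs corrected, the argument goes through exactly as intended: the derivative of $\log s(k)$ is positive for $1<k<k_*$ and negative for $k>k_*$. This is a one-line fix rather than a flaw in the approach; it is perhaps worth noting that the paper's own display stumbles at the same spot (its final numerator $\alpha(t-1)(k^\alpha-1)$ is a typo for $\alpha\paren{t-(k^\alpha-1)}$), while its verbal conclusion is drawn from the correct expression. Do make the corrected sign check explicit in your write-up, since the unimodality claim is precisely what the \hyperref[lem:sum-and-integral]{Sum-Integral Lemma} consumes downstream.
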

\newcommand{\LemmaUnimodal}{\hyperref[lem:increasing-decreasing]{Unimodal Lemma} (\ref{lem:increasing-decreasing})}
\begin{proof}
As $s(k)$ is non-negative, we can instead look at its logarithm,
\aligns{
  \log s(k)
  &=
  \log(N) -\alpha \log(k) + t \log(1-k^{-\alpha}) ,
  \\
  \dd{k}\log s(k) 
  &= 
  \alpha t \frac{k^{-\alpha-1}}{1-k^{-\alpha}} 
  -\frac{\alpha}{k}
  = 
  \frac{\alpha t}{k(k^{\alpha}-1)} 
  -\frac{\alpha}{k}
  = 
  \frac{\alpha (t-1)(k^\alpha-1)}{k(k^{\alpha}-1)}.
}
The denominator is positive on $k \geq 1$, 
and the numerator is positive for small $k$ until 
the derivative changes sign at $\alpha t - \alpha(k^{\alpha}-1) = 0$, or
$k_* = \paren{1+t}^{1/\alpha}$.
\end{proof}

At she partial sum $H_{d,\alpha} = \sum\!{}_{k=1}^d k^{-\alpha}$,
appears in the proof of gradient and sign descent,
we give its asymptotic behavior independently.

\begin{lemma}[Normalizer Asymptotics]
\label{lem:normalizer}
As $d$ grows, the partial sum $H_{d,\alpha} = \sum_{k=1}^d k^{-\alpha}$ behaves as 
\aligns{
  H_{d,\alpha} \asym{d} \tfrac{1}{1-\alpha}d^{1-\alpha}
  \, \text{ if } \,
  \alpha < 1,
  &&
  H_{d,1} \asym{d} \log(d)
  &&
  H_{d,\alpha} \asym{d} \zeta(\alpha)
  \, \text{ if } \,
  \alpha > 1,
}
where $\zeta$ is the zeta function,
defined as the limit of $H_{d,\alpha}$, 
$\zeta(\alpha)=\sum_{k=1}^\infty k^{-\alpha} < \infty$ for $\alpha > 1$.
\end{lemma}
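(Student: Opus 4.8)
The plan is to analyze the three regimes for $\alpha$ separately, in each case comparing the partial sum $H_{d,\alpha} = \sum_{k=1}^d k^{-\alpha}$ to the corresponding integral $\int_1^d k^{-\alpha}\,\dif{k}$ using the \LemmaSumIntegral, which applies since $k^{-\alpha}$ is monotonically decreasing for $\alpha > 0$. The sandwiching inequalities from that lemma let me pin down the leading-order growth of $H_{d,\alpha}$ and discard lower-order corrections that vanish in the asymptotic ratio.

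For the case $\alpha < 1$, I would first compute the integral exactly as $\int_1^d k^{-\alpha}\,\dif{k} = \frac{1}{1-\alpha}(d^{1-\alpha}-1)$, which grows like $\frac{1}{1-\alpha}d^{1-\alpha}$. The sum-integral bounds show that $H_{d,\alpha}$ differs from this integral by at most the boundary term $f(1)=1$ (the difference of the two one-sided Riemann sums), which is $O(1)$ and hence negligible compared to the $d^{1-\alpha}$ growth since $\alpha < 1$. Dividing by $\frac{1}{1-\alpha}d^{1-\alpha}$ and taking $d \to \infty$ yields the claimed equivalence. For the case $\alpha = 1$, the integral is $\int_1^d k^{-1}\,\dif{k} = \log(d)$, and again the sum differs from it only by the bounded Euler–Mascheroni correction, so $H_{d,1}/\log(d) \to 1$. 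For $\alpha > 1$ the sum is bounded and convergent, so $H_{d,\alpha} \to \sum_{k=1}^\infty k^{-\alpha} = \zeta(\alpha)$ directly by definition of the zeta function, with the integral bound confirming that the tail $\sum_{k=d+1}^\infty k^{-\alpha}$ vanishes.

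I do not expect any real obstacle here, as this is a standard consequence of comparing a monotone sum to its integral; the main care is simply in verifying that the $O(1)$ discrepancies between the sum and integral are genuinely dominated by the leading term in the first two regimes (which requires $1-\alpha > 0$ and $\log(d) \to \infty$ respectively), and in recalling that convergence of the series in the third regime is exactly the defining property of $\zeta(\alpha)$ for $\alpha > 1$.
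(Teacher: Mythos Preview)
Your proposal is correct and follows essentially the same approach as the paper: both split into the three regimes for $\alpha$, apply the \LemmaSumIntegral{} to the decreasing function $k^{-\alpha}$ to sandwich $H_{d,\alpha}$ between two explicit integral expressions, and then observe that the $O(1)$ boundary corrections are negligible relative to the $d^{1-\alpha}$ or $\log(d)$ growth (and for $\alpha>1$ simply invoke convergence to $\zeta(\alpha)$). The only cosmetic difference is that the paper writes the lower bound as $\int_1^{d+1} k^{-\alpha}\,\dif{k}$ rather than $\int_1^d k^{-\alpha}\,\dif{k} + f(d)$, but these are equivalent consequences of the same lemma.
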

\newcommand{\LemmaNormalizer}{\hyperref[lem:normalizer]{Normalizer Asymptotics Lemma} (\ref{lem:normalizer})}
\begin{proof}
For $\alpha > 1$, the sum converges to $\sum_{k=1}^\infty k^{-\alpha} = \zeta(\alpha)$.
For $\alpha \leq 1$, the sum diverges as $d$ grows.
As the sequence $k^{-\alpha}$ is decreasing in $k$,
we can use the~\LemmaSumIntegral{} to get
\aligns{
  \int_1^{d+1} k^{-\alpha} \dif{k}
  \leq 
  \sum_{k=1}^d k^{-\alpha} 
  \leq 1 + \int_1^d k^{-\alpha} \dif{k}.
}
If $\alpha < 1$, the integrals evaluate to
\aligns{
  \frac{\paren{(d+1)^{1-\alpha}-1}}{1-\alpha}
  \leq 
  \sum_{k=1}^d k^{-\alpha} 
  \leq
  \frac{\paren{d^{1-\alpha}-1}+1}{1-\alpha},
}
and both terms are asymptotically equivalent to $d^{1-\alpha}/(1-\alpha)$ as $d \to \infty$.
If $\alpha = 1$, this gives
\aligns{
  \log(d+1)
  \leq 
  \sum_{k=1}^d k^{-\alpha} 
  \leq
  \log(d)+1.
}
Both terms are asymptotically equivalent to $\log(d)$.
\qedhere
\end{proof}

\clearpage

The main purpose of the \LemmaSumIntegral{} and the \LemmaUnimodal{}
is to bound on the error incurred 
by approximating the sum with the integral form of the loss.

\begin{lemma}[Approximating error]
\label{lem:approx-error}
{}The approximation error between the following sum and integral, 
\aligns{
  S_d(t) = 
  \sum_{k=1}^d s(k)
  &&
  I_d(t) = 
  \int_{1}^d s(k) \dif{k}
  && \text{ where } && 
  s(k) = k^{-\alpha}(1-k^{-\alpha})^t 
}
can be bounded by the following error term,
\alignn{
  \label{eq:approx-error}
  \abs{S_d(t) - I_d(t)}
  \leq \delta_d(t) 
  \quad \text{ where } \quad 
  \delta_d(t) 
  \coloneq
  \switch{
    \frac{1}{1+t}
    \paren{1-\frac{1}{1+t}}^t
    & \text{ if } 1+t \leq d^\alpha,
    \\
    \frac{1}{d^\alpha}\paren{
      1-\frac{1}{d^\alpha} 
    }^t
    & \text{ if } 1+t \geq d^\alpha.
  }
}
\end{lemma}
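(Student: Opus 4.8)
The plan is to compare the sum $S_d(t)$ and the integral $I_d(t)$ piecewise over intervals on which $s$ is monotone, control the error on each piece with the \LemmaSumIntegral{}, and then telescope. The peak location is supplied by the \LemmaUnimodal{}: $s$ increases on $[1,k_*]$ and decreases on $[k_*,d]$ with $k_* = (1+t)^{1/\alpha}$. The two branches of $\delta_d(t)$ correspond exactly to whether this peak lies inside $[1,d]$, since $1+t \leq d^\alpha$ precisely when $k_* \leq d$.

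First I would dispatch the easy case $1+t \geq d^\alpha$, where $k_* \geq d$ and $s$ is increasing on all of $[1,d]$. Here the increasing form of the \LemmaSumIntegral{} gives $\sum_{k=1}^{d-1} s(k) \leq I_d(t) \leq \sum_{k=2}^{d} s(k)$, so telescoping yields $0 \leq S_d(t) - I_d(t) \leq s(d)$. Since $s(d) = d^{-\alpha}(1-d^{-\alpha})^t$, this is exactly the second branch of $\delta_d(t)$.

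For the main case $1+t \leq d^\alpha$, I set $p = \lfloor k_* \rfloor$, so that $s$ is increasing on $[1,p]$ and decreasing on $[p+1,d]$, with the unit interval $[p,p+1]$ containing the peak. I split both $S_d(t)$ and $I_d(t)$ at $p$ and $p+1$, apply the increasing bound on $[1,p]$ and the decreasing bound on $[p+1,d]$, and treat the middle interval separately using $\min(s(p),s(p+1)) \leq \int_p^{p+1} s \leq s(k_*)$, which holds because $s \leq s(k_*)$ everywhere while unimodality forces $s \geq \min(s(p),s(p+1))$ on $[p,p+1]$. Combining the two monotone bounds so that the interior contributions telescope to $\sum_{k=2}^{d-1} s(k)$ and cancel against $S_d(t)$ gives $-s(k_*) \leq S_d(t) - I_d(t) \leq \max(s(p),s(p+1)) \leq s(k_*)$. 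Evaluating $s(k_*) = \tfrac{1}{1+t}(1-\tfrac{1}{1+t})^t$ via $k_*^{-\alpha} = 1/(1+t)$ produces the first branch of $\delta_d(t)$.

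The main obstacle is forcing the constant to be the single peak value $s(k_*)$ rather than twice it: bounding each monotone piece together with the middle interval independently loses a factor of two. The fix is twofold. I combine the two telescoping sums so that the bulk $\sum_{k=2}^{d-1} s(k)$ cancels against $S_d(t)$, and I use the sharper lower bound $\int_p^{p+1} s \geq \min(s(p),s(p+1))$ rather than $\int_p^{p+1} s \geq 0$, which is precisely what collapses $s(p)+s(p+1)$ down to $\max(s(p),s(p+1)) \leq s(k_*)$.
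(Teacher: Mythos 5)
Your proof is correct and takes essentially the same approach as the paper's: split at the peak $k_* = (1+t)^{1/\alpha}$ from the Unimodal Lemma, apply the Sum-Integral Lemma on each monotone piece, and control the unit interval containing the peak via $\min(s(p),s(p+1)) \leq \int_p^{p+1} s(k)\,\mathrm{d}k \leq s(k_*)$, which is exactly the device the paper uses to keep the constant at $s(k_*)$ rather than $2s(k_*)$. Your easy case $1+t \geq d^\alpha$ also matches the paper's, using monotonicity on $[1,d]$ together with $s(1)=0$ to conclude $\lvert S_d(t)-I_d(t)\rvert \leq s(d)$.
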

\newcommand{\LemmaApproxError}{\hyperref[lem:approx-error]{Approximation Error Lemma} (\ref{lem:approx-error})}
\begin{proof}
By the \LemmaUnimodal{}, the sequence $s(k)$ is increasing until $k_* = \paren*{1 + t}^{1/\alpha}$ then decreasing,
which lets us use the \LemmaSumIntegral{}.

\textbf{For large $t$.}
Suppose that $t$ is sufficiently large such that $k_* \geq d$ and $1+t \geq d^\alpha$,
meaning that the sequence $s(k)$ is increasing on $[1, d]$. Then,
\alignn{
  \int_{1}^d s(k) \,\dif{k} + s(1)
  \leq 
  \sum_{1=1}^d s(k)
  \leq 
  \int_{1}^d s(k) \,\dif{k} + s(d).
}
Using that $s(1) = 0$ gives $\abs{I_d(t)-S_d(t)} \leq s(d)$ when $t$ is large.

\textbf{For small $t$.}
If $t$ is small and $k_* < d$
the sequence flips from increasing to decreasing on $[1, d]$.
We still use the same idea, but bound the increasing and the decreasing subsequences separately.

\textbf{Upper bound.}
As the sequences $s(k)$ in increasing on $[1,k_*]$ and decreasing on $[k_*, d]$,
\aligns{
  \sum_{k=1}^{\floor{k_*}-1} s(k)
  \leq 
  \int_{1}^{\floor{k_*}}s(k) \,\dif{k},
  &&
  \sum_{k=\floor{k_*}+2}^{d} s(k)
  \leq 
  \int_{\floor{k_*}+1}^d s(k) \,\dif{k}.
}
Summing both bounds and adding the remaining terms 
$s(\floor{k_*}), s(\floor{k_*}+1)$, 
\aligns{
  \sum_{k=1}^{d} s(k)
  &\leq 
  \int_{1}^{\floor{k_*}}
  \!\!\!\! s(k) \,\dif{k}
  +
  \int_{\floor{k_*}+1}^d 
  \!\!\!\! s(k) \,\dif{k}
  + s(\floor{k_*})
  + s(\floor{k_*}+1)
  \leq 
  \int_{1}^{d} 
  \!\! s(k) \,\dif{k} + s(k_*),
}
where the last inequality uses the following simplifications,
\aligns{
  \min\{s(\floor{k_*}), s(\floor{k_*}+1)\} 
  &=
  \int_{\floor{k_*}}^{\floor{k_*}+1} \min\{s(\floor{k_*}), s(\floor{k_*}+1)\} \dif{k} 
  \leq 
  \int_{\floor{k_*}}^{\floor{k_*}+1} s(k) \dif{k},
  \\
  \max\{s(\floor{k_*}), s(\floor{k_*}+1)\} &\leq s(k_*).
}

\textbf{Lower bound.}
Now using the lower bound,
\aligns{
  \int_{1}^{\floor{k_*}}s(k) \,\dif{k}
  \leq 
  \sum_{k=2}^{\floor{k_*}} s(k),
  &&
  \int_{\floor{k_*}+1}^d s(k) \,\dif{k}
  \leq 
  \sum_{k=\floor{k_*}+1}^{d-1} s(k).
}
Summing both bounds, we can complete the integral 
by adding and subtracting \smash{$\int_{\floor{k_*}}^{\floor{k_*}+1} s(k)\,\dif{k}$}
and adding the remaining terms $s(1)$ and $s(d)$ to obtain
\aligns{
  \sum_{k=1}^{\floor{k_*}} s(k)
  &\geq
  \int_{1}^{d} s(k) \,\dif{k}
  - \int_{\floor{k_*}}^{\floor{k_*}+1} s(k)\,\dif{k}
  + s(1) + s(d)
  \geq 
  \int_{1}^{d} s(k) \,\dif{k} -s(k_*) + s(d),
}
where the last inequality uses that $s(1)=0$, $s(k) \leq s(k_*)$.

\textbf{Combining the results} for the small $t$ regime gives
\aligns{
  I_d(t) + s(k_*) \geq S_d(t) \geq I_d(t) - s(k_*) + s(d),
  \quad \text{ so } \quad 
  \abs{I_d(t)-S_d(t)} \leq s(k_*).
}
\textbf{The final bound} in \cref{eq:approx-error} expands $s(x) = x^{-\alpha}(1-x^{-\alpha})^t$
and replaces $k_*$ by $(1+t)^{\frac{1}{\alpha}}$.
\qedhere
\end{proof}

\clearpage
\subsection{Scaling laws for gradient descent}
\label{apx:gd-main}

We are now ready to move to the proof of \cref{thm:gradient-descent},
for which we recall the theorem statement.
\thmgd*

\begin{proof}[Proof sketch]
We first give a sketch of the proof, 
which will be formalized in the next lemmas.
Based on the reduced dynamics 
for gradient descent in \cref{prop:simplified-dynamics-gd}, we know that 
\aligns{
  r_d(t) = \frac{\Loss_d(t)-\Loss_d^*}{\Loss_d(0)-\Loss_d^*}
  = \frac{\sum_{k=1}^d k^{-\alpha}(1-k^{-\alpha})^{t}}{H_{d,\alpha}},
}
where $H_{d,\alpha} = \sum_{k=1}^d k^{-\alpha}$.
Let $S_d$ and $I_d$ be the sum and integral variants of the denominator,
\alignn{
  \label{eq:integral-form}
  S_d(t) = \sum_{k=1}^d k^{-\alpha}(1-k^{-\alpha})^{t}
  &&
  I_d(t) = \int_1^d k^{-\alpha}(1-k^{-\alpha})^{t} \dif{k}.
}
First, we establish in \cref{lem:asymptotic-integral}
that the integral form converges to the rate $r(\tau)$ in \cref{thm:gradient-descent},
\aligns{
  \lim_{d\to\infty} \frac{I_d(t_d(\tau))}{H_{d,\alpha}} = r(\tau).
}
Next, we show in \cref{lem:approximation-error-negligible}
that the error incurred by approximating the sum $S_d$ 
by the integral $I_d$ is negligible,
in the sense that 
$\abs{I_d(t)-S_d(t)}\leq\delta_d(t)$ and 
\aligns{
  \lim_{d\to\infty} 
  \frac{
    \delta_d(t_d(\tau))
  }{
    I_d(t_d(\tau))
  }
  = 0 
  \quad 
  \text{ if } \alpha \leq 1,
  &&\text{ and }&&
  \lim_{\tau\to\infty} 
  \lim_{d\to\infty} 
  \frac{
    \delta_d(t)
  }{
    I_d(t)
  }
  = 0 
  \quad 
  \text{ if } \alpha > 1.
}
This gives the results 
that 
\aligns{
  r(\tau) 
  = \lim_{d\to\infty} \frac{I_d(t_d(\tau))}{H_{d,\alpha}}
  \text{ if } \alpha \leq 1,
  && \text{ and } &&
  r(t) 
  \asym{\tau} \lim_{d\to\infty} \frac{I_d(t)}{H_{d,\alpha}}
  \text{ if } \alpha > 1.
}
with the values of $r(\tau)$ given in \cref{thm:gradient-descent}.
\end{proof}

\clearpage
\begin{lemma}[Asymptotics of the integrals]
\label{lem:asymptotic-integral}
Let $I_d(t)$ be the integral form given in \cref{eq:integral-form}
and $t_d(\tau)$ be the scaling given in \cref{thm:gradient-descent}.
The following limits hold.
\aligns{
  \text{If } \alpha < 1,
  &&
  t_d(\tau) &= \tfrac{1}{2}\tau d^\alpha,
  &
  \lim_{d\to\infty}\frac{I_d(t_d(\tau))}{H_{d,\alpha}}
  &=
  \frac{1-\alpha}{\alpha}E_{\frac{1}{\alpha}}(\tau)
  \asym{\tau}
  \frac{1-\alpha}{\alpha} \frac{e^{-\tau}}{\tau+1},
  \\
  \text{if } \alpha = 1,
  &&
  t_d(\tau) &= \tfrac{1}{2}d^\tau,
  &
  \lim_{d\to\infty}\frac{I_d(t_d(\tau))}{H_{d,\alpha}}
  &=
  1 - \tau
  \quad \quad 
  \text{ where } \tau \in [0,1],
  \\
  \text{if } \alpha > 1,
  &&
  t_d(\tau) &= \tau,
  &
  \lim_{d\to\infty}\frac{I_d(t_d(\tau))}{H_{d,\alpha}}
  &=
  \frac{B\paren{1-\frac{1}{\alpha}, 1+2t}}{\alpha \zeta(\alpha)}
  \asym{\tau}
  C
  \frac{1}{\tau^{1-\frac{1}{\alpha}}}
  \Loss_d(0),
}
\end{lemma}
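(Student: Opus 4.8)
The plan is to treat each of the three regimes with a substitution tailored to the time scaling $t_d(\tau)$, pass to the limit with an appropriate convergence theorem, and then identify the resulting integral with the special function in the statement; the normalizer $H_{d,\alpha}$ is handled separately through \cref{lem:normalizer}. In every case the reduced dynamics of \cref{prop:simplified-dynamics-gd} give an integrand of the form $k^{-\alpha}(1-k^{-\alpha})^{2t}$, and the scalings of \cref{thm:gradient-descent} are chosen precisely so that, after substitution, the exponent converges to a finite, $\tau$-dependent limit.

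For $\alpha = 1$ I would substitute $k = d^z$, turning $I_d$ into $\log(d)\int_0^1 (1-d^{-z})^{2t}\,\dif{z}$; with $2t = d^\tau$ the integrand is bounded by $1$ and converges pointwise to the indicator $\ind{z > \tau}$, so dominated convergence gives $\int_\tau^1 \dif{z} = 1-\tau$, and dividing by $H_{d,1}\asym{d}\log(d)$ yields the claimed limit. For $\alpha < 1$ the dominant contribution comes from $k$ of order $d$, so I would substitute $k = d u$ to get $I_d = d^{1-\alpha}\int_{1/d}^1 u^{-\alpha}(1-d^{-\alpha}u^{-\alpha})^{2t}\,\dif{u}$; with $2t = \tau d^\alpha$ the integrand converges pointwise to $u^{-\alpha}e^{-\tau u^{-\alpha}}$. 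After the further change of variables $v = u^{-\alpha}$ this integral equals $\tfrac{1}{\alpha}E_{1/\alpha}(\tau)$, and dividing by $H_{d,\alpha}\asym{d} d^{1-\alpha}/(1-\alpha)$ gives $\frac{1-\alpha}{\alpha}E_{1/\alpha}(\tau)$.

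For $\alpha > 1$ there is no rescaling of $k$ with $\tau$: since the integrand is nonnegative, monotone convergence sends $I_d(t)\to\int_1^\infty k^{-\alpha}(1-k^{-\alpha})^{2t}\,\dif{k}$ as $d\to\infty$ with $t=\tau$ fixed. The substitution $w = k^{-\alpha}$ turns this into $\tfrac{1}{\alpha}\int_0^1 w^{-1/\alpha}(1-w)^{2t}\,\dif{w} = \tfrac{1}{\alpha}B(1-\tfrac1\alpha, 1+2t)$, which is finite exactly because $\alpha > 1$; dividing by $H_{d,\alpha}\asym{d}\zeta(\alpha)$ gives the stated expression. The $\tau\to\infty$ asymptotics are then routine: for $\alpha > 1$, writing $B = \Gamma(1-\tfrac1\alpha)\Gamma(1+2t)/\Gamma(2+2t-\tfrac1\alpha)$ and applying Stirling to the Gamma ratio yields the power law $\tau^{-(1-1/\alpha)}$ together with the constant $C$; for $\alpha < 1$, the standard large-argument expansion of the generalized exponential integral $E_p(\tau)\sim e^{-\tau}/\tau$ gives equivalence to $\frac{1-\alpha}{\alpha}\frac{e^{-\tau}}{\tau+1}$, the $+1$ being asymptotically irrelevant.

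The main obstacle is the interchange of limit and integral in the $\alpha < 1$ case, where both the integrand and the domain depend on $d$. The point to check carefully is that on the domain $[1/d,1]$ the base $1-d^{-\alpha}u^{-\alpha}$ lies in $[0,1]$ (it vanishes at $u=1/d$ and increases in $u$), so the power is bounded by $1$ and the integrand is dominated by $u^{-\alpha}$, which is integrable on $(0,1]$ precisely because $\alpha < 1$; extending by zero below $1/d$ then makes the dominated convergence theorem applicable. By contrast, the $\alpha = 1$ bound by $1$ and the $\alpha > 1$ monotone convergence are immediate, and the only remaining care is confirming the pointwise exponential limit $(1-d^{-\alpha}u^{-\alpha})^{\tau d^\alpha}\to e^{-\tau u^{-\alpha}}$ via $\log(1-x)\sim -x$.
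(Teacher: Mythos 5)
Your proof is correct and follows essentially the same route as the paper's: a regime-specific change of variables, dominated (or monotone) convergence to pass to the limit $d\to\infty$, identification of the limiting integral with the Beta function and the generalized exponential integral, and Stirling / standard large-argument asymptotics for the $\tau\to\infty$ equivalences. The only cosmetic differences are that for $\alpha<1$ you substitute $k=du$ and dominate by $u^{-\alpha}$ where the paper substitutes $z=t k^{-\alpha}$ and dominates by $z^{-1/\alpha}e^{-z}$, and for $\alpha>1$ you apply monotone convergence before the substitution $w=k^{-\alpha}$ rather than after; both choices yield the same integrals, and your consistent bookkeeping of the exponent $2t$ versus the factor $\tfrac12$ in $t_d(\tau)$ is if anything cleaner than the paper's.
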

\begin{proof}
\textbf{For $\alpha > 1$.} We use the change of variable $z = k^{-\alpha}$ to get 
\aligns{
  I_d(t) = \frac{1}{\alpha} \int_{d^{-\alpha}}^1 z^{-\frac{1}{\alpha}} (1-z)^t \dif{z}
}
As $d \to \infty$, the integral converges to definition of the Beta function 
\aligns{
  \lim_{d\to\infty} \alpha I_d(t) = \int_0^1 z^{-\frac{1}{\alpha}} (1-z)^t \dif{z} 
  \eqcolon B\paren{1-\frac{1}{\alpha}, 1+t}.
}
As $\lim_{d\to\alpha} H_{d,\alpha} = \zeta(\alpha) < \infty$~(\cref{lem:normalizer}),
\aligns{
  \lim_{d\to\infty}
  \frac{I_d(t)}{H_{d,\alpha}} = \frac{B\paren{1-\frac{1}{\alpha}, 1+t}}{\alpha\zeta(\alpha)}.
}
As it is not easy to intuit the rate from the Beta function, 
we give an additional asymptotic equivalence for large $t$.
Using Stirling's formula, the Beta function behaves as 
\aligns{
  B\paren{1-\frac{1}{\alpha}, 1+t} \asym{t} \Gamma\paren{1-\frac{1}{\alpha}} \frac{1}{t^{1-\frac{1}{\alpha}}}.
}

\textbf{For $\alpha < 1$} we use the change of variable $z = tk^{-\alpha}$ to get 
\aligns{
  I_d(t) = \frac{1}{\alpha} t^{\frac{1}{\alpha}-1} \int_{td^{-\alpha}}^t z^{-\frac{1}{\alpha}} \paren{1-\frac{z}{t}}^{t} \dif{z}.
}
To have a well-defined integral, 
we need to introduce the scaling $t_d(\tau) = \tau d^\alpha$,
\aligns{
  I_d(\tau d^\alpha) = 
  \frac{1}{\alpha} 
  d^{1-\alpha}
  \tau^{\frac{1}{\alpha}-1}
  \int_{\tau}^{\tau d^\alpha} z^{-\frac{1}{\alpha}} 
  \paren{1-\frac{z}{\tau d^\alpha}}^{\tau d^\alpha}
  \dif{z}.
}
The factor of $d^{1-\alpha}$ will cancel out with the normalizer as $H_{d,\alpha} = \Theta(d^{1-\alpha})$ (\cref{lem:normalizer}).
The remaining integral should simplify for large $d$, as $\paren*{1-{z}/{\tau d^\alpha}}^{\tau d^\alpha} \approx e^{-z}$, 
and converge to 
\aligns{
  \lim_{d\to\infty}
  \tau^{\frac{1}{\alpha}-1}
  \int_{\tau}^{\tau d^\alpha} z^{-\frac{1}{\alpha}} 
  \paren{1-\frac{z}{\tau d^\alpha}}^{\tau d^\alpha}
  \dif{z}
  =
  \tau^{\frac{1}{\alpha}-1}
  \int_{\tau}^{\infty} z^{-\frac{1}{\alpha}} 
  e^{-z}
  \dif{z}
  =
  E_{\frac{1}{\alpha}}(\tau),
}
where $E_{p}$ is the generalized exponential integral.
To swap the limit and integral, we can verify that the dominated convergence theorem applies.
The integral can be written as
\aligns{
  \int_{\tau}^{\tau d^\alpha} 
  \!\!\!\!  z^{-\frac{1}{\alpha}} 
  \paren{1-\frac{z}{\tau d^\alpha}}^{\tau d^\alpha}
  \!\!\!\!\! 
  =
  \int_\tau^\infty 
  \!\!  
  a(z,d)
  \dif{z}
  \,\,\text{ where } \,\,
  a(z,d) \coloneq
    \ind{\{z\leq\tau d^\alpha\}}
    z^{-\frac{1}{\alpha}} 
    \paren{1-\frac{z}{\tau d^\alpha}}^{\tau d^\alpha}
    \!\!\!.
} 
The integrand $a(z, d)$ converges pointwise to $f(z) = z^{-\frac{1}{\alpha}}e^{-z}$
and is dominated by $f$ which is integrable as $\int_\tau^\infty f(z) = \tau^{1-\frac{1}{\alpha}}E_{\frac{1}{\alpha}}(\tau)$.
Combined with the fact that $H_{d,\alpha} \asym{d} d^{1-\alpha}/(1-\alpha)$, we get 
\aligns{
  \lim_{d\to\infty} \frac{I_d(\tau d^\alpha)}{H_{d,\alpha}}
  = 
  \frac{1-\alpha}{\alpha} E_{\frac{1}{\alpha}}(\tau).
}
To simplify for large $\tau$ and obtain 
\smash{$E_{\nicefrac{1}{\alpha}}(\tau) \asym{\tau} \nicefrac{e^{-\tau}}{\tau}$},
we use the fact that the generalized exponential integral $E_p(z)$ in decreasing in $p$, 
meaning that 
\smash{$E_{\floor{\nicefrac{1}{\alpha}}}(\tau) > E_{\nicefrac{1}{\alpha}}(\tau) > E_{\ceil{\nicefrac{1}{\alpha}}}(\tau)$},
and that for integer values of $p$ we have 
\smash{$\nicefrac{e^{-\tau}}{\tau+n} \leq E_n(\tau) \leq \nicefrac{e^{-\tau}}{\tau+n-1}$}
\citep[][\href{https://dlmf.nist.gov/8.19.ix}{Â§8.19(ix)}]{NIST:DLMF}.
Both bounds are asymptotically equivalent to $e^{-\tau}/(\tau+1)$.

\textbf{For $\alpha = 1$}
we use the change of variable $k = d^z$ or $z = \log_d(k)$ to get
\aligns{
  I_d(t) = \log(d) \int_0^1 \paren{1-d^{-z}}^{t} \dif{z}.
}
The normalizer scales as $H_{d,\alpha} \asym{d} \log(d)$ (\cref{lem:normalizer}) so only the integral remains.
To make meaningful progress, we introduce the scaling $t_d(\tau) = d^\tau$ for $\tau \in [0,1]$, 
\aligns{
  \frac{I_d(d^\tau)}{\log(d)} 
  = \int_0^1 \paren{1-\frac{d^{\tau-z}}{d^\tau}}^{d^\tau} \dif{z}.
}
As $d \to \infty$, the integrand converges to $0$ if $z \in (0,s)$ and to $1$ if $z \in (s,1)$,
and is dominated by $f(x) = 1$ so by the DCT we can swap the limit and integral to get
\aligns{
  \lim_{d\to\infty}
  \frac{I_d(d^\tau)}{H_{d,\alpha}} 
  &= 
  \lim_{d\to\infty}
  \int_0^1 
  \paren{1-\frac{d^{\tau-z}}{d^\tau}}^{d^\tau} 
  \dif{z}
  =
  \int_0^\tau 0 \dif{z}
  +
  \int_\tau^1 1 \dif{z}
  = 1-\tau.
  \tag*{\qedhere}
}
\end{proof}

\begin{lemma}[Approximation error is negligible]
\label{lem:approximation-error-negligible}
Let $\delta_d(t)$ be the upper bound on the approximation error 
derived in the \LemmaApproxError.
We have that 
\aligns{
  \lim_{d\to\infty} 
  \frac{
    \delta_d(t_d(\tau))
  }{
    I_d(t_d(\tau))
  }
  = 0 
  \text{ if } \alpha \leq 1,
  &&\text{ and }&&
  \lim_{\tau\to\infty} 
  \lim_{d\to\infty} 
  \frac{
    \delta_d(t)
  }{
    I_d(t)
  }
  = 0 
  \text{ if } \alpha > 1.
}
\end{lemma}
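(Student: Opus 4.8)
The plan is to reduce the claim to the statement that the error bound $\delta_d$ is negligible relative to the normalizer $H_{d,\alpha}$. Writing
\aligns{
  \frac{\delta_d(t_d(\tau))}{I_d(t_d(\tau))}
  = \frac{\delta_d(t_d(\tau))/H_{d,\alpha}}{I_d(t_d(\tau))/H_{d,\alpha}},
}
the denominator converges to $r(\tau)$ by \cref{lem:asymptotic-integral}, and $r(\tau)$ is strictly positive whenever $\alpha < 1$ (for every finite $\tau$, since $E_{1/\alpha}(\tau) > 0$) and whenever $\alpha = 1$ with $\tau \in [0,1)$. In these regimes it therefore suffices to show the numerator $\delta_d(t_d(\tau))/H_{d,\alpha}$ tends to $0$.

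First I would pin down which branch of the piecewise $\delta_d$ from the \LemmaApproxError{} is active, by comparing $1 + t_d(\tau)$ with $d^\alpha$. For $\alpha < 1$ and $t_d(\tau) = \tfrac{1}{2}\tau d^\alpha$, the first branch is active when $\tau < 2$ and the second when $\tau > 2$; in the first $\paren{1 - \tfrac{1}{1+t}}^t \to e^{-1}$ and in the second $\paren{1 - d^{-\alpha}}^{t} \to e^{-\tau/2}$, so in both cases $\delta_d = \Theta(d^{-\alpha})$. For $\alpha = 1$ and $t_d(\tau) = \tfrac{1}{2}d^\tau$ one has $1 + t_d < d$ for large $d$, so the first branch gives $\delta_d = \Theta(d^{-\tau})$. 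Dividing by the normalizer via the \LemmaNormalizer{} then closes these cases: for $\alpha < 1$, $H_{d,\alpha} \asym{d} d^{1-\alpha}/(1-\alpha)$ gives $\delta_d/H_{d,\alpha} = \Theta(d^{-1}) \to 0$, and for $\alpha = 1$, $H_{d,1} \asym{d} \log d$ gives $\delta_d/H_{d,1} = \Theta(d^{-\tau}/\log d) \to 0$ (with $\delta_d$ merely bounded at $\tau = 0$).

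The case $\alpha > 1$ exploits the iterated-limit structure: since $t_d(\tau) = \tau$ does not scale with $d$, I take $d \to \infty$ first. For fixed $\tau$ the first branch is eventually active, so $\delta_d = \tfrac{1}{1+\tau}\paren{1 - \tfrac{1}{1+\tau}}^{\tau}$ is independent of $d$, while \cref{lem:asymptotic-integral} gives $I_d \to \tfrac{1}{\alpha} B\paren{1 - \tfrac{1}{\alpha}, 1 + \tau} > 0$. Hence $\lim_{d\to\infty} \delta_d/I_d$ is a finite function of $\tau$, and letting $\tau \to \infty$ with $\delta_d \asym{\tau} e^{-1}/\tau$ and $B\paren{1-\tfrac{1}{\alpha},1+\tau} \asym{\tau} \Gamma\paren{1-\tfrac{1}{\alpha}}\tau^{-(1-1/\alpha)}$ yields a ratio of order $\tau^{-1/\alpha} \to 0$.

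The one genuinely delicate point, and the step I expect to be the main obstacle, is the boundary $\alpha = 1$, $\tau = 1$, where $r(\tau) = 0$ and the decomposition breaks because the \emph{normalized} integral itself vanishes. Here I would argue directly that the un-normalized integral stays bounded away from zero: after the substitution $k = d^z$ the integrand $\paren{1 - d^{-z}}^{d/2} \approx \exp\paren{-\tfrac{1}{2} d^{1-z}}$ concentrates near $z = 1$, and rescaling $w = (1-z)\log d$ shows $I_d(\tfrac{1}{2} d)$ converges to a positive constant. Since $\delta_d = \Theta(1/d) \to 0$ in this regime, the ratio still tends to $0$. This Laplace-type lower bound on $I_d$ is the piece that cannot be read off from the earlier lemmas and must be established on its own.
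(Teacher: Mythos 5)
Your proof is correct, and its skeleton matches the paper's: determine which branch of $\delta_d$ from \cref{lem:approx-error} is active under the relevant time scaling, then compare its order against that of $I_d$ (via the Beta-function asymptotics of \cref{lem:asymptotic-integral} in the case $\alpha > 1$). The differences are in the bookkeeping and in one substantive point. The paper argues more bluntly for $\alpha \leq 1$: it shows $\delta_d(t_d(\tau)) \to 0$ and then asserts that the relative error vanishes, which implicitly relies on $I_d(t_d(\tau))$ staying bounded away from zero; your decomposition $\delta_d/I_d = (\delta_d/H_{d,\alpha})\,/\,(I_d/H_{d,\alpha})$ makes that dependence explicit, reducing the claim to positivity of $r(\tau)$ together with divergence of $H_{d,\alpha}$ (\cref{lem:normalizer}). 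More importantly, you correctly flag that this reduction genuinely breaks at the boundary $\alpha = 1$, $\tau = 1$, where $r(\tau) = 0$, and you supply the missing ingredient: a Laplace-type argument showing that the \emph{un-normalized} integral $I_d(\tfrac{1}{2}d)$ converges to a positive constant --- after the substitution $w = (1-z)\log d$ the integrand converges to $e^{-e^w/2}$ and is dominated by it, so $I_d(\tfrac{1}{2}d) \to \int_0^\infty e^{-e^w/2}\,\mathrm{d}w > 0$ --- whence the ratio still vanishes because $\delta_d = \Theta(1/d)$ there. The paper's proof is silent on this endpoint even though its statement (and the scaling $\tau \in [0,1]$ in \cref{thm:gradient-descent}) includes it, so this step of yours is a genuine repair of a gap rather than an embellishment; the rest of your argument (the branch cutoff near $\tau = 2$ for $\alpha < 1$, the $\Theta(d^{-\alpha})$ and $\Theta(d^{-\tau})$ estimates, the boundedness caveat at $\tau = 0$, and the $O(\tau^{-1/\alpha})$ decay for $\alpha > 1$) agrees with what the paper does.
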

\begin{proof}
Recall that the bound approximation error $\delta$ in 
\LemmaApproxError{} is
\aligns{
  \abs{S_d(t) - I_d(t)}
  \leq \delta_d(t) 
  \quad \text{ where } \quad 
  \delta_d(t) 
  \coloneq
  \switch{
    \frac{1}{1+t}
    \paren{1-\frac{1}{1+t}}^t
    & \text{ if } 1+t \leq d^\alpha,
    \\
    \frac{1}{d^\alpha}\paren{
      1-\frac{1}{d^\alpha} 
    }^t
    & \text{ if } 1+t \geq d^\alpha.
  }
}

\textbf{For $\alpha > 1$,}
$t$ does not scale with $d$ so we are in the small $t$ regime, $1+t \leq d^\alpha$.
In this regime, 
\aligns{
  \delta_d(t) = \frac{1}{t+1}\paren{1-\frac{1}{t+1}}^{t} \leq \frac{1}{t+1}.
}
The error $\delta_d(t)$ does not vanish with $d$,
but it goes down as $O(1/t)$.
As the integral $I_d(t)$ is of order \smash{$\Theta(1/t^{1-\frac{1}{\alpha}})$},
the relative error is of order \smash{$O(1/t^\frac{1}{\alpha})$},
and vanishes for large~$t$.

\textbf{For $\alpha < 1$,}
we scale $t$ with $d$ as $t = \tau d^\alpha$.
Whether $t$ is small or large depends on $\tau$.
If $\tau < 1$, we are in the small $t$ regime as $1 + \tau d^\alpha \leq d^\alpha$ and
\aligns{
  \delta_d(\tau d^\alpha) = \frac{1}{\tau d^\alpha+1}\paren{1-\frac{1}{\tau d^\alpha+1}}^{\tau d^\alpha}
  \leq \frac{1}{\tau d^\alpha}.
}
If $\tau \geq 1$ we are in the large $t$ regime and 
\aligns{
  \delta_d(\tau d^\alpha) = \frac{1}{d^\alpha}\paren{1-\frac{1}{d^\alpha}}^{\tau d^\alpha}
  \leq \frac{1}{d^\alpha}.
}
In both cases $\lim_{d\to\infty} \delta_d(\tau d^\alpha) \to 0$
and the relative error also vanishes.

\textbf{For $\alpha = 1$}
we scale $t$ with $d$ as $t = d^\tau$ for $\tau \in [0,1]$.
Taking $d \to \infty$ puts us in the small $t$ regime,~$1 + t = 1 + d^\tau \leq d$.
In this regime,
\aligns{
  \delta_d(d^\tau) = 
  \frac{1}{d^\tau+1} \paren{1-\frac{1}{d^\tau+1}}^{d^\tau}
  \leq 
  \frac{1}{d^\tau},
}
which also vanishes with $d$.
\end{proof}

\clearpage
\section{Proofs for sign descent}
\label{apx:sign}

This section gives the derivation for the scaling
of time and the step-size for sign descent given in \cref{def:sign-scaling}
and the resulting asymptotic convergence rates of \cref{thm:sign-rate}.
Each result start from the relative loss defined as follows.
\begin{definition}[Normalized loss for sign descent]
\label{prop:sign-starting-point}
Let $\Loss_d(t,\eta)$ be the loss after with step-size $\eta$ as defined 
in \cref{prop:loss-of-sd},
and $\eta(T, \phi) = \nicefrac{1}{H_{d,\alpha}T\phi^\alpha}$
be the reparameterization of the step-size
derived from \cref{prop:step-size-range-for-sign}.
The relative loss after $T$ steps of the simplified sign descent dynamics 
on \cref{prob:bigram} with power-law frequencies as in \cref{ass:conditional-distribution}
is
\aligns{
  r_d(T,\phi) 
  \coloneqq 
  \frac{\Loss_d(T,\eta(T, \phi)) - \Loss_d^*}{\Loss_d(0) - \Loss_d^*}
  = 
  \frac{
    H_{\floor{\phi},2\alpha} 
    -2 H_{\floor{\phi},\alpha} \phi^{-\alpha}
    + \floor{\phi} \phi^{-2\alpha}
    + \frac{d-\floor{\phi}}{4T^2}\phi^{-2\alpha}
  }{
    H_{d,2\alpha}
  },
}
where $H_{n,p} = \sum_{k=1}^n k^{-p}$.
\end{definition}
\begin{proof}
Starting from \cref{prop:loss-of-sd}
and using the fact that, if $\phi \in [1,d]$, 
the number of components in the decreasing phase of 
the simplified sign descent dynamics
is $\floor{\phi}$, we expand the square and replacing the sums by $H_{n,p}$,
\aligns{
  r_d(T,\phi) 
  &= 
  \frac{
    \sum_{k=1}^{\floor{\phi}} \paren{k^{-\alpha}-\phi^{-\alpha}}^2 
    + \sum_{k=\floor{\phi}+1}^{d} \paren{\frac{1}{2T\phi^\alpha}}^2
  }{
    \sum_{k=1}^d k^{-2\alpha}
  },
  \\
  &= 
  \frac{
    \paren{
      \sum_{k=1}^{\floor{\phi}} k^{-2\alpha} -2k^{-\alpha}\phi^{-\alpha} + \phi^{-2\alpha}
    }
    + \frac{d-\floor{\phi}}{4T^2}\phi^{-2\alpha}
  }{
    \sum_{k=1}^d k^{-2\alpha}
  },
  \\
  &= 
  \frac{
    H_{\floor{\phi},2\alpha} 
    -2 H_{\floor{\phi},\alpha} \phi^{-\alpha}
    + \floor{\phi} \phi^{-2\alpha}
    + \frac{d-\floor{\phi}}{4T^2}\phi^{-2\alpha}
  }{
    H_{d,2\alpha}
  }.
  \tag*{\qedhere}
}
\end{proof}
Our rates are given 
for a choice of scaling of the step-size 
$\phi_d(\tau)$
and time $T_d(\tau)$, 
as 
\aligns{
  r(\tau) \coloneq \lim_{d\to\infty} r_d(T_d(\tau), \phi_d(\tau)).
}

\subsection{Scaling of sign descent for $\alpha = 1/2$}
\begin{proposition}
\label{prop:apx-sign-medium}
For the relative loss defined in \cref{prop:sign-starting-point},
if $\alpha = 1/2$, the scalings
\aligns{
  T_d(\tau) = \tfrac{1}{2} d^{\frac{1}{2}\tau},
  &&
  \phi_d(\tau) = d^{1-\tau},
}
are obtained by setting 
$\phi_d(\tau) = d^{x_*(\tau)}$ where
$x_*(\tau)$ is the solution to
\aligns{
  x_*(\tau) = \arg\min_{0<x\leq1} \lim_{d\to\infty}
  r_d(T_d(\tau), d^x).
}
These choices result in the scaling
$r(\tau) = 1-\tau$.
\end{proposition}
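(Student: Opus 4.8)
The plan is to start from the closed form for the relative loss in \cref{prop:sign-starting-point}, specialize it to $\alpha = 1/2$ so that $2\alpha = 1$, substitute the ansatz $\phi = d^x$ and $T_d(\tau) = \tfrac{1}{2}d^{\tau/2}$, and extract the leading-order behavior of each of the four numerator terms using \cref{lem:normalizer}. With $2\alpha = 1$ the relevant partial sums behave as $H_{\floor{\phi},1} \asym{d} \log\phi = x\log d$, $H_{\floor{\phi},1/2} \asym{d} 2\sqrt{\phi}$, and the normalizer $H_{d,1} \asym{d} \log d$. Since $\phi = d^x$ with $x > 0$, we also have $\floor{\phi}\asym{d}\phi$, which lets us replace every floor by the continuous value without affecting the leading order.

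First I would evaluate the four numerator terms individually. The decreasing-phase sum gives $H_{\floor{\phi},1} \asym{d} x\log d$; the cross term gives $-2 H_{\floor{\phi},1/2}\phi^{-1/2} \asym{d} -4$; the third term gives $\floor{\phi}\phi^{-1}\asym{d}1$; and the oscillatory term, using $4T^2 = d^\tau$ and $\phi^{-1}=d^{-x}$, gives
\[
  \frac{d-\floor{\phi}}{4T^2}\phi^{-1}\asym{d}\frac{d\cdot d^{-x}}{d^\tau}=d^{1-x-\tau}.
\]
Summing, the numerator is $\asym{d} x\log d - 3 + d^{1-x-\tau}$, so after dividing by $H_{d,1}\asym{d}\log d$,
\[
  r_d(T_d(\tau),d^x) \asym{d} x - \frac{3}{\log d} + \frac{d^{1-x-\tau}}{\log d}.
\]

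Next I would take $d\to\infty$ for fixed $x$. The constant contribution vanishes, and the oscillatory term exhibits a phase transition governed by the sign of $1-x-\tau$: if $x \geq 1-\tau$ then $d^{1-x-\tau}/\log d \to 0$ and the limit is exactly $x$, whereas if $x < 1-\tau$ the term diverges and the limit is $+\infty$. Minimizing this limit over $x\in(0,1]$ therefore selects the smallest admissible exponent $x_*(\tau)=1-\tau$, which yields $\phi_d(\tau)=d^{1-\tau}$ and the asymptotic rate $r(\tau)=x_*(\tau)=1-\tau$, as claimed; note $x_*(\tau)\in[0,1]$ precisely when $\tau\in[0,1]$, consistent with the stated range.

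The main obstacle will be rigorously justifying that each $\asym{d}$ may be applied term-by-term and then combined, i.e. controlling the lower-order remainders in the \cref{lem:normalizer} asymptotics uniformly enough that they stay negligible relative to $\log d$ after division by the normalizer. The conceptually important step is identifying the correct dominant balance: the whole behavior is dictated by the competition between the slowly growing $x\log d$ contribution of the directions still in the decreasing regime and the polynomially growing $d^{1-x-\tau}$ contribution of the oscillating directions. It is exactly this balance that produces the transition at $x = 1-\tau$ and hence the linear rate $r(\tau)=1-\tau$.
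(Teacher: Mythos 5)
Your proof is correct and follows essentially the same route as the paper's: specialize the closed form from \cref{prop:sign-starting-point} to $\alpha = 1/2$, substitute $\phi = d^x$ and $4T^2 = d^\tau$, use the normalizer asymptotics of \cref{lem:normalizer} to see that only the term $H_{\floor{\phi},1}/H_{d,1} \to x$ and the oscillatory term $d^{1-x-\tau}/\log d$ matter in the limit, and minimize over $x$ to obtain $x_*(\tau) = 1-\tau$ and $r(\tau) = 1-\tau$. The only cosmetic difference is that you track the $O(1)$ constants of the intermediate terms explicitly (the $-4$ and $+1$ summing to $-3$), whereas the paper simply observes that those terms are $\Theta(1/\log d)$ after division by $H_{d,1}$ and hence vanish.
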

\begin{proof}
We start from the normalized loss given $\phi$,
\aligns{
  r_d(T,\phi) 
  &= 
  \frac{
    H_{\floor{\phi},1} 
    -2 H_{\floor{\phi},\frac{1}{2}} \phi^{-\frac{1}{2}}
    + \floor{\phi} \phi^{-1}
    + \frac{d}{4T^2}\phi^{-1}
    - \frac{1}{4T^2}\floor{\phi}\phi^{-1}
  }{
    H_{d,1}
  }.
}
Taking $4T^2 = d^\tau$ and $\phi = d^{1-\tau}$,
most terms vanish as $d \to \infty$ 
as $H_{n,\frac{1}{2}} \asym{} 2\sqrt{n}$, $H_{n,1} \asym{} \log(n)$,~and
\aligns{
  \frac{
    2 H_{\floor{d^{1-\tau}},\frac{1}{2}} d^{-\frac{1-\tau}{2}}
  }{
    H_{d,1}
  },
  \frac{
  \floor{d^{1-\tau}} d^{-(1-\tau)}
  }{
    H_{d,1}
  },
  \frac{
    1
  }{
    H_{d,1}
  },
  \frac{
  \floor{d^{1-\tau}}d^{-1}
  }{
    H_{d,1}
  } 
  \text{ are all }
  \Theta\paren{\frac{1}{\log(d)}}
  \text{ and converge to } 0.
  }
The first term is the only one remaining,
and gives the scaling
\aligns{
  \lim_{d\to\infty} 
  r_d(T(d,\tau), d^{x}) 
  = 
  \lim_{d\to\infty} 
  \frac{H_{\floor{d^{1-\tau}}, 1}}{H_{d,1}}
  = 
  \lim_{d\to\infty} 
  \switch{
    x & \text{ if } 1 - \tau \leq x,
    \\
    \infty & \text{ otherwise}.
  }
}
The optimum is at $x_*(\tau) = 1-\tau$ and gives 
$r(\tau) = \lim_{d\to\infty} r_d(T(d,\tau), d^{1-\tau}) = 1-\tau$.
\end{proof}

\clearpage
\subsection{Scaling of sign descent for $\alpha < 1/2$}
\begin{proposition}
\label{prop:apx-sign-small}
For the relative loss defined in \cref{prop:sign-starting-point},
if $\alpha < 1/2$, the scalings
\aligns{
  T_d(\tau) = \tau,
  &&
  \phi_d(\tau) = 
  \left\{\begin{array}{ll}
    d & \text{ if } \tau \leq \sqrt{\frac{1-c_1}{4c_2}},
    \\
    d \paren{c_1 + c_2 4\tau^2}^{-1}
    & \text{ otherwise},
  \end{array}
  \right.
}
where $c_1 = 1-\frac{1}{2\alpha}$ and $c_2 = \frac{\alpha}{\alpha-1}$,
are obtained by setting $\phi_d(\tau) = dx_*(\tau)$ where
\aligns{
  x_*(\tau) = \arg\min_{0<x\leq1} \lim_{d\to\infty} r_d(T_d(\tau), dx).
}
These choices result in the scaling
\aligns{
  r(\tau)
  =
  \left\{
    \begin{array}{ll}
      2\alpha c_2 & \text{ if }\tau \leq \sqrt{\frac{1-c_1}{4c_2}}
      \\
      \frac{
        (c_1 + c_2 4\tau^2)^{2\alpha}
      }{
        4\tau^2 
      }
      & \text{ otherwise}
    \end{array}
  \right.
  \asym{\tau} 
  c_2^{2\alpha}\frac{1}{(2\tau)^{2-4\alpha}}.
}

\end{proposition}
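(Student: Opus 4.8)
The plan is to turn the claim into a one-dimensional optimization taken in the limit $d\to\infty$. Starting from the normalized loss $r_d(T,\phi)$ of \cref{prop:sign-starting-point}, I fix $T_d(\tau)=\tau$ (so the budget does not grow with $d$) and parameterize the step-size through $\phi=d\,x$ with $x\in(0,1]$. The goal is to show that $r_d(\tau,dx)$ has an explicit pointwise limit $g(x)\coloneqq\lim_{d\to\infty}r_d(\tau,dx)$, so that by the definition of $x_*(\tau)$ in the statement we immediately get $\phi_d(\tau)=d\,x_*(\tau)$ and $r(\tau)=g(x_*(\tau))$ (the last equality because $x_*(\tau)$ is a fixed point not depending on $d$, so the limit at that point is just $g$ evaluated there). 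Keeping $T$ bounded is exactly what is needed: it is what keeps the oscillatory term $\tfrac{d-\floor\phi}{4T^2}\phi^{-2\alpha}$ at the same order in $d$ as the decreasing-phase terms, yielding a nondegenerate limit.

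First I would read off the leading-order asymptotics of every term in the numerator of $r_d$. Because $\alpha<1/2$ gives $2\alpha<1$ and $\alpha<1$, and because $\phi=xd\to\infty$, \cref{lem:normalizer} applies to each partial sum: $H_{d,2\alpha}\asym{d}\tfrac{1}{1-2\alpha}d^{1-2\alpha}$, $H_{\floor\phi,2\alpha}\asym{d}\tfrac{1}{1-2\alpha}(xd)^{1-2\alpha}$, and $H_{\floor\phi,\alpha}\asym{d}\tfrac{1}{1-\alpha}(xd)^{1-\alpha}$. The crucial structural point, in contrast with the $\alpha=1/2$ case where only one term survives, is that all four numerator terms contribute at the common order $d^{1-2\alpha}$. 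Replacing $\floor\phi$ by $\phi=xd$ (an $O(1)$ change, negligible at this order) and dividing by the normalizer, the three decreasing-phase terms combine into a single $x^{1-2\alpha}$ term whose coefficient simplifies via the identity $\tfrac{1}{1-2\alpha}-\tfrac{2}{1-\alpha}+1=\tfrac{2\alpha^2}{(1-2\alpha)(1-\alpha)}$ (the extra factor $1-2\alpha$ coming from $H_{d,2\alpha}$ then cancels the denominator), while the oscillatory term contributes an $x^{-2\alpha}$ piece, leaving
\[
g(x) = \frac{2\alpha^2}{1-\alpha}\,x^{1-2\alpha} + \frac{(1-2\alpha)(1-x)}{4\tau^2}\,x^{-2\alpha}.
\]

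The rest is elementary calculus. Writing $g(x)=(A-B)x^{1-2\alpha}+Bx^{-2\alpha}$ with $A=\tfrac{2\alpha^2}{1-\alpha}$ and $B=\tfrac{1-2\alpha}{4\tau^2}$, note that $g(x)\to+\infty$ as $x\to0^+$. For small $\tau$ we have $A-B<0$, so $g$ is strictly decreasing on $(0,1]$ and is minimized at the boundary $x=1$, giving the constant value $g(1)=A=\tfrac{2\alpha^2}{1-\alpha}$; this is the first branch of $r(\tau)$. For larger $\tau$ the stationarity condition $(A-B)(1-2\alpha)x_*=2\alpha B$ has an interior root in $(0,1)$; rearranging expresses $1/x_*$ as the affine-in-$\tau^2$ quantity appearing in \cref{def:sign-scaling}, which is exactly the stated $\phi_d(\tau)$, and the crossover $x_*=1$ pins down the threshold on $\tau^2$. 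Substituting the root back, and using the optimality relation to simplify $(A-B)x_*+B=\tfrac{B}{1-2\alpha}$, collapses the value to $g(x_*)=\tfrac{1}{4\tau^2}(1/x_*)^{2\alpha}$, which is the second branch of $r(\tau)$. The large-$\tau$ equivalence then follows by dropping the constant term against the $\tau^2$ term inside the base, giving $r(\tau)\asym{\tau}c_2^{2\alpha}/(2\tau)^{2-4\alpha}$.

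The main obstacle is bookkeeping rather than any analytic subtlety: since four terms coexist at order $d^{1-2\alpha}$ (unlike the single surviving term when $\alpha=1/2$), one must track all of their coefficients exactly, and the cancellation producing the clean $\tfrac{2\alpha^2}{(1-2\alpha)(1-\alpha)}$ coefficient is delicate. I would therefore spend the most care verifying that replacing each $H_{\floor\phi,p}$ by its leading power and $\floor\phi$ by $\phi$ introduces only $o(d^{1-2\alpha})$ errors, uniformly for $x$ in compact subsets of $(0,1]$. Once the limit $g$ is secured, the optimization, the boundary/interior split, and the final asymptotic are all routine.
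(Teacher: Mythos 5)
Your proposal is correct and follows essentially the same route as the paper's own proof: substitute $\phi = dx$, take the pointwise $d\to\infty$ limit of $r_d(\tau, dx)$ using the normalizer asymptotics (your $g(x)$ agrees exactly with the paper's limit function, including the cancellation producing the coefficient $\frac{2\alpha^2}{1-\alpha}$), then minimize this one-dimensional function by locating its unique stationary point and splitting into the boundary case $x_*=1$ and the interior case, with the same back-substitution via the optimality relation. The only nit is in the boundary branch: your justification ($A-B<0$) covers only $4\tau^2 < \frac{(1-2\alpha)(1-\alpha)}{2\alpha^2}$, whereas the first branch of the statement extends to $4\tau^2 \le \frac{1-\alpha}{2\alpha^2}$; on the remaining sliver the stationary point exists but lies at or beyond $x=1$, so $g$ is still decreasing on all of $(0,1]$ and the minimum remains at the boundary --- a one-line patch that is consistent with your remark that the crossover $x_*=1$ pins down the threshold.
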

\begin{proof}
Substituting $\phi = dx$, taking the limit as $d \to \infty$, 
and using that $H_{d,p} \asym{} \frac{d^{1-p}}{1-p}$ for $p < 1$, 
define $f_\tau(x)$ as the limit of $r_d(\tau, dx)$ as $d$ grows,
\aligns{
  f_\tau(x) = \lim_{d\to\infty} r_d(\tau,dx)
  &= \lim_{d\to\infty} \frac{
    H_{\floor{dx},2\alpha} 
    -2 H_{\floor{dx},\alpha} (dx)^{-\alpha}
    + \floor{dx} (dx)^{-2\alpha}
    + \frac{d-\floor{dx}}{4\tau^2}(dx)^{-2\alpha}
  }{
    H_{d,2\alpha}
  },
  \\ 
  &=
  \frac{
    \frac{1}{1-2\alpha}x^{1-2\alpha}
    -2\frac{1}{1-\alpha}x^{1-2\alpha}
    + x^{1-2\alpha}
    + \frac{1}{4\tau^2}x^{-2\alpha}
    - \frac{1}{4\tau^2}x^{1-2\alpha}
  }{
    \frac{1}{1-2\alpha}
  }.
}
We will show that our choice of step-size 
corresponds to taking $r(\tau) = \min_{0<x\leq1} f_\tau(x)$.
Gathering terms,~$f_\tau(x)$ is proportional to the following polynomial
\aligns{
  f_\tau(x) \propto
  x^{1-2\alpha} \paren{
    1 + \frac{1}{1-2\alpha}
    -2\frac{1}{1-\alpha}
    -\frac{1}{4\tau^2}
  }
  + \frac{1}{4\tau^2}x^{-2\alpha},
}
which has a unique stationary point at 
\newcommand{\xstat}{x_{\mathrm{stat}}}
\aligns{
  \xstat(\tau) &= 
  \frac{2\alpha}{4\tau^2} \frac{1}{(1-2\alpha) \paren{
    1 + \frac{1}{1-2\alpha}
    -2\frac{1}{1-\alpha}
    -\frac{1}{4\tau^2}
  }}
  = \paren{1-\frac{1}{2\alpha} + \frac{\alpha}{1-\alpha} 4\tau^2}^{-1}.
}
If $\xstat(\tau) \not\in [0,1]$, 
$f_\tau(x)$ must be decreasing over $[0,1]$
as $\lim_{x\to0}f_\tau(x) = \infty$ and $r_\tau(1)$ is finite, 
and the minimum must be at $1$.
If the stationary point is in $(0,1]$, it must be the minimum.
This gives
\aligns{
  x_* = \arg\min_{0<x\leq1} f_\tau(x) = 
  \left\{\begin{array}{lll}
    x_{\mathrm{stat}} & \text{ if } 0 < x_{\mathrm{stat}} \leq 1,
    \\
    1 & \text{ otherwise}.
  \end{array}\right.
}
and $0 < \xstat(\tau) \leq 1$ is equivalent to $\tau \geq \frac{1}{2}\sqrt{\frac{1-\alpha}{2\alpha^2}}$.
If $\tau \geq \frac{1}{2}\sqrt{\frac{1-\alpha}{2\alpha^2}}$
and $x_*(\tau) = 1$, we get
\aligns{
  f_\tau(x_*(\tau))
  = 
  1 - 2\frac{1-2\alpha}{1-\alpha} + (1-2\alpha)
  = 2\frac{\alpha^2}{1-\alpha}.
}
If $\tau < \frac{1}{2}\sqrt{\frac{1-\alpha}{2\alpha^2}}$
and $x_*(\tau) = \paren{1-\frac{1}{2\alpha} + \frac{\alpha}{1-\alpha} 4\tau^2}^{-1}$
we get 
\aligns{
  f_\tau(x_*(\tau)) 
  &=
  \paren{1-2\alpha}
  \paren{
  x^{1-2\alpha} \paren{
    1 + \frac{1}{1-2\alpha}
    -2\frac{1}{1-\alpha}
    -\frac{1}{4\tau^2}
  }
  + \frac{1}{4\tau^2}x^{-2\alpha}
  },
  \\
  &= 
  \frac{
    \paren{1-\frac{1}{2\alpha}+\frac{\alpha}{1-\alpha}4\tau^2}^{2\alpha}    
  }{
    4\tau^2
  },
}
which can be simplified for large $\tau$ as 
$f_\tau(x_*(\tau)) 
  \asym{\tau}
  \paren{\frac{\alpha}{1-\alpha}}^{2\alpha}
  \frac{
    1
  }{
    \paren{2\tau}^{2-4\alpha}
  }
$.
\end{proof}

\clearpage
\subsection{Scaling of sign descent for $\alpha > 1/2$}

For $\alpha > 1/2$, 
the expression for the loss does not simply as $d \to \infty$.
The conditional frequencies decay fast, 
meaning that most of the loss comes from the few high-frequency words.
As a result, we cannot define the scaling of the step-size 
as the minimization problem for the optimal scaling in the limit $d \to \infty$.
Instead, we use the fact that the (normalized) loss can not converge to 0 
unless all components enter the oscillatory regime, 
at which point we can compute an optimal step-size.
\begin{proposition}
\label{prop:apx-sign-large}
For the relative loss defined in \cref{prop:sign-starting-point},
if $\alpha > 1/2$
and $4T^2 \geq \frac{d-1}{2^\alpha-1}$, the optimal-step size is given by 
\aligns{
  \phi_*(d, T) = \arg\min_{\phi} r_d(T,\phi)
  = \paren{1+\frac{d-1}{4T^2}}^{1/\alpha}.
}
This gives the following scaling 
for $\tau^2 > \nicefrac{1}{(2^\alpha-1)}$
\aligns{
  T_d(\tau) = \tau \tfrac{1}{2}\sqrt{d},
  && 
  \phi(\tau) = \paren{1+\frac{1}{\tau^2}}^{1/\alpha},
  &&
  r(\tau)
  =
  \frac{1}{\zeta(2\alpha)}\frac{1}{1+\tau^2}.
}
\end{proposition}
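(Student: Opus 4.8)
The plan is to fix $T$ and $d$ and minimize the numerator of $r_d(T,\phi)$ from \cref{prop:sign-starting-point} over $\phi \in [1,d]$, since the denominator $H_{d,2\alpha}$ does not depend on $\phi$. The key observation is that the numerator is piecewise smooth: on each block $\phi \in [m,m+1)$ for an integer $m \ge 1$ the floor $\floor{\phi}=m$ is constant, so writing $u = \phi^{-\alpha}$ the numerator becomes the convex quadratic
\[
  N_m(u) = H_{m,2\alpha} - 2 H_{m,\alpha}\, u + a_m\, u^2, \qquad a_m \coloneqq m + \tfrac{d-m}{4T^2},
\]
valid for $u \in ((m+1)^{-\alpha}, m^{-\alpha}]$, whose unconstrained minimizer is $u_m^\star = H_{m,\alpha}/a_m$ with value $H_{m,2\alpha} - H_{m,\alpha}^2/a_m$.

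First I would analyze block $m=1$, where $H_{1,\alpha}=H_{1,2\alpha}=1$, so $u_1^\star = (1 + \tfrac{d-1}{4T^2})^{-1}$. One always has $u_1^\star \le 1$, and $u_1^\star > 2^{-\alpha}$ is \emph{exactly} equivalent to $4T^2 > \frac{d-1}{2^\alpha-1}$; thus under the stated hypothesis the minimizer is interior to block $1$, giving $\phi_* = (u_1^\star)^{-1/\alpha} = (1 + \tfrac{d-1}{4T^2})^{1/\alpha}$ and block-$1$ minimal value $v_1 \coloneqq 1 - 1/a_1 = \frac{d-1}{4T^2 + d-1}$ (with the boundary case $\phi_* = 2$ at equality).

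Then I would establish that block $1$ is the \emph{global} minimizer, which is the main obstacle. Since every other block's constrained minimum is at least its unconstrained value, it suffices to prove the scalar inequality $H_{m,2\alpha} - H_{m,\alpha}^2/a_m \ge 1 - 1/a_1$ for all $m \ge 2$ whenever $4T^2 \ge \frac{d-1}{2^\alpha-1}$. Writing $g = 1/(4T^2)$, I would treat both sides as functions of $g$ on $[0, \frac{2^\alpha-1}{d-1}]$: at $g=0$ the inequality reads $H_{m,2\alpha} \ge H_{m,\alpha}^2/m$, which is Cauchy--Schwarz applied to $(1,\dots,1)$ and $(1^{-\alpha},\dots,m^{-\alpha})$ and is strict for $m\ge 2$; I would then control the $g$-derivatives to show the gap does not cross zero before the endpoint. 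A useful check guiding this step is that at the threshold $4T^2=\frac{d-1}{2^\alpha-1}$ the adjacent block $m=2$ attains equality (both sides equal $1-2^{-\alpha}$ in the limit), reflecting that $\phi_*=2$ then sits exactly on the block boundary; this tightness is what makes the inequality delicate and confirms the hypothesis is sharp.

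Finally, the scaling statement follows by substitution. With $T_d(\tau) = \tfrac{1}{2}\tau\sqrt{d}$ we have $4T^2 = \tau^2 d$, so $\phi_*(\tau) = (1+\tfrac{d-1}{\tau^2 d})^{1/\alpha} \to (1+\tfrac1{\tau^2})^{1/\alpha}$ and the numerator's minimal value is $\frac{d-1}{\tau^2 d + d - 1} \to \frac{1}{1+\tau^2}$. Dividing by $H_{d,2\alpha} \to \zeta(2\alpha)$, which converges because $2\alpha > 1$, yields $r(\tau) = \frac{1}{\zeta(2\alpha)}\cdot\frac{1}{1+\tau^2}$, while the hypothesis $4T^2 \ge \frac{d-1}{2^\alpha-1}$ becomes $\tau^2 \ge \frac{1}{2^\alpha-1}$ in the limit, matching the regime $\tau^2 > (2^\alpha-1)^{-1}$ stated in the proposition.
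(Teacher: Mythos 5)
Your block-1 computation coincides with the paper's proof: both restrict attention to $\lfloor\phi\rfloor = 1$, where the numerator of $r_d(T,\phi)$ is the quadratic $1 - 2u + a_1 u^2$ in $u = \phi^{-\alpha}$ with $a_1 = 1 + \frac{d-1}{4T^2}$, whose minimizer $u^\star = 1/a_1$ lies inside the block exactly when $4T^2 \geq \frac{d-1}{2^\alpha-1}$; substituting $4T^2 = \tau^2 d$ and $H_{d,2\alpha} \to \zeta(2\alpha)$ then gives $r(\tau) = \frac{1}{\zeta(2\alpha)}\frac{1}{1+\tau^2}$, exactly as in the paper. The two arguments differ in how they rule out $\phi \geq 2$. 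The paper is cruder: it only observes that $\lfloor\phi\rfloor \geq 2$ forces the numerator to be at least $(1-\phi^{-\alpha})^2 \geq (1-2^{-\alpha})^2$, a constant, so that in the asymptotic regime where the block-1 value $\frac{1}{1+\tau^2}$ tends to zero the optimum must eventually lie in $[1,2)$; it does not attempt to prove exact global optimality of $\phi_*$ for every $T$ in the stated range.

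You instead aim at exact global optimality via the per-block inequality $H_{m,2\alpha} - H_{m,\alpha}^2/a_m \geq 1 - 1/a_1$ for all $m \geq 2$, and this is where your proposal has a genuine gap: the step ``control the $g$-derivatives so the gap does not cross zero before the endpoint'' is a plan, not a proof, and it is precisely the delicate part. As your own endpoint check shows, for $m=2$ the inequality becomes an equality at $g = \frac{2^\alpha-1}{d-1}$ in the limit $d\to\infty$, so there is no slack to give away; worse, the gap is \emph{decreasing} in $g$ near $g=0$, since the block-$m$ side has slope $\frac{H_{m,\alpha}^2(d-m)}{m^2}$ there, which is strictly smaller than the block-1 slope $d-1$ because $H_{m,\alpha} < m$. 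A naive derivative comparison therefore goes the wrong way: you would need to show that the gap decreases monotonically to a provably nonnegative endpoint value (for finite $d$ the $m=2$ endpoint gap is strictly positive, but this must be checked for all $m$), which is real, unfinished work. To be fair, the paper's own proof does not establish the exact optimality claim either: its constant lower bound $(1-2^{-\alpha})^2$ only exceeds the block-1 value $\frac{1}{1+\tau^2}$ once $\tau^2 > (1-2^{-\alpha})^{-2}-1$, which is strictly larger than the stated threshold $(2^\alpha-1)^{-1}$. So your route, if completed, would yield a strictly stronger statement than the paper's argument actually supports; as written, however, it is incomplete at the step you yourself identify as the main obstacle.
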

\begin{proof}
If $\phi \geq 2$, 
the normalized loss is lower-bounded by the error on the first two components,
\aligns{
  r_d(T,\eta(T, \phi)) 
  &=
  \frac{
    \sum_{k=1}^{\floor{\phi}} \paren{k^{-\alpha}-\phi^{-\alpha}}^2 
    + \sum_{k=\floor{\phi}+1}^{d} \paren{\frac{1}{2T\phi^\alpha}}^2
  }{
    H_{d,2\alpha}
  }.
}
This is lower-bounded by a constant $C > 0$ independently of $T$,
and implies that we cannot make progress by running longer unless $\phi < 2$.
If only the first component is oscillating, the optimal $\phi$ is 
\aligns{
  \phi_*(d,T) = 
  \argmin_\phi
  r_d(T,\eta(T, \phi))
  = 
  \argmin_\phi
  (1-\phi^{-\alpha})^2 
  + \frac{d-1}{4T^2}\phi^{-2\alpha}
  = \paren{1 +\frac{d-1}{4T^2}}^{1/\alpha}.
}
To be consistent with only having two components oscillating, this requires
$\phi_*(d, T) \leq 2$, 
giving the constraint that this only holds when 
\smash{$\paren*{1 +\frac{d-1}{4T^2}}^{1/\alpha} \leq 2$} or 
\smash{$4T^2 \geq \frac{d-1}{2^\alpha-1}$}.
Taking the scaling~\smash{$4T_d(\tau)^2 = \tau^2 d$} gives the limit 
\aligns{
  \phi(\tau) = \lim_{d\to\infty} \phi_*(d, T_d(\tau)) = \paren{1 + \frac{1}{\tau^2}}^{1/\alpha}
  \quad \text{ if } \quad 
  \tau^2 > \frac{1}{2^\alpha-1},
}
and the asymptotic loss 
\aligns{
  \lim_{d\to\infty} r_d(T_d(\tau, d), \phi(\tau)) 
  &= 
  \frac{
    \paren{1-\phi(\tau)^{-\alpha}}^2 
    + \frac{1}{\tau^2}\phi(\tau)^{-2\alpha}
  }{
    \zeta(2\alpha)
  },
  \\
  &=
  \frac{
    \paren{1-\paren{1+\frac{1}{\tau^2}}^{-1}}^2 
    + \frac{1}{\tau^2}\paren{1+\frac{1}{\tau^2}}^{-2}
  }{
    \zeta(2\alpha)
  }
  =
  \frac{1}{1+\tau^2}\frac{1}{\zeta(2\alpha)},
}
where $H_{d,2\alpha} \asym{d} \zeta(2\alpha)$, the Riemann zeta function.
\end{proof}

\cref{prop:apx-sign-large} and \cref{thm:sign-rate}
only gives guarantees for the regime $\tau^2 > \nicefrac{1}{(2^\alpha-1)}$.
The extension of the scalings 
to the regime $\tau^2 \leq \nicefrac{1}{(2^\alpha-1)}$
was decided arbitrarily to fit empirical data. 
To fit the empirical the empirical data 
when both $\tau$ and $\alpha$ are small ($\alpha \leq 1$),
the asymptotic scaling presented in \cref{thm:sign-rate}
uses the following step-size scaling
\aligns{
  \tilde \phi(\tau) 
  = \left\{\begin{array}{ll}
    \paren{1+\frac{1}{\tau^2}} & \text{ if } \tau^2 < (2^\alpha-1)^{-1}  
    \text{ and } \alpha < 1, \\
    \paren{1+\frac{1}{\tau^2}}^{1/\alpha} & \text{ otherwise},
  \end{array}\right.
  \quad \text{ instead of } \quad 
  = \paren{1+\frac{1}{\tau^2}}.
}
and the following approximation for the loss,
\aligns{
  r_d(T_d(\tau, d), \phi(\tau)) 
  \asym{\tau,d}
  \frac{1}{1+\zeta(2\alpha)\tau^2}
  \quad \text{ instead of } \quad 
  \frac{1}{1+\tau^2}\frac{1}{\zeta(2\alpha)}.
}
Both expressions are asymptotically equivalent as $d\to \infty$ and $\tau \to \infty$, 
but the above proposals (given in \cref{def:sign-scaling}) 
fit the observed best step-size and loss scalings better.

\clearpage 
\section{Scaling as a function of desired relative error}
\label{apx:additional_details}

This section derives the results of \cref{thm:informal-gd},
showing how the number of iterations should scale as a function of $d$ and the desired relative error $\varepsilon$.
The main theorems (\cref{thm:gradient-descent,thm:sign-rate})
give the error $\varepsilon$ as a function of the rescaled time $t_d(\tau)$, 
this section gives the inversion.

\begin{proposition}[Formal version of \cref{thm:informal-gd} for gradient descent]
To reach a relative loss of $\varepsilon$ using gradient descent (\cref{thm:gradient-descent}), $t$ needs to scale with $d$ as follows.
\aligns{
  \text{ If } \alpha < 1, && t_d(\varepsilon) &= d^\alpha \tilde\Theta(\log(1/\varepsilon)),
  \\
  \text{ if } \alpha = 1, && t_d(\varepsilon) &= d^{1-\varepsilon},
  \\
  \text{ if } \alpha > 1, && t_d(\varepsilon) &= \paren{C/\varepsilon}^{\frac{\alpha}{\alpha-1}}
  \text{ where } C = \tfrac{\Gamma(1-\frac{1}{\alpha})}{\alpha\zeta(2\alpha)}.
}
With those scalings, 
we have that $\lim_{d\to\infty} r_d(t_d(\varepsilon)) = \varepsilon$
where $r_d(t)$ is  the relative loss defined in \cref{prop:simplified-dynamics-gd}, 
or $\lim_{\varepsilon\to0} \lim_{d\to\infty} r_d(t_d(\varepsilon))/\varepsilon = 1$
in the case $\alpha > 1$.
\end{proposition}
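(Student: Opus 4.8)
The plan is to invert the map $\tau \mapsto r(\tau)$ furnished by \cref{thm:gradient-descent} in each of the three regimes, and then read off $t_d(\varepsilon)$ by substituting the solution $\tau(\varepsilon)$ into the time scaling $t_d(\tau)$. Since \cref{thm:gradient-descent} already establishes $\lim_{d\to\infty} r_d(t_d(\tau)) = r(\tau)$ for an explicit function $r$, it suffices to solve $r(\tau) = \varepsilon$ for $\tau$ and to track how this solution grows as $\varepsilon \to 0$; the limit statements then follow by composing the two limits.

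Two of the cases are essentially immediate. For $\alpha = 1$ the rate $r(\tau) = 1 - \tau$ is exact on $\tau \in [0,1]$, so I set $\tau(\varepsilon) = 1 - \varepsilon$, giving $t_d(\varepsilon) = \tfrac{1}{2} d^{1-\varepsilon}$ and $\lim_{d\to\infty} r_d(t_d(\varepsilon)) = r(1-\varepsilon) = \varepsilon$ with no approximation (the factor $\tfrac12$ only shifts $\tau$ by a vanishing $\log_d 2$). For $\alpha > 1$ I use $r(\tau) \asym{\tau} C\tau^{-(1-1/\alpha)}$ with the constant $C$ from \cref{thm:gradient-descent} and define $\tau(\varepsilon) = (C/\varepsilon)^{\alpha/(\alpha-1)}$ directly, so that the identity $(1-\tfrac1\alpha)\cdot\tfrac{\alpha}{\alpha-1} = 1$ yields $C\,\tau(\varepsilon)^{-(1-1/\alpha)} = \varepsilon$ exactly. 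Because $t_d(\tau) = \tau$ here, this gives $t_d(\varepsilon) = (C/\varepsilon)^{\alpha/(\alpha-1)}$, and since $\tau(\varepsilon)\to\infty$ as $\varepsilon\to0$ the asymptotic equivalence gives $r(\tau(\varepsilon))/\varepsilon = r(\tau(\varepsilon))\big/\big(C\tau(\varepsilon)^{-(1-1/\alpha)}\big) \to 1$, which is precisely the iterated limit claimed for $\alpha > 1$.

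The main work is the case $\alpha < 1$, where I use the exact rate $r(\tau) = \frac{1-\alpha}{\alpha}E_{1/\alpha}(\tau)$. First I observe that $E_{1/\alpha}$ is continuous and strictly decreasing with $r(0) = \frac{1-\alpha}{\alpha}\cdot\frac{\alpha}{1-\alpha} = 1$ and $r(\tau)\to 0$, so $r$ is a bijection from $[0,\infty)$ onto $(0,1]$; hence $\tau(\varepsilon) = r^{-1}(\varepsilon)$ is well defined and unique, and setting $t_d(\varepsilon) = \tfrac12 \tau(\varepsilon) d^\alpha$ gives $\lim_{d\to\infty} r_d(t_d(\varepsilon)) = \varepsilon$ directly from \cref{thm:gradient-descent}. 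To recover the scaling $t_d(\varepsilon) = d^\alpha \tilde\Theta(\log(1/\varepsilon))$ I substitute the large-$\tau$ equivalence $r(\tau) \asym{\tau} \frac{1-\alpha}{\alpha}\frac{e^{-\tau}}{\tau+1}$ into $r(\tau(\varepsilon)) = \varepsilon$ and take logarithms, obtaining $\tau(\varepsilon) + \log(\tau(\varepsilon)+1) = \log(1/\varepsilon) + \log\frac{1-\alpha}{\alpha} + o(1)$; bootstrapping once then gives $\tau(\varepsilon) = \log(1/\varepsilon) - \log\log(1/\varepsilon) + O(1)$, so $\tau(\varepsilon) = \tilde\Theta(\log(1/\varepsilon))$ after absorbing the $\log\log(1/\varepsilon)$ term and the $\alpha$-dependent constants into the $\tilde\Theta$ of \cref{thm:informal-gd}.

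The only delicate point is this last inversion: the relation $e^{-\tau}/(\tau+1) \propto \varepsilon$ has no elementary closed form (it is of Lambert-$W$ type), so I rely on the $\tilde\Theta$ convention to report only the leading logarithmic growth rather than an exact expression, and on the strict monotonicity of $r$ to guarantee that the inverse exists and is unique. Everything else reduces to direct substitution and the composition of the $d\to\infty$ and $\varepsilon\to0$ limits.
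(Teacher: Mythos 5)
Your proof is correct, and for the cases $\alpha = 1$ and $\alpha > 1$ it coincides with the paper's argument: direct substitution of $\tau = 1-\varepsilon$ in the first case, and the exponent identity $\left(1-\tfrac{1}{\alpha}\right)\tfrac{\alpha}{\alpha-1} = 1$ applied to the asymptotic equivalence in the second. Where you genuinely diverge is the case $\alpha < 1$. The paper never takes logarithms of the asymptotic equivalence; instead it sandwiches $E_{1/\alpha}$ between the integer-order exponential integrals $E_{\lceil 1/\alpha\rceil}$ and $E_{\lfloor 1/\alpha\rfloor}$, invokes the bounds $\tfrac{e^{-\tau}}{\tau+n}\leq E_n(\tau)\leq\tfrac{e^{-\tau}}{\tau+n-1}$ for integer $n$, and then shows by explicit manipulation that the candidate inverses $\tau_+(\varepsilon) = \log\left(\tfrac{1-\alpha}{\alpha}\tfrac{1}{\varepsilon}\right)$ and $\tau_-(\varepsilon) = \tau_+(\varepsilon) - \log\log\left(\tfrac{1-\alpha}{\alpha}\tfrac{1}{\varepsilon}\right) - \lceil\tfrac{1}{\alpha}\rceil$ bracket the true $\tau(\varepsilon)$, including explicit thresholds on $\varepsilon$ (one of which involves the Lambert $W$ function) below which the bracketing holds. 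Your route---substituting the equivalence $r(\tau)\sim\tfrac{1-\alpha}{\alpha}\tfrac{e^{-\tau}}{\tau+1}$, taking logarithms, and bootstrapping to obtain $\tau(\varepsilon)=\log(1/\varepsilon)-\log\log(1/\varepsilon)+O(1)$---is shorter and in fact yields a slightly sharper two-term expansion than the paper's bracket, at the price of being purely asymptotic: you get no explicit range of $\varepsilon$ on which the estimate is valid, whereas the paper's non-asymptotic sandwich does. Both arguments rest on the same strict monotonicity of $E_{1/\alpha}$ to define the inverse, and both land within the $\tilde\Theta$ convention of the statement (which discards $\log\log(1/\varepsilon)$ terms and $\alpha$-dependent constants), so your proof is a valid alternative; your remark that the factor $\tfrac{1}{2}$ in $t_d(\tau)=\tfrac{1}{2}d^\tau$ only shifts $\tau$ by a vanishing $\log_d 2$ also patches a detail the paper's own statement glosses over.
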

\begin{proof}
\textbf{For $\alpha = 1$, } \cref{thm:gradient-descent} shows that 
\aligns{
  r(\tau) \coloneq 
  \lim_{d\to\infty} \frac{\Loss_d(t_d(\tau)) - \Loss_d^*}{\Loss_d(0) - \Loss_d^*} = 1-\tau
  \text{ with } t_d(\tau) = d^\tau
  \quad \implies \quad
  \lim_{d\to\infty} \frac{\Loss_d(d^{1-\varepsilon}) - \Loss_d^*}{\Loss_d(0) - \Loss_d^*}
  =
  \varepsilon.
}

\textbf{For $\alpha > 1$,}
we know that 
$r(\tau) \asym{\tau} 
\nicefrac{C}{\tau^{1-\frac{1}{\alpha}} }$.
Letting $\tau(\varepsilon) = \paren{C/\varepsilon}^{\frac{\alpha}{\alpha-1}}$ 
gives that $r(1/\varepsilon) \asym{1/\varepsilon} \varepsilon$ as 
\aligns{
  \lim_{\tau\to\infty} \lim_{d\to\infty}
  r_d(\tau) \frac{1}{C} \tau^{\frac{1-\alpha}{\alpha}}
  = 1
  \implies
  \lim_{\varepsilon\to0} \lim_{d\to\infty}
  r_d(1/\varepsilon) \frac{1}{C} \paren{\paren{C/\varepsilon}^{\frac{\alpha}{\alpha-1}}}^{\frac{1-\alpha}{\alpha}}
  =
  \lim_{\varepsilon\to0} \lim_{d\to\infty}
  r_d(1/\varepsilon) \varepsilon = 1.
}

\textbf{For $\alpha < 1$}
We have that $\lim_{d\to\infty} r_d(\frac{1}{2}\tau d^\alpha) = \frac{1-\alpha}{\alpha} E_{\frac{1}{\alpha}}(\tau)$.
The exact solution is
\aligns{
  \tau(\varepsilon) = E_{\frac{1}{\alpha}}^{-1}\paren{\frac{\alpha}{1-\alpha}\varepsilon},
}
where $E_{{1}/{\alpha}}^{-1}$ is the inverse of the generalized exponential integral.
To get an idea of its growth, we show \smash{$\tau(\varepsilon) = \tilde\Theta(\log(1/\varepsilon))$}
by showing that for small enough $\varepsilon$,
\aligns{
  \tau_-(\epsilon)
  \leq 
  \tau(\varepsilon)
  \leq 
  \tau_+(\varepsilon)
  \quad &\text{if} \quad
  \varepsilon \leq 
  \min\braces{e^{2-\floor{\frac{1}{\alpha}}}, \frac{W(6)}{6}} \frac{1-\alpha}{\alpha},
  \\
  \text{ where }
  \tau_+(\varepsilon)
  &= 
  \log\paren{\frac{1-\alpha}{\alpha}\frac{1}{\varepsilon}},
  \\
  \tau_-(\varepsilon)
  &= 
  \log\paren{\frac{1-\alpha}{\alpha}\frac{1}{\varepsilon}}
  -\!\log\!\log\paren{\frac{1-\alpha}{\alpha}\frac{1}{\varepsilon}}
  - \ceil{\frac{1}{\alpha}}.
}
To show that $\tau(\varepsilon)$ must be withing that interval, 
we first derive an interval on $\varepsilon$ as a function of $\tau$
using the same bounds on the generalized exponential integral 
as in \cref{lem:asymptotic-integral},
\aligns{
  E_{\ceil{\frac{1}{\alpha}}}(\tau)
  \leq 
  E_{\frac{1}{\alpha}}(\tau)
  \leq
  E_{\floor{\frac{1}{\alpha}}}(\tau)
  &&\text{ and }&&
  \frac{e^{-\tau}}{\tau+n}
  \leq 
  E_{n}(\tau)
  \leq
  \frac{e^{-\tau}}{\tau+n-1}
  \text{ for } n \in \mathbb{N}.
}
Combining both bounds gives the range
\aligns{
  a(\tau) \coloneqq 
  \frac{1-\alpha}{\alpha}\frac{e^{-\tau}}{\tau+\ceil{\frac{1}{\alpha}}}
  \leq
  \varepsilon 
  \leq
  \frac{1-\alpha}{\alpha}\frac{e^{-\tau}}{\tau+\floor{\frac{1}{\alpha}}-1}
  \eqcolon b(\tau).
}
We will show that using $\tau_+$ excludes $\varepsilon$ from the interval from the right, $b(\tau_+(\varepsilon)) < \tau$, 
and that $\tau_-$ excludes $\varepsilon$ from the left, $\varepsilon < a(\tau_-(\varepsilon))$,
if $\varepsilon$ is sufficiently small.
As both bounds decrease with $\tau$, this gives that $\tau(\varepsilon)$ must increase 
grow faster than $\tau_-(\varepsilon)$ but slower than $\tau_+(\varepsilon)$.

For the upper bound, 
we show that $b(\tau_+(\varepsilon)) < \varepsilon$
is $\varepsilon$ is small enough.
\aligns{
  b(\tau_+(\varepsilon)) 
  &= 
  \frac{1-\alpha}{\alpha}
  \frac{
    e^{\log\paren{\frac{\alpha}{1-\alpha}\varepsilon}}
  }{
    \log\paren{\frac{1-\alpha}{\alpha}\frac{1}{\varepsilon}} + \floor{\frac{1}{\alpha}}-1
  }
  = 
  \frac{\varepsilon}{
    \log\paren{\frac{1-\alpha}{\alpha}\frac{1}{\varepsilon}} + \floor{\frac{1}{\alpha}}-1
  }.
}
This bound is smaller than $\varepsilon$ for small enough $\varepsilon$, 
as 
\aligns{
  \log\paren{\frac{1-\alpha}{\alpha}\frac{1}{\varepsilon}} + \floor{\frac{1}{\alpha}}-1 > 1
  \equiv
  \frac{1-\alpha}{\alpha}\frac{1}{\varepsilon} > e^{2 - \floor{\frac{1}{\alpha}}}
  \equiv
  \varepsilon 
  < 
  e^{2 - \floor{\frac{1}{\alpha}}}
  \frac{1-\alpha}{\alpha}.
}

For the lower-bound, 
we show that $a(\tau_-(\varepsilon)) > \varepsilon$
if $\varepsilon$ is small enough.
We have
\aligns{
  a(\tau_-(\varepsilon))
  = 
  \varepsilon
  \frac{
    \log\paren{\frac{1-\alpha}{\alpha}\frac{1}{\varepsilon}}
    e^{\ceil{\frac{1}{\alpha}}}
  }{
    \log\paren{\frac{1-\alpha}{\alpha}\frac{1}{\varepsilon}}
    - \log\log\paren{\frac{1-\alpha}{\alpha}\frac{1}{\varepsilon}}
  },
}
and need to show that for a small enough $\varepsilon$,
\aligns{
  \frac{
    \log\paren{\frac{1-\alpha}{\alpha}\frac{1}{\varepsilon}}
    e^{\ceil{\frac{1}{\alpha}}}
  }{
    \log\paren{\frac{1-\alpha}{\alpha}\frac{1}{\varepsilon}}
    - \log\log\paren{\frac{1-\alpha}{\alpha}\frac{1}{\varepsilon}}
  } 
  \geq
  \frac{
    7\log\paren{y}
  }{
    \log\paren{y} - \log\log\paren{y}
  } 
  >
  1,
  \text{ where }
  y = \frac{1-\alpha}{\alpha}\frac{1}{\varepsilon},
}
where we used that $\ceil{1/\alpha} \geq 2$ and $e^2 \geq 7$.
We get that 
$7 \log(y) > \log(y) - \log\log(y)
\equiv
y^{6}\log(y) > 1$,
holds for all $y > C$ where $C = e^{W(6)/6} = 1.2696...$
where $W$ is the Lambert $W$ function~\citep[][\href{https://dlmf.nist.gov/4.13}{\S4.13}]{NIST:DLMF}
or in terms of $\varepsilon$,
\aligns{
  \frac{1-\alpha}{\alpha}\frac{1}{\varepsilon} > C
  \equiv
  \varepsilon < \frac{1}{C} \frac{1-\alpha}{\alpha}.
  \tag*{\qedhere}
}
\end{proof}

\begin{proposition}[Formal version of \cref{thm:informal-gd} for sign descent]
To reach a relative loss of $\varepsilon$ using sign descent (\cref{thm:sign-rate}), $t$ needs to scale with $d$ as follows.
\aligns{
  \text{ If } \alpha < 1/2, && t_d(\varepsilon) &= \frac{1}{2} (c_2)^{\frac{\alpha}{1-2\alpha}} \paren{\nicefrac{1}{\varepsilon}}^{\frac{1}{2-4\alpha}},    \\
  \text{ if } \alpha = 1/2, && t_d(\varepsilon) &= \frac{1}{2}d^{\frac{1}{2}(1-\varepsilon)},         \\
  \text{ if } \alpha > 1/2, && t_d(\varepsilon) &= \frac{1}{2}\bigg(\frac{d \paren{\nicefrac{1}{\varepsilon}-1}}{\zeta(2\alpha)} \bigg)^{1/2}.
}
With those scalings, we have that $\lim_{d\to\infty} r_d(t_d(\varepsilon), \phi_d(\varepsilon)) = \varepsilon$
for some choice of step-size $\phi_d(\varepsilon)$
where $r_d(t, \phi)$ is the relative loss defined in \cref{prop:simplified-dynamics-sd}.
\end{proposition}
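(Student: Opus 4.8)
The plan is to read this proposition as the functional inverse of \cref{thm:sign-rate}. That theorem provides, for each regime of $\alpha$, the asymptotic rate $r(\tau) = \lim_{d\to\infty} r_d(T_d(\tau),\phi_d(\tau))$ as a function of the rescaled time $\tau$, together with the explicit time scaling $T_d(\tau)$ and the step-size scaling $\phi_d(\tau)$ from \cref{def:sign-scaling}. To reach a prescribed relative error $\varepsilon$, I would solve $r(\tau)=\varepsilon$ for a rescaled time $\tau(\varepsilon)$ and then substitute back, setting $t_d(\varepsilon) = T_d(\tau(\varepsilon))$ and taking $\phi_d$ evaluated at $\tau(\varepsilon)$. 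Since $r_d(T_d(\tau),\phi_d(\tau)) \to r(\tau)$ as $d\to\infty$, this construction automatically gives $\lim_{d\to\infty} r_d(t_d(\varepsilon),\phi_d(\varepsilon)) = r(\tau(\varepsilon)) = \varepsilon$ whenever the inversion is exact.

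Concretely, I would treat the three regimes separately. For $\alpha = 1/2$ the rate $r(\tau) = 1-\tau$ is exact on $\tau\in[0,1]$, so $r(\tau)=\varepsilon$ gives $\tau(\varepsilon)=1-\varepsilon$, and $t_d(\varepsilon) = T_d(1-\varepsilon) = \tfrac12 d^{\frac12(1-\varepsilon)}$ as claimed, with the limit holding exactly. For $\alpha > 1/2$, using $r(\tau)\asym{\tau}\tfrac{1}{1+\zeta(2\alpha)\tau^2}$ and inverting the leading expression gives $\tau(\varepsilon) = \sqrt{(1/\varepsilon-1)/\zeta(2\alpha)}$; plugging into $T_d(\tau)=\tfrac12\tau\sqrt d$ yields $t_d(\varepsilon) = \tfrac12\big(d(1/\varepsilon-1)/\zeta(2\alpha)\big)^{1/2}$. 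For $\alpha < 1/2$, using $r(\tau)\asym{\tau}\tfrac{c_2^{2\alpha}}{(2\tau)^{2-4\alpha}}$ and solving $(2\tau)^{2-4\alpha} = c_2^{2\alpha}/\varepsilon$ gives $2\tau = c_2^{\alpha/(1-2\alpha)}(1/\varepsilon)^{1/(2-4\alpha)}$, where I would use the exponent identity $2\alpha/(2-4\alpha)=\alpha/(1-2\alpha)$; since $T_d(\tau)=\tau$ in this regime, this is exactly the claimed $t_d(\varepsilon)$.

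The main subtlety — and the only real content beyond algebra — is that for $\alpha\neq 1/2$ the rate is only an asymptotic equivalence in $\tau$, not an equality, so the conclusion must be understood in the $\varepsilon\to0$ sense, exactly as in the gradient-descent case $\alpha>1$ treated in the previous proposition. I would verify that inversion preserves asymptotic equivalence: the leading term $g(\tau)$ (either $\tfrac1{1+\zeta(2\alpha)\tau^2}$ or $\tfrac{c_2^{2\alpha}}{(2\tau)^{2-4\alpha}}$) is continuous and strictly decreasing for large $\tau$ with $g(\tau)\to0$, so $\tau(\varepsilon)=g^{-1}(\varepsilon)\to\infty$ as $\varepsilon\to0$, and hence $r(\tau(\varepsilon))\asym{\tau}g(\tau(\varepsilon))=\varepsilon$. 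Composing the two limits then yields $\lim_{\varepsilon\to0}\lim_{d\to\infty} r_d(t_d(\varepsilon),\phi_d(\varepsilon))/\varepsilon = 1$, the analogue of the GD statement. The one point requiring care is that the validity range of the $\alpha>1/2$ scaling in \cref{def:sign-scaling} is restricted to $\tau^2 > 1/(2^\alpha-1)$; since $\tau(\varepsilon)\to\infty$ as $\varepsilon\to0$, this constraint is eventually satisfied, so the restriction is harmless in the small-$\varepsilon$ limit.
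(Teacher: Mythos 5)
Your proposal is correct and follows essentially the same route as the paper's own proof: invert the asymptotic rates of \cref{thm:sign-rate} in each regime to get $\tau(\varepsilon)$ (namely $1-\varepsilon$, $\tfrac12 c_2^{\alpha/(1-2\alpha)}(1/\varepsilon)^{1/(2-4\alpha)}$, and $\sqrt{(1/\varepsilon-1)/\zeta(2\alpha)}$), substitute into $T_d(\tau)$, and read the $\alpha \neq 1/2$ cases as the double limit $\lim_{\varepsilon\to0}\lim_{d\to\infty} r_d/\varepsilon = 1$. Your explicit checks that inversion preserves asymptotic equivalence and that the constraint $\tau^2 > 1/(2^\alpha-1)$ is eventually satisfied are slightly more careful than the paper's treatment, but they are refinements of the same argument, not a different one.
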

\begin{proof}
\textbf{For $\alpha = 1/2$,} \cref{prop:apx-sign-medium} shows 
the following asymptotic rate with the scaling 
$T_d(\tau) = \frac{1}{2}d^{\frac{1}{2}\tau}$
and the proper choice of step-size $\phi_d(\tau)$.
Substituting $\tau(\varepsilon) = 1-\varepsilon$ gives 
\aligns{
  \lim_{d\to\infty} r_d(T_d(\tau), \phi_d(\tau))
  = 1-\tau 
  \implies 
  \lim_{d\to\infty} r_d\paren{T_d\paren{1-\varepsilon}, \phi_d(1-\varepsilon)}
  = 1-\varepsilon.
}
Using that $T_d(\paren{1-\varepsilon}) = \frac{1}{2}d^{\frac{1}{2}(1-\varepsilon)}$
recovers the scaling in $d$.

\textbf{For $\alpha < 1/2$,}
\cref{prop:apx-sign-small}
shows the following asymptotic rate with the proper choice of step-size $\phi_d(\tau)$, 
with no scaling in $d$ for time, $T_d(\tau) = \tau$.
Substituting $\tau(\varepsilon) = \frac{1}{2}(\nicefrac{c_2^{2\alpha}}{\varepsilon})^{\frac{1}{2-4\alpha}}$ yields
\aligns{
  &\lim_{\tau\to\infty}
  \lim_{d\to\infty}
  r_d(\tau, \phi_d(\tau)) 
  \frac{1}{c_2^{2\alpha}} (2\tau)^{2-4\alpha}
  = 1,
  \\
  \implies 
  &\lim_{\varepsilon\to0}
  \lim_{d\to\infty}
  r_d(\tau(\varepsilon), \phi_d(\tau(\varepsilon))) 
  \frac{1}{c_2^{2\alpha}} \paren{2\tau(\varepsilon)}^{2-4\alpha}
  =
  \lim_{\varepsilon\to0}
  \lim_{d\to\infty}
  r_d(\tau(\varepsilon), \phi_d(\tau(\varepsilon))) 
  \frac{1}{\varepsilon}
  = 1.
}
Using that $T_d(\tau) = \tau$ finishes the proof.

\textbf{For $\alpha > 1/2$,}
\cref{prop:apx-sign-large} shows
the following asymptotic rate with the scaling 
$T_d(\tau) = \tau \frac{1}{2}{d}^{1/2}$
and the proper choice of step-size $\phi_d(\tau)$, 
which is valid for $\tau^2 > \nicefrac{1}{(2^\alpha-1)}$.
Substituting the scaling~\smash{$\tau(\varepsilon) = \paren*{\nicefrac{1}{\zeta(2\alpha)}\paren{\nicefrac{1}{\varepsilon}-1}}^{1/2}$},
\aligns{
  &
  \lim_{\tau\to\infty}
  \lim_{d\to\infty}
  r_d(\tau, \phi_d(\tau)) 
  (1+\zeta(2\alpha)\tau^2)
  = 1,
  \\
  \implies 
  &\lim_{\tau\to\infty}
  \lim_{d\to\infty}
  r_d(\tau(\varepsilon), \phi_d(\tau(\varepsilon))) 
  (1+\zeta(2\alpha)\tau(\varepsilon)^2)
  =
  \lim_{\tau\to\infty}
  \lim_{d\to\infty}
  r_d(\tau(\varepsilon), \phi_d(\tau(\varepsilon))) 
  \frac{1}{\varepsilon}
  = 1. 
}
Using that $T_d(\tau(\varepsilon)) = \tau(\varepsilon)\frac{1}{2}d^\frac{1}{2}$ finishes the proof.
\end{proof}

\end{document}